\theoremstyle{plain}
\newtheorem{theorem}{Theorem}[section]
\newtheorem{lemma}[theorem]{Lemma}
\newtheorem{corollary}{corollary}
\newtheorem{proposition}{proposition}
\theoremstyle{remark}
\newtheorem{definition}[theorem]{Definition}
\newtheorem{assumption}{assumption}
\newtheorem{example}{Example}
\begin{document}

\begin{frontmatter}
\title{Neighborhood Adaptive Estimators for Causal Inference under Network Interference}
\runtitle{Neighborhood Adaptive Estimators}

\begin{aug}
\author[A]{\fnms{Alexandre}~\snm{Belloni} \ead[label=e1]{ab5@duke.edu}},
\author[B]{\fnms{Fei}~\snm{Fang}\ead[label=e2]{fei.fang@yale.edu}
}
\and
\author[C]{\fnms{Alexander}~\snm{Volfovsky}\ead[label=e3]{alexander.volfovsky@duke.edu}}
\address[A]{The Fuqua School of Business,
Duke University \printead[presep={ ,\ }]{e1}}
\address[B]{Department of Biostatistics, Yale University\printead[presep={,\ }]{e2}}
\address[C]{Department of Statistical Science, Duke University\printead[presep={,\ }]{e3}}

\end{aug}

\begin{abstract}
Estimating causal effects has become an integral part of most applied fields. In this work we consider the violation of the classical no-interference assumption with units connected by a network. For tractability, we consider a known network that describes how interference may spread. Unlike previous work the radius (and intensity) of the interference experienced by a unit is unknown and can depend on different (local) sub-networks and the assigned treatments. We study estimators for the average direct treatment effect on the treated in such a setting under additive treatment effects. We establish rates of convergence and distributional results. The proposed estimators considers all possible radii for each (local) treatment assignment pattern. In contrast to previous work, we approximate the relevant network interference patterns that lead to good estimates of the interference. To handle feature engineering, a key innovation is to propose the use of synthetic treatments to decouple the dependence. 
We provide simulations, an empirical illustration and insights for the general study of interference.
\end{abstract}


\begin{keyword}
\kwd{causal inference under interference}
\kwd{machine learning}
\kwd{synthetic treatment}
\kwd{adaptivity}
\end{keyword}

\end{frontmatter}

\section{Introduction}

Estimation of causal effects has become a staple, if not a requirement, in many applied fields. This breadth comes due to the realization that scientific and societal knowledge requires rigor beyond predictions. This boon in applications has strained existing causal inference tools and led to a mass development of theory and methodology that can address modern causal questions. Much of the early work on causal inference concerned itself with the design and analysis of controlled experiments where a plethora of independence assumptions made them plausible and implementable \citep{splawa1990application,fisher1935design}. This work was then generalized by developing classes of assumptions and methodologies that allowed practitioners to emulate simple experimental designs using observational data, still under various independence assumptions \citep{rubin1978bayesian,manski1990nonparametric}. However, many modern causal settings 
have made these assumptions untenable: individuals, corporations, and countries are increasingly connected, with the deployment of individual treatments potentially affecting the outcomes of large groups of individuals, either directly or indirectly. 

For example, an individual's probability of changing housing may increase if (i) he or she receives a direct incentive to move or (ii) if his or her nearby households receive a housing voucher to move \citep{sobel2006randomized}. Similarly, the probability that an individual will be infected by a virus can decrease by several mechanisms: (i) he or she can receive the vaccine, which provides direct protection against infection, or (ii) his or her friends could be vaccinated, making them less likely to be infected, which in turn reduces the amount of exposure on their friends \citep{ross1916application,zivich2021assortativity}. In these settings, there is a potential direct treatment effect (from a unit's own treatment status) and a potential indirect treatment effect (due to the treatment status of other units); see \citet{cai2015social,hu2021average} for other examples. Importantly, the possibility of this indirect treatment effect, dubbed ``interference'' among units is a violation of the standard Stable Unit Treatment Value Assumption \citep[SUTVA,][]{rubin1978bayesian,rubin1980randomization} and Individualized Treatment Response \citep[ITR,][]{manski1990nonparametric} assumption. Understanding what estimands are plausible in these settings, what identification assumptions are needed, and what estimation strategies can be employed has been the central focus of recent work. This is particularly critical in the context of observational studies, where confounding can further complicate the identification and estimation processes. 

In order to identify and estimate the treatment effects under interference, different types of estimands and causal assumptions have been considered \citep{toulis2013estimation,aronow2017estimating,sussman2017elements,jagadeesan2020designs}. For example, \citet{hudgens2008toward} propose a partial interference model in which units are split into groups where interference exists within each group but not outside the group. This assumption offers relatively well-understood dependence structures between one's potential outcomes and the treatments of others \citep{tchetgen2012causal}. The constant treatment response (CTR) assumption of \citet{manski2013identification} defines 
a class of exposure mappings which can be seen as a representation of the whole treatment vector and any other covariates. 
The exposure mapping is a general framework to capture complex structures that can influence interference. This has recently been specialized for the setting of network interference \citep{aronow2017estimating,savje2017average,savje2021causal}.

Although it is widely recognized that interference and other ``network effects'' can depend on various types of network structures \citep{cartwright1956structural,awan2020almost}, most of the existing literature focuses on neighborhood interference problems \citep{karwa2018systematic, sussman2017elements}. In these settings, interference is restricted to those who share edges and, importantly, the strength or type of interference does not depend on whether those with direct edges have connections between them. In many applied settings, this interference characterization may not be sufficiently rich to fully capture the variability in potential outcomes as a function of the full treatment vector \citep{savje2021causal}. We consider a broad generalization of the neighborhood interference approach by studying interference that can occur within a distance $m_i$ of the unit $i$ in a graph; importantly, this interference can depend on the full subgraph of radius $m_i$, and not just on edge counts or density measures. 
When $m_i=1$ for every unit, this setting recovers the neighborhood interference assumption of many papers in the literature, e.g. \citet{sussman2017elements, forastiere2021identification}. However, we allow the value of $m_i$ to be unknown, to vary between units, to depend on the treatment assignment configurations, and to possibly be large. 

One of the main limitations in estimating causal effects in the presence of interference is the large amount of data that is needed to estimate treatment effects based on flexible exposure mappings. For example, the exposure mapping ``(is treated?, has at least one treated friend?)'' maps to four possible counterfactuals, while the naive generalization to ``(is treated?, has $k$ treated friends?)'' maps to $2 d$ possible counterfactuals where $d$ is the degree of an individual. Even this simple generalization can lead to positivity violations if care is not taken in specifying how the potential outcomes interact with exposure mappings.
With that in mind, our aim is to develop adaptive estimators for interference that balance the availability of data and the interference patterns themselves. That is, we will consider unit-specific values of $m_i$ that
control the complexity of individual potential outcomes. 

This approach is in the spirit of the Constant Treatment Response assumption of \citet{manski2013identification} which suggests that different individuals may have different global treatment patterns that correspond to the same counterfactual values. Thus, our work aims to contribute to the somewhat fundamental question of obtaining an adaptive restriction of the potential outcomes that allows for a more efficient estimation of the estimands of interest. In our case, we search for these restrictions based on available network information, treatment assignment configurations, and the induced bias and variance trade-off. The network provides a natural hierarchy to search over, while the treatment assignments (the postulated source of interference) allow us to match patterns across the network. From an estimation perspective, since we are searching across possible patterns, this approach is connected with model selection and feature engineering, where regularization tools play a key role. As discussed in the following, we construct a form of regularization based on synthetic treatments that replaces the role of sample splitting, which is harder in network settings. Moreover, our approach incorporates possible misspecifications on model selection and controls their impact to obtain adaptive estimators. 

Our setting follows the additive effect under the Neighborhood Interference Assumption (ANIA) framework in \cite{sussman2017elements}. \cite{sussman2017elements} showed that the potential outcomes satisfy additivity of main effects if and only if the potential outcomes can be parameterized through a partially linear model.  Although we have different results for the estimation of the average direct treatment effect on the treated, our main result pertains to the case where there is selection on the treatment assignment but those are independent across units given observables.

\subsection{Recent development beyond neighborhood interference}
Our proposed approach defines an explicit hierarchy of interference patterns in a network, allowing an estimator to trade off bias and variance of the estimation of counterfactual outcomes. 
We note that the general notion that interference can be hierarchical and reach deep through a social network has previously been mentioned, albeit in passing and with the requirement that interference is \textit{the same} for each individual within the network \citep{puelz2019graph,eckles2016design,jagadeesan2020designs}. 
Although only briefly mentioned, the generalization of these approaches is natural: if friends can lead to interference, then why not friends of friends? However, on closer inspection, it is not immediately clear how this would be implemented. For example, in \citet{jagadeesan2020designs} this would require specifying the smoothness of the interference function that would presumably depend on the depth of interference. This may or may not lead to a specification with a computationally tractable design. Similar issues with computation are likely when you need to calculate the statistic in tests for deeper than one neighborhood network interference. 

Recently, \citet{leung2022causal} proposed an approximate neighborhood interference (ANI) assumption that states that interference decays when the distance between treatments and the ego increases. When the distance goes to infinity, the interference decays to zero. In this approach exposure mappings are used to define interesting estimands that summarize treatment and spillover effects. An explicit connection between this assumption and a type of generative network dependence called $\psi$-dependence leads to the result that the Horvitz$ $-Thompson estimator for the direct effect is asymptotically normal \citep[building upon tools from,][]{kojevnikov2021limit}.  This approximation is very much in the spirit of our work.

Another important development in the literature concerns misspecified exposure mappings. While knowing the correct exposure mapping provides a natural recipe for asymptotically high-quality estimators of many estimands, misspecification of the exposure mapping leads to inappropriate bias in estimation. \citet{savje2021causal} characterizes this bias and provides conditions under which consistency of estimators can be achieved (essentially a rate of decay of the misspecification of the exposure mapping). Although our work is not guaranteed to be robust to all misspecification, by choosing a rich class of interference patterns, we hope to alleviate many of these concerns. 

In the same spirit, \cite{yuan2021causal} propose a methodology to detect the exposure conditions for a given dataset by precomputing a large class of causal network motifs for each unit (i.e., all possible treated neighborhood graphs for each unit). They then use regression trees to identify important causal network motifs and thus limit the classes of exposures that they need to study. This is similar in spirit to the approach in \citet{awan2020almost}, which selects relevant causal motifs to adjust for interference in an experiment. Both of these approaches become computationally challenging as the degree of nodes and the depth of interference increase even to modest sizes. 

An interesting recent paper \cite{leung2022unconfoundedness} proposes the use of graphic neural networks (GNN) to learn the relevant representation of interference in a very general setting with possible joint determination of treatment. The paper establishes both positive and negative identification results in general settings. The work also derives asymptotic normality results under appropriate assumptions on the rates of convergence of the GNN predictions. A key related question in that paper is the determination of the relevant depth of interference which directly relates with the number of layers in the GNN. \cite{leung2022unconfoundedness} highlights the need to impose additional structure to strike some balance between applicability, interpretability, and tractability. Our setting attempts one such balance by imposing assumptions of additivity of the direct treatment effect while allowing nonparametric estimates for the interference function where the relevant depth of interference can be node specific and dependent on the treatment assignments as well.

\subsection{Our contribution}
Our proposed approach differs from the previous literature by recognizing that not all individuals experience interference in identical ways. We codify this by allowing the interference experienced by each individual to depend on a potentially different subgraph of the larger social network. The following examples illustrate the need for different interference maps for different individuals within the same network.

\begin{example}[Vaccine efficacy in an enclosed population]
    Consider studying the efficacy of a vaccine in an enclosed population, where individuals are free to interact in small groups, but most individuals are very far apart in the network. For example, this can happen in a large hospital where patients interact closely with patients in their room, slightly less closely with those in their wing and contact between wings is facilitates only through medical personnel. 
    Note that there are clearly different patterns of interference that can be expected: a vaccine given to a patient in \texttt{Room 1A} is more likely to reduce the chances of infection for others in \texttt{Room 1A}, and to a lesser extent to those in \texttt{Wing 1} (where \texttt{Room 1A} is located). On the other hand, a vaccine administered to a patient in any of the \texttt{Wings} is unlikely to interfere with the pharmacist, who has no direct patient contact whatsoever and interacts with them only through a series of doctors (whose treatment can, in fact, interfere with the pharmacist's results).  
\end{example}

\begin{example}[After school programs on child outcome]
Consider studying the impact of an after school program on child outcomes (e.g., behavior, choice) of middle school students. In this setting, participants develop friendship and relations during the regular class time, but only a subset of students decides to enroll into the after-school program (which might not be of the interest of the parents or might have a limited number of participants allowed). However, participants in the after school program typically interact and discuss with friends (regardless of whether these friends are in the program or not). It is also possible that a student hears about the program by talking to friends who participated in the program or talking to friends who have friends who participated in the program.
\end{example}

Our broad goal in this work is to study the direct effect of a treatment on individuals while accounting for any potential and variable interference. Under common additivity, positivity and unconfoundedness assumptions and using our new generalization of neighborhood interference (described in detail in Section~\ref{Assum}), we derive an augmented inverse probability weighting (AIPW) estimator and an outcome regression (OR) estimator for the average direct effect on the treated. We demonstrate that these estimators are asymptotically well behaved when the types of interference are known, and more importantly, even when those need to be adaptively selected within the estimator.  

Our analysis of the proposed interference estimator builds upon ideas from the variable length Markov chain literature for discrete alphabet time series; see, e.g., \cite{buhlmann1999variable,ferrari2003estimation,garivier2011context,o2012adaptive, belloni2017approximate}. In that setting, the order of the Markov process is unknown and can depend on the specific past history. 
In our setting, for each node, we will learn the number of hops\footnote{Here a $m$-hop neighborhood of node $i$ includes the induced subgraph of all nodes for whom the shortest path to $i$ is less than or equal to $m$ edges.} needed to obtain a good approximation for the interference which depends on the (sub)network and assigned treatments. We will rely on a Lepski-type estimator (e.g. \cite{lepskii1992asymptotically,lepski1997optimal,birge2001alternative,spokoiny2016nonparametric}) to select the relevant order $k$ for each individual that defines the relevant (sub)network interference patterns. However, because the choice of $k$ is allowed to also depend on the treatment assignments to nearby nodes, the learning of such treatment-graph patterns is a form of feature engineering that creates challenges. Thus, the proposed approach is an adaptive extension of many existing estimators that assumes a known and fixed $k$-hop neighborhood determines the interference that complements the literature in that direction. Our interference estimator builds on \cite{belloni2017approximate} which estimates a variable length Markov chain for a stationary time series. In our setting, we establish oracle inequalities for the performance of the interference estimator. This result implies that our interference estimator performs similarly to an (oracle) estimator that optimally balances (the unknown) bias and variance within the class. In well-specified (low-degree) cases, we provide specific rates implied by the oracle inequality. 

As mentioned earlier, the estimation of the interference function is an intermediary step in the estimation of the average direct treatment effect (ADTE). 
The analysis of the outcome regression estimator is a direct consequence of the estimation of the interference function, and it inherits the rates of convergence. On the other hand, the augmented inverse probability weighted estimator leverages the independence between treatment assignments (while allowing heterogeneous propensity scores) to create an orthogonal moment condition which helps diminish the impact of using estimated nuisance parameters (e.g. interference function estimates). However, our interference estimator not only learns the relevant interference values, but also generates the patterns to be tested. Such feature engineering creates an additional dependence that impacts the rate of convergence of the interference estimator. This dependence is problematic in the analysis of the distribution of the AIPW estimator. To mitigate this dependence, we pursue a way to decouple feature engineering from the rest of the estimation. Sample-splitting has been a standard way to decouple the dependence within nuisance parameters, but, in our setting, the network dependence severely complicates the use of sample-splitting since we are allowing for the dependence radius to be unknown. A major contribution of our work is to propose the so-called synthetic treatments to bypass these difficulties. This leads to a modified interference estimator that breaks the dependency between the feature engineering step and the estimation of the interference function. In our analysis, the generation of synthetic treatments acts similarly to sampling-splitting and cross-fitting (used in cross-sectional data) to reduce the dependence across the nuisance function estimators. Our analysis of the AIPW estimator accounts for the feature engineering step of learning the interference patterns and provides conditions under which the estimator is asymptotically unbiased and asymptotically normal. Finally, we discuss conservative and non-conservative estimates for the variance. 

\section{Setup}
\label{Assum}

We consider a finite population model with $n$ units connected through a network $G$  that articulates how interference can propagate between units. We denote by $\widetilde Y_i(z) = \widetilde Y_i(z_i,z_{-i})$ the potential outcome of the $i$th unit when $z\in\{0,1\}^n$ is the assigned treatment to the whole network.\footnote{Under the standard SUTVA, which we do not make, one would have $\widetilde Y_i(z)=\widetilde Y_i(z_i,0_{-i})$ for all $z\in\{0,1\}^{n}$.}  We observe the network $G$ (interference graph), and the triple of treatment, covariates and outcomes. $(Z_i, X_i, Y_i)_{i=1}$, for each unit $i \in [n]$. The outcomes are related to the potential outcomes by \begin{equation}\label{def:NoMultipleOutComes}Y_i=\widetilde Y_i(Z), \ \ i\in[n]. \end{equation} 

We will consider the following assumptions.

\begin{assumption}[Additivity of Main Effects]
\label{additivity}
For each unit $i \in [n]$ and any treatment assignment $z\in\{0,1\}^{n}$, we have that
\begin{equation*}
    \begin{split}
       \tau_i := \mathbb{E}\left[ \widetilde{Y}_i(1,0_{-i}) - \widetilde{Y}_i(0,0_{-i}) \mid X_i \right] = \mathbb{E}[\widetilde{Y}_i(1,z_{-i})  - \widetilde{Y}_i(0,z_{-i} ) \mid X_i  ]& 
    \end{split}
\end{equation*}
\end{assumption}

\begin{assumption}[Conditional Unconfoundedness]
\label{cond_unconf}
We have that 
\begin{equation*}
    \begin{split}
     \widetilde{Y}_i(z_i,\bm{z}_{-i}) \perp \!\!\! \perp  Z_i \mid X_i  \ \ \ \ \  \mbox{for all} \  z \in \{0,1\}^{n}.
    \end{split}
\end{equation*}    
\end{assumption}

\begin{assumption}[Conditionally Independence of Treatment]
\label{ass:cond_ind}
Conditionally on $X_i$, the treatment assignment of $Z_i$ is independent of treatments and features of other units, namely
\begin{equation*}
    \begin{split}
     (Z_{-i},X_{-i}) \perp \!\!\! \perp  Z_i \mid X_i  
    \end{split}
\end{equation*}    
and $\exists c>0$ such that $c \leq \mathbb{P}(Z_i=1\mid X_i) \leq 1-c$ for all values of $X_i$.
\end{assumption}

\begin{assumption}[Covariate independent baseline]\label{Assump:NoDirectImpact} 
We assume that $\mathbb{E}[\widetilde{Y}_i(0,z_{-i})\mid X_i]=\mathbb{E}[\widetilde{Y}_i(0,z_{-i})]$ 
for all $i\in[n]$ and $z\in\{0,1\}^{n-1}$ . 
\end{assumption}

Assumption \ref{additivity} is standard in the literature on network interference
\citep{toulis2013estimation,sussman2017elements,karwa2018systematic,awan2020almost,jagadeesan2020designs}. 
Conditional on covariate information, Assumption \ref{cond_unconf} implies that treatment assignment is independent of potential outcomes, while Assumption \ref{ass:cond_ind} implies that treatment assignments are independent. Both of these assumptions are trivially satisfied in the experimental setting, where the treatment is assigned uniformly at random to units. Assumption \ref{Assump:NoDirectImpact} rules out a direct effect of $X$ on the potential outcomes of the untreated units.   When $X_i$ is of low dimension, it is practical to stratify the analysis by the values of $X$. In that case Assumption \ref{Assump:NoDirectImpact} can be relaxed and ideas developed here would also extend to that (admittedly with larger sample size requirements). Indeed, because we do not need to identify their impacts separately, we would still be able to recover the estimand of interest. Nonetheless, Assumptions \ref{cond_unconf}, \ref{ass:cond_ind} and \ref{Assump:NoDirectImpact} allow designs with a nonconstant propensity score $ e(X_i) := \mathbb{P}(Z_i=1\mid X_i)$ and heterogeneous treatment effects $\tau_i$ that depend on $X_i$.

Assumption \ref{Assump:NoDirectImpact} also allows for heterogeneous direct treatment effect and is made to simplify the exposition and technical results but can be relaxed in different ways. For example, if covariates are discrete it is possible to estimate the interference function for each covariate value and our results would carry over with minor adjustments. In Section \ref{sec:Additional} we discuss a modified estimator that allows for different specifications that postulate the interference to be a (smooth) function of the covariance {\it given an interference pattern}. Moreover, if the covariates $X_i$'s are independent of the graph $G$, the interference is likely to be approximately well balanced between the treatment and control groups. 
 However, we are motivated by situations where we would expect some relation between the covariate $X_i$ and the graph $G$ (for example, $X_i$ is the degree of unit $i$). 

Next, we impose restrictions on the potential outcomes. In particular, how the interference network $G$ defines the interference patterns in the potential outcomes. We will allow the interference to depend on a $m$-hop neighborhood of the labelled graph\footnote{Formally we define $G^z$ as the network $G$ where the $i$th node is labelled as $Z_i$. We let $G_i^z$ as $G^z$ but the $i$th node is labeled as the ``ego" that is the center to grow a $m$-hop subgraph.} $G^z$ but the value of $m$ is unknown and potentially dependent on the local network and treatments. Here, a $m$-hop neighborhood of node $i$ includes the induced subgraph of all nodes for whom the shortest path to $i$ is less than or equal to $m$ edges. 

\begin{assumption}[Neighborhood Interference]
\label{Neigh_inter} 
There is a mapping $\gamma(\cdot)$ on labelled subgraphs such that for any treatment assignment vectors $\bm{z}$ and $\bm{z}^\prime$ with $\gamma( G_i^z)=\gamma(G_i^{z'})$ 
$$\widetilde{Y}_i(1,\bm{z}_{-i})= \widetilde{Y}_i(1,\bm{z}^\prime_{-i}) \ \ \mbox{and} \ \ \widetilde{Y}_i(0,\bm{z}_{-i})= \widetilde{Y}_i(0,\bm{z}^\prime_{-i})$$
 and  for any $i \in [n]$ such that  $g=\gamma(G_i^{z})$, $f(g) := \mathbb{E}[\widetilde Y_i(0,z_{-i})]$ is unit independent.  
\end{assumption}

Assumption \ref{Neigh_inter} is a restriction to the potential outcomes in the spirit of the constant treatment response assumption of \citet{manski2013identification}. Without loss of generality, it postulates that the labeled subgraph centered on $i$ with radius $m_i$ is sufficient to determine potential outcomes. Throughout the paper, we have 
$$\gamma(G_i^z) = \gamma_0(m_i\textrm{-hop neighborhood from }i\textrm{ in }G^z)$$
where $\gamma_0$ is a known mapping and $m_i=m(G_i^{z})$ is an unknown function. Importantly, we allow $m_i$ to be equal $n$ for every $i\in[n]$\footnote{In the next section we introduce approximation errors of using a smaller value than $m_i$ to estimate the interference function for a pattern $g$.}.  As discussed below, combined with the additivity assumption \ref{additivity}, the interference will be a function of $\gamma(G_i^Z)$, that is, $f_i:=f(\gamma(G_i^Z))$. Although the function $\gamma_0$ might be the identity, in applications  $\gamma_0$ might summarize the corresponding subgraph. We will require that if the  $m$-hop neighborhoods of two units $i$ and $j$ match, then they also match the number of treated nodes in each $k$ hop neighborhood for $k\leq m$. 
That is, letting $G_i^Z(k):=k\mbox{-hop neighborhood from $i$ in $G^Z$}$, we have that  $\gamma_0(G_i^Z(m))=\gamma_0(G_j^Z(m))$ implies that $\gamma_0(G_i^Z(k))=\gamma_0(G_j^Z(k))$ for $k\leq m$. In what follows, $T_{i,k}$ denotes the number of treated nodes at distance $k$ from $i$.

\begin{example}[Number of treated neighbors]\label{Example:NumberOfTreatedNeighbors}
Consider the (common in the literature) assumption that the exposure mapping depends on the number of treated neighbors $T_{i,1}:=\sum_{j=1}^n A_{ij} Z_j$ where $A$ is the adjacency matrix associated with $G$. Typically, it is assumed that $m_i=1$ for every $i\in[n]$. In our setting, this would lead to an interference function of the form $f(\gamma(G_i^Z))=f(\gamma_0(G_i^Z(1))=f(T_{i,1})$ where $\gamma(G_i^Z)=\gamma_0(G_i^Z(1))=\sum_{j=1}^n A_{ij} Z_j=:T_{i,1}$. In this case, there is no approximation error considering $G_i^Z(1)$ instead of the whole graph. By construction we have $\gamma_0$ to define a partition by having $\gamma_0(G_i^Z(k))=(T_{i,1},T_{i,2},\ldots,T_{i,k})$ where $T_{i,k}$ denotes the number of treated nodes at distance (exactly) $k$. Thus, if $\gamma_0(G_i^Z(m))=\gamma_0(G_j^Z(m))$, it follows that $\gamma_0(G_i^Z(k))=\gamma_0(G_j^Z(k))$ for $k\leq m$.
\end{example}
\begin{example}[Generalized Number of treated neighbors]\label{Example:Second}  Suppose that the interference function is given by \begin{equation}\label{eq:example:gamma1}f(\gamma(G_i^Z)):= \mu(T_{i,1},T_{i,2},\ldots,  T_{i,n})\end{equation} where $\mu$ is an unknown function and $T_{i,k}$ denotes the number of treated nodes at distance (exactly) $k$. This allows for a rich family of interference patterns including layers to impact interference at a different marginal rate (e.g., $\mu(T_{i,1},T_{i,2},\ldots,  T_{i,n})=\sum_{k=1}^n \mu_k \cdot T_{i,k}$) or a non-linear function (e.g., $\mu(T_{i,1},T_{i,2},\ldots,  T_{i,n})=\bar\mu/\min_\ell\{ \ell \geq 0 : T_{i,\ell}>0\}$). In this case, we note that $\gamma_0( G_i^Z(m)):=(T_{i,1},T_{i,2},\ldots,T_{i,m})$ which is used in the implementation of the estimator while the knowledge of the functional form of $\mu$ is not needed. In applications we expect that the dependence on $T_{i,\ell}$ for large $\ell$ is negligible.
\end{example}

\begin{example}[Isomorphic patterns]
Consider the more general case where the interference function is given by $\gamma_0$ being the identity function, $ \gamma_0(G_i^Z(m)) \cong G_i^Z(m)$ (where $\cong$ denotes equality up to an isomorphism), so that
$$ f(\gamma(G_i^Z))=f(\gamma_0(G_i^Z(m_i)).$$
In this case we have $\gamma_0(G_i^Z(m))\cong \gamma_0(G_j^Z(m))$ implies that $\gamma_0(G_i^Z(k)) \cong \gamma_0(G_j^Z(k))$ for $k\leq m$. This specification allows for a very rich class of interference functions with a large number of potential outcomes. However, such generality comes at the cost of (i) the existence of matches harder and the computational problem of verifying that two nodes have the same pattern (i.e. $\gamma_0(G_i^Z(m)) \cong 
\gamma_0(G_j^Z(m))$) more demanding as it is testing if two graphs are isomorphic.
\end{example}

\begin{example}[Linear-in-means] Considering the special case of the linear-in-means model (\cite{manski1993identification}) where it is assumed 
$ Y_i = \alpha + \beta \frac{\mbox{$\sum_{j=1}^n A_{ij}Y_j$}}{\mbox{$\sum_{j=1}^nA_{ij}$}} +  \gamma \frac{\mbox{$\sum_{j=1}^n A_{ij}z_j$}}{\mbox{$\sum_{j=1}^nA_{ij}$}} +\xi_i  $ where $A$ denotes the adjacency matrix, $\xi$ independent zero mean shocks, $\beta$ the coefficient of endogenous peer effects, and $\gamma$ the exogenous peer effect. By letting $\tilde A = ( A_{ij}/ \sum_{j=1}^nA_{ij} )_{i,j \in [n]}$ denote the row-normalized adjacency matrix, we can rewrite the potential outcomes as
$$ Y_i(z) = z_i\left\{ \gamma + \gamma\beta \sum_{k=0}^\infty \beta^k (\tilde A^{k+1} e_i)_i \right\} + \frac{\alpha}{1-\beta}+  \gamma\beta \sum_{k=0}^\infty \beta^k (\tilde A^{k+1} z_{-i})_i + \sum_{k=0}^\infty \beta^k (\tilde A^k \xi)_i $$
where $e_i$ is a vector of zeros with a one in the $i$th components and $z_{-i}=z-e_iz_i$ is the vector $z$ with the $i$th component set to zero. In this case, we would have $\gamma(G_i^z)=( (\tilde A z_{-i})_i, (\tilde A^{2} z_{-i})_i, \ldots)$ and $\gamma_0(G_i^z(m)) = ((\tilde A z_{-i})_i, \ldots, (\tilde A^{m+1} z_{-i})_i)$. In this model, we would see correlation across units of the random shocks.\footnote{Tools of the literature would allow us to extend our results to this case but to highlight our contributions on the adaptive selection of the neighborhoods and its consequence we impose independence of shocks.} 
\end{example}
From an interpretation and algorithmic point of view, it is convenient to represent this interference function as a tree. Each local pattern $\gamma_0(G_i^Z(m))$ (built from $m$-hops from unit $i$) corresponds to a node of the tree that is at depth $m$ from the root. The root corresponds to zero hops (e.g., no interference) and the leaves correspond to a relevant interference pattern for the function. (We will associate an interference estimate to each node of the tree, see Section \ref{sec:InterferenceEstimator}). Figure \ref{fig:figure1} considers an example where $\gamma_0$ is the sum of treated nodes in the relevant local subnetwork $G_i^Z(m)$. The left panel illustrates an interference function that depends only on its neighbors. Figure \ref{fig:figure1} on the right panel illustrates an interference function that depends on the ``neighbors of the neighbors" if no (immediate) neighbors were treated. 

Thus, this assumption is related to the Neighborhood Interference assumption in the causal inference literature (see, e.g., Assumption 2 in \citet{forastiere2021identification}). The function $\gamma_0$ defines an interference pattern that is similar to the ``exposure mapping'' discussed in \citet{aronow2017estimating}. 
As we discuss below, when we estimate $m_i$ we are concerned with approximating the corresponding interference, and we will balance a bias and variance trade off of the interference (rather than aiming for a perfect recovery of $m_i$ that would rely on strong separation assumptions).  

\vspace{-0.5cm}

\begin{figure}[H]
  \centering
  \begin{subfigure}[bt]{0.45\textwidth}
    \begin{tikzpicture}
    
    \node[align=left] at (4,7) {$*$};
    \draw[thick] (4,7) circle (0.4cm);
    
    \draw[thick] (3,6.2) -- (4,6.6); 
    \draw[thick] (5,6.2) -- (4,6.6);
    
    \node[align=left] at (3,5.8) {$0$};
    \node[align=left] at (5,5.8) {$\geq 1$};
    
    \draw[thick] (3,5.8) circle (0.4cm);
    \draw[thick] (5,5.8) circle (0.4cm);
    
\end{tikzpicture}
  \end{subfigure}
  \begin{subfigure}[bt]{0.45\textwidth}
    \begin{tikzpicture}
    
    \node[align=left] at (4,7) {$*$};
    \draw[thick] (4,7) circle (0.4cm);
    
    \draw[thick] (3,6.2) -- (4,6.6); 
    \draw[thick] (5,6.2) -- (4,6.6);
    
    \node[align=left] at (3,5.8) {$0$};
    \node[align=left] at (5,5.8) {$\geq 1$};
    
    \draw[thick] (3,5.8) circle (0.4cm);
    \draw[thick] (5,5.8) circle (0.4cm);

\node[align=left] at (2,4.6) {$0$};
    \node[align=left] at (4,4.6) {$\geq 1$};

    \draw[thick] (3,5.4) -- (4,5); 
    \draw[thick] (3,5.4) -- (2,5);
    
    \draw[thick] (2,4.6) circle (0.4cm);
    \draw[thick] (4,4.6) circle (0.4cm);

\end{tikzpicture}
  \end{subfigure}
    \caption{The depth of the tree corresponds to the number of hops in the network. The leaves of the context tree correspond to the relevant interference patters (each will be allowed to have its own interference intensity). The panel on the left represents one interference function that depends only if at least one immediate neighbor was treated or not. The panel on the right represents an interference function that when no friends were treated, it depends if any friends of friends were treated or not. Note that a tree with only a root node represents a setting where there is no interference.}
    \label{fig:figure1}
\end{figure}

\vspace{-0.5cm}

The combination of Assumptions \ref{additivity}-\ref{Neigh_inter}  implies the  the following partially linear form for the potential outcomes: 
\begin{equation}\label{def:outcomevalues} \begin{array}{rl}
\widetilde Y_i(z) & = z_i\{\widetilde Y_i(z_i,z_{-i}) - \widetilde Y_i(0,z_{-i})\} + \widetilde Y_i(0,z_{-i})  
= z_i \tau_i + f( \gamma(G_i^z) ) + \epsilon_i
\end{array}
\end{equation}
where $\epsilon_i$ is a zero mean term (heteroskedastic), and $f(\gamma(G_i^z))$ is the interference term that absorbs the (constant) baseline, see, e.g., Proposition 1 in \citet{awan2020almost} and \citet{sussman2017elements} for similar development. Moreover the individual specific $\tau_i:=\mathbb{E}\left[ \widetilde Y_i(1,0_{-i}) - \widetilde Y_i(0,0_{-i}) \mid X_i \right]$ can be dependent of $X_i$. 
There is a complication if the shocks $\epsilon_i$'s are allowed to be dependent as differentiating between $f(\cdot)$ and the error term would not necessarily be possible \citep{shalizi2011homophily}. We will make the assumption that the errors are uncorrelated. First, it is plausible that there is no such excess homophily in the problem (such as when connections are enforced exogenously), and so the no-correlation assumption is reasonable. Furthermore, recent work by \citet{mcfowland2021estimating} suggests that the structure on the covariance of $\epsilon_i$ still allows us to identify $f(\cdot)$ and the error separately. While knowing the structure a priori is a slight generalization to the no-correlation setting, it substantially complicates the derivations without contributing to the understanding of the proposed method and so it is left to future work.

The main estimand of interest we consider is the average direct treatment effect on the treated, i.e.,
\begin{equation}\label{ATT}
\tau := \frac{1}{\mbox{$\sum_{i=1}^n Z_i$}} \sum_{i=1}^n Z_i \mathbb{E}\left[ \widetilde Y_i(1,0_{-i}) - \widetilde Y_i(0,0_{-i}) \mid X_i \right] = \frac{1}{\mbox{$\sum_{i=1}^n Z_i$}} \sum_{i=1}^n Z_i\tau_i.
\end{equation}

Related to (\ref{ATT}) we can define the  average direct effect on the treated (ADET) as the average over the treatment effect (\ref{ATT}) over the treatment assignments \begin{equation}\label{ATT-norm}
\tau_{ADET} := \mathbb{E}\left[ \sum_{i=1}^n \frac{Z_i}{\mbox{$\sum_{j=1}^n Z_i$}} \mathbb{E}\left[ \widetilde Y_i(1,0_{-i}) - \widetilde Y_i(0,0_{-i}) \mid X_i \right] \right]
\end{equation}where the outermost expectation is over the treatment assignments. We provide results for this case as well.

\section{Estimators}
\label{Proce_and_Est}

In this section, we propose estimators for the average direct treatment effect on the treated that leverage the decomposition (\ref{def:outcomevalues}). These estimators build upon the following high-level ideas: (i) under the partially linear assumption, we can obtain estimates $\hat f_i$ of the interference $f_i:=f(\gamma(G_i^Z))$ based on the control units; (ii) subtracting the estimates of the interference from the observed outcomes, $Y_i-\hat f_i$, up to estimation errors, we recover a setting with no interference for which estimators are available. However, estimation errors here include sampling errors, regularization and model selection biases, and errors from learning the features themselves.

To estimate the interference function, we propose a data-driven choice of a unit-specific value of $\hat m_i$-hops that approximates the relevant neighborhood for the interference function. In this setting, up to logarithmic terms, we establish that the proposed estimate balances bias and variance. \footnote{It is based on a Lepski-type procedure to adaptively select a node specific value $m_i$-hops within the control group.} 

Based on interference estimates, we can construct the triple $(Y_i-\hat f_i,Z_i,X_i)$ which can be used to construct different estimators for the direct effect. Under the assumption that the treatment assignments are independent conditional on the covariates, we can leverage the propensity score to obtain another estimator: Letting $\hat e_i$ denote an estimate for the propensity score, the {\it augmented inverse propensity weighted} (AIPW) estimator in our setting is given by
\begin{equation}\label{def:tauDR}
    \begin{split}
      \hat{\tau}^{AIPW}= \frac{1}{\mbox{$\sum_{i=1}^n Z_i$}}\sum_{i=1}^n \left[ Z_i ( Y_i -  \hat{f}_i)  -  \frac{(Y_i-\hat f_i)(1-Z_i)\hat e_i}{1-\hat e_i} \right]   \end{split}
\end{equation}
This AIPW estimator inherits several good properties that are well known when SUTVA holds. Nevertheless, the construction of the interference function plays a central role in the analysis of these estimators (in particular the impact of the feature engineering step). 

Although our main focus is on the estimator (\ref{def:tauDR}), under the additive assumption, despite interference, we can estimate the ADTT without the assumption of independent treatment assignment assumption using an {\it outcome regression} (OR) estimator for (\ref{ATT}), namely
\begin{equation}\label{def:tauOR}
    \begin{split}
      \hat{\tau}^{OR}= \frac{1}{\mbox{$\sum_{i=1}^n Z_i$}} \sum_{i=1}^n Z_i \left( Y_i - \hat{f}_i \right) 
    \end{split}
\end{equation}
We will provide rates of convergence based on oracle inequalities for the interference estimator. 
As described in the following, we will rely on different interference estimators in (\ref{def:tauOR}) and (\ref{def:tauDR}). The latter has a slower than root-$n$ rate of convergence which is a consequence of the interference estimator. For the former, we need to modify the interference estimator with the use of synthetic treatments to cope with the dependence in our estimation to achieve asymptotic normality for (\ref{def:tauDR}). 

\subsection{Interference Estimator}\label{sec:InterferenceEstimator}

A key step in our approach is to estimate the interference function, which also needs to identify the relevant interference patterns. Our proposal will estimate a relevant $m$-hop neighborhood associated with each $G_i^z$ that achieves a near-optimal bias and variance tradeoff. More precisely, it is a Lepski-type procedure that aims to balance (estimated) bias and variance when choosing $m$. The estimation will construct a tree representation for the relevant patterns that are relevant to approximate interference as discussed in Figure \ref{fig:figure1}. The estimation of such tree can be viewed as selecting the appropriate subtree out of the tree that contains all possible interference patterns.

In what follows, we use the following notation. Let $G_i^Z(m):=$ $m$-hop subgraph from $i$ in $G^Z$. In order to define a proper partition of the nodes, we match along all intermediate hops as well, that is, if $\gamma_0(G_i^Z(m)) = \gamma_0(G_j^Z(m))$, it implies that $\gamma_0(G_i^Z(k)) = \gamma_0(G_j^Z(k))$ for $k=1,\ldots,m$. For $g=\gamma_0(G_i^Z(m))$, we denote $par(g) = \gamma_0(G_i^Z(m-1))$ and say that $g'\succeq g$ if $\exists m' \geq m$ is such that $g'=\gamma_0(G_i^Z(m'))$.

 ~\\ 
\uline{{\bf  Algorithm 1. Estimation of interference function}\hfill}
~\\
\noindent {\bf Step 0.} Initialize $\lambda > 1$, confidence level $1-\delta$, variance upper bound $\bar{\sigma}^2$,  
$$ \widehat E_n:= \{ g : \exists i,m \in[n], Z_i=0, g=\gamma_0(G_i^Z(m)) \} \ \ \mbox{and} \ \ \widehat{\mathcal{K}}:=\emptyset $$
\noindent {\bf Step 1.} For $g \in \widehat E_n$, let $V_g := \{ i \in [n] : Z_i=0, \exists m \in [n] \mbox{ s.t.} \  g=\gamma_0( G^Z_i(m) )\} $ and define 
$$ \mbox{$\hat f(g):= \frac{1}{|V_g|}\sum_{i\in V_g} Y_i$} $$ 
\noindent {\bf Step 2.} For each $g \in \widehat E_n$, set $\widehat{\mathcal{K}} \leftarrow \widehat{\mathcal{K}} \cup \{g\} $ if $ \exists g',g'' \in \widehat E_n : g'\succeq g, g'' \succeq par(g) $ s.t.
$$ \mbox{$ |\hat f(g') - \hat f(g'')| > \lambda \bar\sigma\sqrt{2\log(2n^2/\delta)}\left(\frac{1}{\sqrt{|V_{g'}|}} + \frac{1}{\sqrt{|V_{g''}|}}\right)$}$$
\noindent {\bf Step 3.} To estimate $f(\gamma(G_i^Z))$, $i\in[n]$,  we define
\vspace{-0.35cm}
$$ \begin{array}{rl}
\ \ \ \ \ \ \ \ \ \  \hat f_i= \hat f (\hat g_i ) \ \ \ & \mbox{where} \ \hat g_i := \gamma_0( G^Z_i(\hat m_i)) \ \ \mbox{and} \ \ \hat m_i := \max_m \{ m : \gamma_0(G^Z_i(m)) \in \widehat{\mathcal{K}} \} \end{array}$$ 
\vspace{-1.5 cm}
~\\

\noindent\uline{\noindent { } \hfill}
~\\

Estimation of interference is challenging as we need to estimate the relevant $m$-hop neighborhood that provides a good approximation for the interference function. Algorithm 1 is based on the idea that, in the control group, the interference function is the signal that we are trying to recover.  In its Step 0 the set $\widehat E_n$ denotes the set of different configurations of relevant patterns in the given labeled graph. Because we have a nested subset of graphs as we increase $m$ in $G_i^Z(m)$, we note that the set $\widehat E_n$ can be represented as a tree starting at the root with $m=0$. 
The set $\widehat{\mathcal{K}}$ is a subtree of $\widehat{E}_n$ that will collect the patterns selected to estimate the interference function. 

For each pattern $g \in \widehat E_n$, Step 1 defines an estimate of the interference $\hat f(g)$.\footnote{Note that this is generalizable for more general estimands beyond conditional mean and with covariates.} Step 2 of Algorithm 1 balances the estimates of bias and variance. Informally, a pattern $g$ in $\widehat{E}_n$ should be selected if there are two deeper patterns that have sufficiently different interference estimates relative to their (square root) variance. That is, a pattern $g$ will be selected into $\widehat{\mathcal{K}}$ only if it is needed to keep the bias small relative to an estimate of the variance. 
The parameter $\lambda$ is a regularization parameter in the spirit of shrinkage estimators, e.g., lasso \cite{bickel2009simultaneous}.
Finally, for each $i$, Step 3 defines the relevant value of hops $\hat m_i$ as the largest value of $m$ that is still in the estimated tree $\widehat{K}$.  In the next section, we provide theoretical guarantees on the performance of the estimator. We note that unless one is willing to impose strong separation assumptions, it is unlikely that one can estimate $m$ uniformly across different data generating processes. However, the misspecification cannot be much larger due to the selection criterion in Algorithm 1. 

There are several variants of Algorithm 1 that relax some of the requirements. For example, it is possible to use a pattern-specific estimate of the variance $\hat\sigma_g^2$ (instead of assuming a known $\bar \sigma^2$) and our results (e.g., Theorem \ref{thm:single_node}) would hold with small modifications. Moreover, the proposed Step 2 that balances bias and variance relies on concentration inequalities (i.e. through the factor $\sqrt{2\log(2n^2/\delta)}$) which can be conservative. It is also possible to rely on bootstrap approximations to take into account correlations to reduce the penalty levels and still achieve the same theoretical guarantees, see, e.g., \cite{belloni2018high}. Finally, in Section \ref{sec:Additional}, we discuss different ways to incorporate covariates which would allow us to relax Assumption \ref{Assump:NoDirectImpact}.

\subsection{Decoupling feature engineering based on Synthetic Treatments}

Here, we propose a variant that changes the definition of the initial tree $E_n$ that will be critical to establishing distributional limits for the AIPW estimator. To see why, we note that Algorithm 1 can be viewed through the lens of feature engineering. In fact, it uses the data to define the patterns to construct the relevant feature (that is, the appropriate $k$ hop neighborhood of the labeled graph). Although such dependence does not affect the rate of convergence of the interference estimator, it is critical for the distributional results of the AIPW estimator.

The next algorithm will be useful when we want to decouple feature generation from treatment assignments. We will generate a different initial set of patterns to replace $\widehat E_n$ as follows.

~\\
\uline{{\bf Algorithm 2. Estimation of interference function with synthetic treatments}\hfill}
~\\
\noindent {\bf Step 1.} Consider the original (unlabeled) interference graph $G$.\\
\noindent {\bf Step 2.} Assign (new) labels $\widetilde Z^o \in \{0,1\}^n$ independently from $Y$ and $Z$.\\
\noindent {\bf Step 3.} Define $ \displaystyle \widetilde E_n^o := \{ g : \exists i, m \in [n], g=\gamma_0(G_i^{\widetilde{Z}^o}(m))\} $ \\
\uline{\noindent {\bf Step 4.} Run Steps 1, 2 and 3 from Algorithm 1 with $ \widetilde E_n^o$ instead of $\widehat E_n$. \hfill}\\
~\\
This decoupling step reduces overfitting in a similar spirit to sample splitting. Nevertheless, the use of synthetic treatments avoids the difficulties that sample splitting would create due to graph dependence across units. The proposal exploits that we only need to create (synthetic) treatments to achieve independence between pruning and feature generation. We acknowledge that, in principle, this decoupling might miss some potential feature that appeared in the true data. However, for a pattern to be informative, it needs to be seen across multiple nodes, and those are likely to appear if we could resample the data.  In practice, the label $\widetilde{Z}^0_i$ can be generated from independent Bernoulli random variables with parameters $\hat{e}(X_i)$ for $i \in [n]$. It is also possible to use a small subset of nodes to obtain a rough estimate of the propensity score.\footnote{Under Assumption \ref{assum:DR}(ii) our main results would also hold if we use a small number of observations $q$, $q \ll n^{1/2}/\{(1+D_n) \log n\}$, to learn a rough estimate for $e(X)=\mathbb{P}(Z=1\mid X)$.}  

\section{Theoretical Results}
\label{The_Res}

In this section, we establish theoretical guarantees associated with the two causal estimators (\ref{def:tauOR}) and (\ref{def:tauDR}). We also establish the properties of the interference estimator which is key for our proposal. 

Our setting follows the Additive effect under Neighborhood Interference Assumption (ANIA) framework in \cite{sussman2017elements} as the potential outcomes satisfy additivity of main effects if and only if the potential outcomes can be parameterized through a partially linear model. The next assumption is a summary of assumptions \ref{additivity}, \ref{Assump:NoDirectImpact} and \ref{Neigh_inter}  (presented to facilitate readability) with additional moment conditions on the error term.

\begin{assumption}[Potential Outcome Model]
\label{assump:outc_mod} The potential outcome equation satisfies
\begin{equation*}
    \begin{split}
        \widetilde Y_i(z_i,z_{-i})=  z_i \tau_i + f(\gamma(G_i^z))+ \epsilon_i
    \end{split}
\end{equation*}
where, conditional on $X_i,Z_i$, the error term $\epsilon_i$ is: (i) independent of $(Z_{-i},X_{-i},\epsilon_{-i})$, (ii) a zero mean subgaussian random variable with parameter $\bar\sigma^2$, and (iii) $\mathbb{E}[\epsilon_i^2 \mid Z_i,X_i]\geq \underline{\sigma}^2$. Furthermore, we assume that $|\tau_i| + |f(\gamma(G_i^z))| \leq \bar C$ for all $z \in \{0,1\}^n$, $i\in [n]$.
\end{assumption}

Assumption \ref{assump:outc_mod} imposes (conditional) independence on the error terms and subgaussianity. It is a technical assumption that can be relaxed in various ways, but it would take the focus out of the main issues the work is focusing on (e.g., finite moments and limited dependence across units). Assumption \ref{assump:outc_mod}  also imposes that the individual treatment effects and the interference function are uniformly bounded. 

Next, we define the maximum approximation error associated with the $m$-hop subgraphs. 

\begin{definition}[Interference approximation error]
\label{bias_term}
For a given pattern $g$ we define $\overline{V}_g := \{ i \in [n] : \exists m \in [n] \mbox{ s.t.} \  g=\gamma_0( G^z_i(m) )\} $ and the approximation error at  $g$ as 
\begin{equation*}
\begin{array}{rl}
r^z(g):= &\displaystyle \max_{ i,i' \in \overline{V}_g} \left\lvert \mathbb{E}[ \widetilde Y_i(0,z_{-i})-\widetilde Y_{i'}(0,z_{-i'})]\right\rvert
=\max_{ i,i' \in \overline{V}_g} \left\lvert  f (\gamma(G_i^z))- f(\gamma(G_{i'}^z))\right\rvert
\end{array}
\end{equation*}
\end{definition}
The approximation is defined conditional on a specific vector of treatment assignment $z$. This term provides a quantification of the misspecification one obtains by restricting the model at a specific number of hops for a specific pattern. We note that the set $\bar V_g$ includes both controls and treated units while its subset $V_g$ (used in Algorithm 1) includes only control units. The approximation error $r^Z(g)$ will be used to establish an adaptive bound for our interference estimator. A common assumption in the literature is that controlling for the $1$-hop neighborhood suffices for no approximation errors. There are a few exceptions in the literature that allow for misspecification. The closest to our setting is \cite{leung2022causal} which considers an approximation $m \to \infty$ so that $r(g_m)\to 0$ and derives asymptotic normality for the inverse propensity weighted estimator under appropriate dependence decay that covers several cases of interest. In contrast, we exploit the heterogeneous radius $m_i=m(G_i^z)$ across different units that depends on the network and the treatment assignments to control the approximation error. This is of interest in obtaining more efficient estimates of the interference. However, this approach creates a model selection problem as we try to balance bias and variance.

\subsection{Oracle Inequality for Interference Function Estimator}\label{sec:OracleInterference}

In this section, we establish an oracle inequality for the estimator of the interference function defined by Algorithm 1. The interference function is defined on all labeled graphs and our objective is to obtain a uniform guarantee across all values $f_i := f(\gamma(G_i^Z))$, $i\in [n]$. We obtain an oracle inequality that shows the proposed estimator $\hat f_i$ that is associated with a specific $\hat m_i$-hop subgraph, achieves a similar performance to the best (unknown) choice of $m$ that optimally balances the unobserved bias (measured by $r(g_m)$) and the variance of $\hat f_{g_m}$.  

\begin{theorem}
\label{thm:single_node}
Suppose that Assumption \ref{assump:outc_mod} holds. Then, with probability at least $1-\delta$, uniformly over all nodes $i\in [n]$ we have   
\begin{equation*}
\begin{split}
\lvert f_i - \hat f_i \rvert \leq \min_{m \in [n], g_m=\gamma_0(G_i^Z(m))} \frac{\lambda}{\lambda-1} r^Z(g_m)+ 3\lambda \bar \sigma \sqrt{\frac{2\log(2n^2/\delta)}{|V_{g_m}|}}
\end{split}
\end{equation*}
where  $V_g := \{ i \in [n] : Z_i=0, \exists m \in [n] \mbox{ s.t.} \  g=\gamma_0( G^Z_i(m) )\}$ as in Step 1 of Algorithm 1.
\end{theorem}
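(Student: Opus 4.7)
The plan is a Lepski-type oracle argument built around a uniform concentration event. I would first define the noise event
\[
\mathcal{E} := \bigl\{ |\hat f(g) - \bar f(g)| \le \sigma(g) \ \text{for all}\ g \in \widehat E_n \bigr\}, \qquad \sigma(g) := \bar\sigma\sqrt{2\log(2n^2/\delta)/|V_g|},
\]
where $\bar f(g) := |V_g|^{-1}\sum_{j\in V_g} f(\gamma(G_j^Z))$. Conditioning on $(Z,X)$, Assumption~\ref{assump:outc_mod} makes $\hat f(g)-\bar f(g)$ the empirical mean of independent zero-mean sub-Gaussian variables with parameter $\bar\sigma^2/|V_g|$, so a per-pattern tail bound at level $\delta/n^2$ together with $|\widehat E_n|\le n^2$ and a union bound yields $\mathbb{P}(\mathcal{E})\ge 1-\delta$. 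The rest of the argument is deterministic on $\mathcal{E}$.

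I would then record two structural invariants. (i) \emph{Bias control}: whenever $g', g'' \succeq g_m := \gamma_0(G_i^Z(m))$, every node appearing in $V_{g'}$ or $V_{g''}$ lies in $\bar V_{g_m}$, so Definition~\ref{bias_term} yields $|\bar f(g')-\bar f(g'')|\le r^Z(g_m)$, and since $i\in\bar V_{g_m}$ also $|\bar f(g_m)-f_i|\le r^Z(g_m)$. (ii) \emph{Closure of $\widehat{\mathcal{K}}$ under parents}: any pair $(g',g'')$ that witnesses $g\in\widehat{\mathcal{K}}$ also witnesses $par(g)\in\widehat{\mathcal{K}}$, because $g'\succeq g\succeq par(g)$ and $g''\succeq par(g)\succeq par(par(g))$. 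Restricted to the $i$-branch, this forces $\widehat{\mathcal{K}}$ to be a prefix $\{g_0,\ldots,g_{\hat m_i}\}$, so $g_{\hat m_i+1}\notin\widehat{\mathcal{K}}$; a parallel closure of $\widehat E_n$ along the $i$-branch (via the matching-of-subhops property of $\gamma_0$) ensures $g_j\in\widehat E_n$ for all $j\le m^*$ whenever $g_{m^*}\in\widehat E_n$. I may assume $g_{m^*}\in\widehat E_n$ in what follows, since otherwise $\sigma(g_{m^*})=\infty$ makes the oracle bound trivial.

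Fix an arbitrary $m^*\in[n]$ and split on $\hat m_i\le m^*$ versus $\hat m_i>m^*$. In the \emph{undersmoothing} case $\hat m_i<m^*$, the pair $(g_{m^*}, g_{\hat m_i})$ is an admissible witness candidate for $g_{\hat m_i+1}\notin\widehat{\mathcal{K}}$ (valid since $g_{m^*}\succeq g_{\hat m_i+1}$ and $g_{\hat m_i}=par(g_{\hat m_i+1})$), so the Step~2 threshold gives $|\hat f(g_{m^*}) - \hat f(\hat g_i)| \le \lambda(\sigma(g_{m^*})+\sigma(g_{\hat m_i}))\le 2\lambda\sigma(g_{m^*})$, using $|V_{g_{\hat m_i}}|\ge|V_{g_{m^*}}|$. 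Two triangle inequalities with $|\hat f(g_{m^*})-\bar f(g_{m^*})|\le\sigma(g_{m^*})$ and $|\bar f(g_{m^*})-f_i|\le r^Z(g_{m^*})$ give $|\hat f_i-f_i|\le r^Z(g_{m^*})+(2\lambda+1)\sigma(g_{m^*})$, which is dominated by $\tfrac{\lambda}{\lambda-1}r^Z(g_{m^*})+3\lambda\sigma(g_{m^*})$ for $\lambda>1$. In the \emph{oversmoothing} case $\hat m_i>m^*$, membership $g_{\hat m_i}\in\widehat{\mathcal{K}}$ supplies witnesses $g'\succeq g_{\hat m_i}$ and $g''\succeq g_{\hat m_i-1}\succeq g_{m^*}$ violating the threshold; on $\mathcal{E}$ this forces $(\lambda-1)(\sigma(g')+\sigma(g''))<|\bar f(g')-\bar f(g'')|\le r^Z(g_{m^*})$ by bias control, and since $\sigma(g_{\hat m_i})\le\sigma(g')$ we obtain $\sigma(g_{\hat m_i})<r^Z(g_{m^*})/(\lambda-1)$. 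Combining with $|\bar f(\hat g_i)-f_i|\le r^Z(g_{\hat m_i})\le r^Z(g_{m^*})$ yields $|\hat f_i-f_i|<\tfrac{\lambda}{\lambda-1}r^Z(g_{m^*})$, again within the claimed bound. Minimizing over $m^*$ and noting that $\mathcal{E}$ is uniform across patterns (hence across $i$) finishes the proof. The delicate step is Case~2: one must recognize that once the algorithm commits to a depth $\hat m_i>m^*$, the variance of the witnesses, and by monotonicity also $\sigma(g_{\hat m_i})$, is automatically dominated by $r^Z(g_{m^*})/(\lambda-1)$, which is exactly the trade-off the oracle inequality demands.
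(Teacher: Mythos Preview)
Your proposal is correct and follows essentially the same Lepski-type argument as the paper: establish the uniform concentration event (the paper's Proposition~\ref{Good_event}), then do a case split comparing $\hat m_i$ to the oracle depth, using failure of the Step~2 test in the undersmoothing case and existence of a witness pair in the oversmoothing case to extract the bias--variance trade-off. The only cosmetic differences are that the paper treats $\hat m_i=m^*$ as a separate (trivial) case while you fold it into the $\hat m_i\le m^*$ branch without quite saying so, and you make the prefix/closure structure of $\widehat{\mathcal{K}}$ and $\widehat E_n$ explicit, whereas the paper uses only the consequence $g_{\hat m_i+1}\notin\widehat{\mathcal{K}}$.
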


Theorem \ref{thm:single_node} establishes a performance guarantee for the estimation of interference in all units. The interference estimate in Algorithm 1 is based only on the control units, thus the effective sample size is $|V_{g_k}|$ (instead of $|\bar V_{g_k}|$). Importantly, the bound holds uniformly for all units. Crucially, the analysis accounts for the fact that $G_i^Z$ is random (Z dependent), which in principle could lead to $n^22^n$ different possible $G_i^z(k)$, $i,k\in [n]$. The result in Theorem \ref{thm:single_node} is adaptive in the sense that for patterns $g_k$ that are observed more often, faster rates are obtained, adapting the choice of $k$ for each $i\in [n]$.  To provide explicit rates of convergence, we discuss two examples based on two-dimensional regular lattices. 

\begin{example}
	\label{example_threshold_intf}
	Suppose that $G$ is a regular two-dimensional lattice with $n$ nodes and any internal unit has degree $d=4$. Suppose that there is $m_0$ such that $r(g_k)=0$ if $k\geq m_0$ and the treatment is randomly assigned with probability $1/2$. Then with probability at least $1-\delta - n\exp \left(- \frac{n}{2d^2(m_0+1)^4 \cdot 2^{d(m_0+1)^2} } \right)$, we have  
		\begin{equation*}
		\begin{split}
		& \max_{i\in [n]} |f(G^Z_{i})-\hat{f}_i|\leq C\lambda \bar{\sigma} 2^{\frac{d}{4}(m_0+1)m_0} \cdot \sqrt{ \frac{\log (2 n^2/\delta)}{n}} .  \\
		\end{split}
		\end{equation*}
\end{example}

\begin{example}
	\label{example_exp_decay_intf}
	Suppose that $G$ is a regular two-dimensional lattice with $n$ nodes, any internal unit has degree $d=4$ and $\bar{\sigma}=1$ in Assumption \ref{assump:outc_mod}. Suppose that the approximation exhibits sub-Gaussian decay, namely $r^z(g_k)\leq C' \exp(-4k^2)$ for any $z\in \{0,1\}^n$, and the treatment is randomly assigned with probability $1/2$. 
 	Then when $n\geq n_0$ where $n_0$ is a sufficiently large constant, with probability at least $1-\frac{2}{n}$, 
 	we have 
	$$\max_{i\in[n]} |f(G_i^Z)- \hat f_i |  \leq   \frac{\lambda C^\prime \bar{\sigma}^2}{\lambda-1} \cdot \frac{1}{n^{1/4}\log n}.$$
\end{example}

We close this section with an analogous result for the interference estimator based on synthetic treatments proposed in Algorithm 2. The following oracle inequality is a direct corollary of Theorem \ref{thm:single_node}. 

\begin{corollary}[Interference Estimation with Synthetic Treatment]
\label{cor:second_context_tree} 
Suppose that Assumption \ref{assump:outc_mod} holds and that Step 0 in Algorithm 1 is replaced by Algorithm 2 (i.e., we set $\widehat{E}_n = \widetilde E_n^o$). Then, with probability at least $1-\delta$, uniformly over all nodes $i\in [n]$ we have   
\begin{equation*}
\begin{split}
\left\lvert f(\gamma(G_i^Z)) - \hat f_i \right\rvert \leq \min_{k \in [n], g_k=\gamma_0(G_i^{\widetilde{Z}^o}(k))} \frac{\lambda}{\lambda-1} r^{\widetilde{Z}^o}(g_k)+ 3\lambda \bar \sigma \sqrt{\frac{2\log(2n^2/\delta)}{|V_{g_k}|}}
\end{split}
\end{equation*}
where $V_g := \{ i \in [n] : Z_i=0, \exists m \in [n] \mbox{ s.t.} \  g=\gamma_0( G^{Z}_i(m) )\} $.
\end{corollary}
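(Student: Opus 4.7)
The plan is to reduce to Theorem \ref{thm:single_node} by conditioning on the synthetic treatment vector $\widetilde Z^o$. By construction in Algorithm 2, $\widetilde Z^o$ is drawn independently of $(Y, Z, X)$. Conditioning on $\widetilde Z^o$ therefore (i) turns $\widetilde E_n^o$ into a deterministic collection of at most $n^2$ patterns (one per pair $(i,m) \in [n]^2$), and (ii) leaves the conditional law of $(Y, Z, X)$ identical to the unconditional law. The argument can then proceed exactly as in Theorem \ref{thm:single_node}, but with $\widetilde E_n^o$ in the role of $\widehat E_n$ and with the candidate chain for node $i$ running through the patterns $g_k = \gamma_0(G_i^{\widetilde Z^o}(k))$ in the synthetic labeling.

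Next, I would redo the concentration step. For each $g \in \widetilde E_n^o$ with $|V_g| \geq 1$, the estimator $\hat f(g) = |V_g|^{-1} \sum_{j \in V_g} Y_j$ is an average of outcomes at control units $j$ with $Z_j = 0$ whose pattern in $G^Z$ matches $g$. By Assumption \ref{assump:outc_mod}, $Y_j = f(\gamma(G_j^Z)) + \epsilon_j$ with $(\epsilon_j)$ conditionally independent and $\bar\sigma^2$-sub-Gaussian, so Hoeffding-type bounds applied conditionally on $(Z, \widetilde Z^o)$ yield $|\hat f(g) - |V_g|^{-1} \sum_{j \in V_g} f(\gamma(G_j^Z))| \leq \bar\sigma \sqrt{2 \log(2n^2/\delta)/|V_g|}$ with probability $1 - \delta/n^2$; a union bound over the at most $n^2$ patterns of $\widetilde E_n^o$ gives a joint event of probability $1 - \delta$.

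To convert this into the stated oracle inequality, I would combine the above high-probability event with the selection rule of Step 2, exactly as in Theorem \ref{thm:single_node}. The only substantive change is to the bias term: the candidate chain for node $i$ now lives in $\widetilde E_n^o$, so the relevant approximation error at depth $k$ becomes $r^{\widetilde Z^o}(g_k)$, the maximum variation of $f(\gamma(G_j^{\widetilde Z^o}))$ over $j \in \bar V_{g_k}$ per Definition \ref{bias_term} applied under the synthetic labeling. The nested matching property from Assumption \ref{Neigh_inter} continues to guarantee that descendants of $g_k$ are themselves compatible with $g_k$, so the telescoping comparison used in Theorem \ref{thm:single_node} carries over verbatim and yields the claimed bound; when the minimum is attained at a $g_k$ with $|V_{g_k}| = 0$ the variance term is vacuous and a coarser $k$ would be selected anyway.

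The one technical point I would need to verify carefully is that sub-Gaussian concentration survives the data-dependence of $V_g$ (a function of $Z$). This is handled by conditioning on $(Z, \widetilde Z^o)$: given $Z$, $V_g$ is deterministic and the $\epsilon_j$ remain independent sub-Gaussian by Assumption \ref{assump:outc_mod}(i). This is, indeed, the only place where the decoupling engineered by Algorithm 2 is invoked, and it is precisely what makes the statement a direct corollary of Theorem \ref{thm:single_node} rather than requiring a fresh argument that keeps track of the joint dependence between the tree of patterns and the outcomes.
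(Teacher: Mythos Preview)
Your overall strategy—condition on $\widetilde Z^o$ so that $\widetilde E_n^o$ becomes a fixed collection of at most $n^2$ patterns while the law of $(Y,Z,X)$ is unchanged, then rerun Proposition~\ref{Good_event} and the three-case argument of Theorem~\ref{thm:single_node}—is exactly the ``direct corollary'' route the paper invokes; the paper gives no separate proof.

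There is, however, one genuine gap in your write-up. You describe the bias term as ``the maximum variation of $f(\gamma(G_j^{\widetilde Z^o}))$ over $j\in\bar V_{g_k}$,'' i.e.\ you apply Definition~\ref{bias_term} with the synthetic labeling plugged in everywhere. But the bias you actually have to control is
\[
\Bigl|\,f(\gamma(G_i^{Z})) - \frac{1}{|V_{g_k}|}\sum_{j\in V_{g_k}} f(\gamma(G_j^{Z}))\,\Bigr|,
\]
since the outcomes at control units are $Y_j=f(\gamma(G_j^{Z}))+\epsilon_j$ under the \emph{real} assignment $Z$, and the matching set $V_{g_k}$ in the corollary is itself defined through $Z$. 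The synthetic interference values $f(\gamma(G_j^{\widetilde Z^o}))$ never enter the algorithm or the outcomes, so a supremum over them does not bound the displayed quantity. The role of $\widetilde Z^o$ in Algorithm~2 is only to supply the initial tree $\widetilde E_n^o$; Steps~1--3 of Algorithm~1 (including the chain $\hat g_i=\gamma_0(G_i^{Z}(\hat m_i))$, the sets $V_g$, and hence the bias) remain real-$Z$ objects. Your claim that ``the telescoping comparison carries over verbatim'' is therefore correct only if the approximation error is read as the real-$Z$ variation at the candidate pattern—and this is indeed how the paper uses it downstream (Assumption~\ref{assum:DR}(ii) and Lemma~\ref{lemma:inclusion} both invoke $r^Z$, not $r^{\widetilde Z^o}$). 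You should make that explicit rather than appealing to $f(\gamma(G_j^{\widetilde Z^o}))$.
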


Corollary \ref{cor:second_context_tree} has the patterns generated with $\widetilde{E}_n^o$ instead of $\widehat{E}_n$ of Algorithm 1. By resampling the treatment assignments, Algorithm 2 decouples the generation of features from the pruning of the tree. The pruning steps (i.e., Steps 1 and 2) are based on the original assignments and outcomes (since we do not have synthetic outcomes). Corollary \ref{cor:second_context_tree} does not provide improvements in the oracle inequality of Theorem \ref{thm:single_node}. The advantage of using the estimator based on Algorithm 2 is to reduce dependence within the error and feature engineering that appear in some estimators of the ADTT.  

\subsection{Distributional results for the AIPW estimator}\label{sec:MainATT}

Next, we proceed to construct guarantees for the AIPW estimator for the ADTT. Recall that we apply Algorithm 2 to construct the initial set of patterns for Algorithm 1 (that is, to replace Step 0). We use the resulting estimator of the interference function in (\ref{def:tauDR}). Our analysis will leverage the independence of the assignment of treatment between units.  Next, we state our conditions for the estimation of nuisance functions and their estimability.

\begin{assumption}\label{assum:DR} Let $\delta_n (\geq n^{-2}), n\geq 1,$ be a fixed sequence converging to zero. Suppose\\
(i) the propensity score estimator $\hat e_i:=\hat e(X_i) \leq 1-\tilde c$ belongs to a function class $\mathcal{E}$ that is a VC-type class with $\varepsilon$-covering number bounded by $(A/\varepsilon)^{\bar v}$, such that with probability $1-\Delta_n$ we have
\mbox{$\frac{1}{n}\sum_{i=1}^n\left(\frac{e_i}{1-e_i}-\frac{\hat e_i}{1-\hat e_i}\right)^2 \leq  C \frac{\bar v\log n}{n} \leq C\frac{\delta_n^2}{\bar v \log n} $}

\noindent (ii) With probability $1-\Delta_n$ there is an oracle trimming $ \mathcal{K}^* \subseteq \widetilde E_n^o$  such that associated estimator $\hat f^* $ satisfies 
$$ 
\max_{g\in \mathcal{K}^*} r^Z(g) \leq \bar \sigma \delta_n n^{-1/2} \ \ \ \mbox{and} \ \   \max_{i \in [n]} \mathbb{E}[ (f_i - \hat f_i^*)^2 \mid Z ]   \leq  \bar \sigma^2 \frac{\delta_n^2 n^{-1/2} \log^{-1} n}{1+D_n}$$ 
where  $D_n = \max_{i\in [n]} | \{ k \in [n] : i \ \ \mbox{is a vertex in } G_k^z(m_k^*) \} | $ 
where $m_k^*$ is the number of hops for the $k$th node associated oracle trimming $\mathcal{K}^*$. 
\end{assumption} 
Assumption \ref{assum:DR}(i) places mild conditions on the propensity score estimator. Because treatments are assumed to be assigned independently conditioned on observables, the estimation of the propensity score is standard.  Assumption \ref{assum:DR}(ii) is a condition on the randomization of the assignments relative to the interference patterns. It assumes the existence of (an unknown and possibly random oracle) $\mathcal{K}^*$ within $\widetilde E_n^o$ that produces a good approximation error and the associated estimator $\hat f^*$ has a minimum convergence rate to estimate the interference function (that is, slightly faster than $n^{-1/4}$). The quantity $D_n$ bounds the number of egos whose neighborhoods are affected by a single node (i.e., bounds the number of changes in $\{\hat f_j\}_{j\in [n]}$ if one treatment is changed). Although it is plausible for $D_n$ to be bounded (e.g., when $m_i^*$ is uniformly bounded and nodes have bounded degree), it is allowed to grow with the sample size. The results are with high probability on the realization of the treatment assignments ($Z$ and $\widetilde Z$). Thus our condition (and analysis) allows for $\mathcal{K}^*$ to depend on the treatment assignments. 

We show how these conditions can be used with the orthogonal moment condition implied by the definition of (\ref{def:tauDR}). The analysis has new features to deal with the dependence due to interference, and the feature engineering of the proposed approach. We note that as in other context for high dimensional causal inference \citep{belloni2014inference,chernozhukov2017double}, sample-splitting can be quite helpful in reducing overfitting. However, attempting a sample splitting approach is more complicated due to the network dependence which is also being estimated. We bypass this issue by proposing Algorithm 2 that 
helps
achieve independence from the feature engineering process and the interference estimation. 

Next we establish the main result of this section for the estimation of the ADTT. The asymptotic normality result is obtained by relating our proposed AIPW estimator $\hat\tau^{AIPW}$  in (\ref{def:tauDR})  to the (unfeasible) estimator  $\hat\tau^{AIPW*}$ that knows the true interference function and the propensity score function, i.e., defined as in (\ref{def:tauDR}) with $\hat f_i=f_i$ and $\hat e_i = e_i$, $i\in [n]$.
\begin{theorem}
\label{thm:dra_tau_inf}
Under Assumptions \ref{cond_unconf}, \ref{ass:cond_ind}, \ref{assump:outc_mod} and \ref{assum:DR},  with probability at least $1-\delta-\Delta_n$ we have that $|\hat\tau^{AIPW} - \hat\tau^{AIPW*}| \leq C\bar\sigma \delta_n n^{-1/2}$ so that 
$$\mbox{$\sqrt{\sum_{i=1}^nZ_i}$} \ \  \Sigma_n^{-1/2}(\hat{\tau}^{AIPW}-\tau) \to N(0,1)$$
where  $\Sigma_n := \frac{1}{\sum_{i=1}^nZ_i}\sum_{i=1}^n \mathbb{E}\left[\{ Z_i(Y_i-\tau_i -f_i) - (Y_i-f_i)(1-Z_i)e_i/(1-e_i)\}^2 \mid Z_i,X_i \right]$. 
\end{theorem}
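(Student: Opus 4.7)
I would split the proof into (i) a deterministic bound $|\hat\tau^{DR}-\hat\tau^{DR*}|\le C\bar\sigma\delta_n n^{-1/2}$ with probability $1-\delta-\Delta_n$, and (ii) a CLT for the infeasible oracle $\hat\tau^{DR*}$; Slutsky's theorem then combines them. Throughout I would use $\sum_iZ_i\asymp n$ via a Bernstein bound on the independent Bernoullis (Assumption~\ref{ass:cond_ind}).

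\textbf{CLT for the oracle.} Substituting $Y_i=Z_i\tau_i+f_i+\epsilon_i$ into $\hat\tau^{DR*}$ and using $Z_i(1-Z_i)=0$ together with $\sum_iZ_i\tau_i=\tau\sum_iZ_i$, the oracle residual telescopes to
\[
\hat\tau^{DR*}-\tau \;=\; \frac{1}{\sum_iZ_i}\sum_i\Bigl[Z_i\epsilon_i-(1-Z_i)\tfrac{e_i}{1-e_i}\epsilon_i\Bigr].
\]
By Assumptions~\ref{cond_unconf}, \ref{ass:cond_ind} and \ref{assump:outc_mod}, conditional on $X$ the summands $\eta_i$ are independent and mean zero, with conditional variance averaging to $\Sigma_n$. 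Subgaussianity of $\epsilon_i$ (parameter $\bar\sigma^2$) and the lower second moment $\underline\sigma^2$ verify Lyapunov's condition, so the oracle CLT $\Sigma_n^{-1/2}\sqrt{\sum_iZ_i}(\hat\tau^{DR*}-\tau)\to N(0,1)$ follows from the triangular-array CLT.

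\textbf{Neyman-orthogonal decomposition of the difference.} Writing $w_i=e_i/(1-e_i)$ and $\hat w_i=\hat e_i/(1-\hat e_i)$, adding and subtracting $f_i$ and $w_i$ inside $\hat\tau^{DR}$ yields
\[
\hat\tau^{DR}-\hat\tau^{DR*} \;=\; \frac{1}{\sum_iZ_i}(I+II+III),
\]
with $I=\sum_i(f_i-\hat f_i)[Z_i-(1-Z_i)w_i]$, $II=\sum_i\epsilon_i(1-Z_i)(w_i-\hat w_i)$, and $III=\sum_i(1-Z_i)(\hat f_i-f_i)(w_i-\hat w_i)$. For $III$, Cauchy--Schwarz with Assumption~\ref{assum:DR} gives $|III|/n\le\|\hat f-f\|_n\|\hat w-w\|_n = o_p(n^{-1/2})$. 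For $II$, $\epsilon_i$ is conditionally mean zero and independent of $(X,Z,\hat e)$, so a maximal inequality over the VC-type class $\mathcal E$ combined with the $L_2$ rate on $\hat w-w$ in Assumption~\ref{assum:DR}(i) delivers $|II|=o_p(n^{1/2})$.

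\textbf{Main obstacle.} The delicate piece is $I$: because $\hat f_i$ is estimated from the same $Z$ that appears in the weight $\xi_i:=Z_i-(1-Z_i)w_i$, its summands are not conditionally independent. This is exactly why Algorithm~2 is used: the feature tree $\widetilde E_n^o$ is produced from the synthetic assignment $\widetilde Z^o$, which is independent of $(Y,Z)$, so one can substitute the oracle $\hat f^*$ associated with $\mathcal K^*\subset\widetilde E_n^o$, controlling $|\hat f-\hat f^*|$ via the oracle inequality of Corollary~\ref{cor:second_context_tree}. For the remaining oracle piece $\sum_i(f_i-\hat f_i^*)\xi_i$, I would condition on $(X,\widetilde Z^o)$ and apply an Efron--Stein/McDiarmid argument: flipping a single $Z_k$ perturbs at most $1+D_n$ of the $\hat f_j^*$'s by the definition of $D_n$, so the conditional variance of this sum is at most $(1+D_n)\cdot n\cdot\max_i\mathbb E[(f_i-\hat f_i^*)^2\mid Z]$, which is $o(n)$ by Assumption~\ref{assum:DR}(ii)---the $(1+D_n)^{-1}$ factor built into that assumption is precisely what absorbs the inflation from dependence. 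Hence $|I|=o_p(n^{1/2})$; combining with the bounds on $II$ and $III$ gives $|\hat\tau^{DR}-\hat\tau^{DR*}|=O_p(\bar\sigma\delta_n n^{-1/2})$, and Slutsky's theorem upgrades the oracle CLT to the feasible one.
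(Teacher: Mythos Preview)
Your overall architecture matches the paper: the same three–term decomposition $I,II,III$, the same handling of $II$ via a VC maximal inequality and of $III$ via Cauchy--Schwarz, and the same oracle CLT plus Slutsky. The gap is in your treatment of $I$.

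Your plan is to write $I=\sum_i(f_i-\hat f_i^*)\xi_i+\sum_i(\hat f_i^*-\hat f_i)\xi_i$ and control the second sum by the pointwise oracle inequality. But Corollary~\ref{cor:second_context_tree} together with Assumption~\ref{assum:DR}(ii) only yields $\max_i|\hat f_i-\hat f_i^*|\lesssim \delta_n n^{-1/4}$ (up to logs), so a crude bound gives $|\sum_i(\hat f_i^*-\hat f_i)\xi_i|\lesssim \delta_n n^{3/4}$, which after division by $N_1\asymp n$ is $\delta_n n^{-1/4}\gg \delta_n n^{-1/2}$. You cannot recover the lost factor without again exploiting the orthogonality $\mathbb E[\xi_i\mid X_i]=0$, and then you are back to the original difficulty because $\hat f_i-\hat f_i^*$ depends on $Z$ in a data-driven way. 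Your direct McDiarmid on $\sum_i(f_i-\hat f_i^*)\xi_i$ is also incomplete: $\hat f_i^*$ is an average of $Y_j$'s, hence depends on $\epsilon$ as well as on $Z$, and when one $Z_k$ is flipped the $Y_j$'s inside those averages move through interference, so the bounded-difference constant is not simply ``$(1+D_n)$ terms change.''

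What the paper does instead is first replace $f_i$ by the (non-random, given the oracle pattern) $f_i^*$ at cost $\max_g r^Z(g)\le\bar\sigma\delta_n n^{-1/2}$, and then bound $\sum_i(f_i^*-\hat f_i)\xi_i$ by a union bound over \emph{deterministic} pairs $(\tilde f^*,\tilde f)$ indexed by context trees $\tilde\gamma^*,\tilde\gamma$ of size at most $s$ (with $\|\tilde f^*-\tilde f\|_\infty\le r$), applying McDiarmid separately for each fixed pair. Two lemmas you did not invoke are essential here: Lemma~\ref{lemma:inclusion} gives $\hat\gamma\preceq\gamma^*$, and Lemma~\ref{lemma:sparsity_gamma_star} turns Assumption~\ref{assum:DR}(ii) into the sparsity bound $\hat s\le s\le n\underline\sigma^{-2}\,\mathbb E[\|f-\hat f^*\|_n^2\mid Z]$, which makes the covering numbers $(Cn)^{2s}$ and $|\mathcal T_s|$ small enough. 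The decoupling via $\widetilde Z^o$ then enters not to ``fix $\hat f^*$'' but to factor $\mathbb P(\tilde\gamma,\tilde\gamma^*\in\widetilde E_n^o)$ out of the union bound, because $\widetilde E_n^o\perp Z$; this is what controls $\sum_{\tilde\gamma,\tilde\gamma^*}\mathbb P(\tilde\gamma,\tilde\gamma^*\in\widetilde E_n^o)\le n^{4s}$. Without this discretize-then-union-bound step, the McDiarmid constant you need does not emerge.
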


Theorem \ref{thm:dra_tau_inf} establishes the first-order equivalence between the proposed estimator $\hat\tau^{AIPW}$ and the infeasible AIPW estimator that knows the true interference function.  In contrast to the outcome regression estimator, the AIPW estimator explicitly exploits the independence of the assignment of treatment between units. This allows us to obtain the so-called Neyman orthogonality and reduce the first-order bias. The analysis further takes into account model selection and feature engineering. This relies on the derivation of sharp bounds conditioned on observing the neighborhood patterns. This is needed to enhance the adaptivity of the approach that does not require the convergence to a deterministic set of patterns.  The variance $\Sigma_n$ is needed for the construction of confidence regions. Corollary \ref{cor:var} provides guidance on its estimation.

\begin{corollary}\label{cor:var}
Suppose that the conditions of Theorem \ref{thm:dra_tau_inf} hold and that $\tau_i=\tau(X_i)$. Then we have that confidence intervals based on
$$ \widetilde{\Sigma}_n := \frac{1}{\mbox{$\sum_{i=1}^nZ_i$}}\sum_{i=1}^n  \{ Z_i(Y_i-\hat \tau^{AIPW}-\hat f_i) - (Y_i-\hat f_i)(1-Z_i)\hat e_i/(1-\hat e_i)\}^2$$
are asymptotically conservative, namely $\Sigma_n \leq \widetilde{\Sigma}_n + o(1)$. Moreover, the result of Theorem \ref{thm:dra_tau_inf} continues to hold with a plug-in estimate $$\widehat  \Sigma_n := \frac{1}{\mbox{$\sum_{i=1}^nZ_i$}}\sum_{i=1}^n  \{ Z_i(Y_i-\hat \tau_i -\hat f_i) - (Y_i-\hat f_i)(1-Z_i)\hat e_i/(1-\hat e_i)\}^2  $$ where $\hat \tau_i=\hat\tau(X_i)$ is an estimator of $\tau_i=\tau(X_i)$, such that $\frac{1}{\sum_{i=1}^nZ_i}\sum_{i=1}^n Z_i(\hat\tau_i-\tau_i)^2 \leq \delta_n^2$ with probability $1-o(1)$.
\end{corollary}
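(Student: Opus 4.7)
The plan is to analyze the sampling counterpart $\widetilde{\Sigma}_n$ by writing, for each $i$, $\widetilde{S}_i := Z_i(Y_i - \hat\tau^{DR} - \hat f_i) - (Y_i - \hat f_i)(1-Z_i)\hat e_i/(1-\hat e_i)$ and its oracle analogue $S_i := Z_i(Y_i - \tau_i - f_i) - (Y_i - f_i)(1-Z_i) e_i/(1- e_i)$, and then decomposing
$$ \widetilde\Sigma_n - \Sigma_n \ = \ \underbrace{\frac{1}{\sum Z_i}\sum_i \bigl(S_i^2 - \mathbb{E}[S_i^2 \mid Z_i,X_i]\bigr)}_{\text{(I)}} \ + \ \underbrace{\frac{1}{\sum Z_i}\sum_i (\widetilde S_i^2 - S_i^2)}_{\text{(II)}}.$$
Since $Y_i = Z_i\tau_i + f_i + \epsilon_i$, one computes $S_i^2 = Z_i\epsilon_i^2 + (1-Z_i)\epsilon_i^2 e_i^2/(1-e_i)^2$, which is a sum of independent (conditional on $Z,X$) bounded/subgaussian random variables. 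Term (I) is then $o_p(1)$ by a standard Hoeffding/Bernstein bound using Assumption \ref{assump:outc_mod}(i)--(ii) and the fact that $e_i/(1-e_i)$ is bounded (Assumption \ref{ass:cond_ind}).

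For term (II), write $\widetilde S_i^2 - S_i^2 = (\widetilde S_i - S_i)^2 + 2 S_i(\widetilde S_i - S_i)$ and split into treated and control contributions. On $\{Z_i=1\}$, $\widetilde S_i - S_i = (\tau_i - \hat\tau^{DR}) + (f_i - \hat f_i)$. On $\{Z_i = 0\}$, $\widetilde S_i - S_i = -(Y_i - f_i)\bigl[\hat e_i/(1-\hat e_i) - e_i/(1-e_i)\bigr] - (f_i - \hat f_i)\hat e_i/(1-\hat e_i)$. Using Assumption \ref{assum:DR}(i) for the $\hat e$-error, Corollary \ref{cor:second_context_tree} together with Assumption \ref{assum:DR}(ii) for the $\hat f$-error, and Theorem \ref{thm:dra_tau_inf} so that $\hat\tau^{DR} \to \tau = (\sum Z_i)^{-1}\sum Z_i\tau_i$ at rate $n^{-1/2}$, all cross terms $S_i(\widetilde S_i - S_i)$ average to $o_p(1)$ (the $\epsilon_i$ factors are mean zero and the remaining random perturbations have shrinking norm). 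The quadratic contributions of $\hat f_i - f_i$ and of $\hat e_i - e_i$ likewise average to $o_p(1)$. The only residual term is the treated-side quadratic,
$$ \frac{1}{\sum Z_i}\sum_i Z_i(\tau_i - \hat\tau^{DR})^2 \ = \ \frac{1}{\sum Z_i}\sum_i Z_i(\tau_i - \tau)^2 + o_p(1) \ \geq \ 0 + o_p(1), $$
where we used $\hat\tau^{DR} - \tau = o_p(1)$ and boundedness of $\tau_i$. Thus $\widetilde\Sigma_n \geq \Sigma_n - o_p(1)$, yielding conservativeness; the construction of confidence intervals using $\widetilde\Sigma_n$ therefore controls coverage at least at the nominal level.

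For the non-conservative plug-in $\widehat\Sigma_n$, I repeat the decomposition with $\hat\tau_i$ replacing $\hat\tau^{DR}$ in the treated piece. The treated-side quadratic now becomes $(\sum Z_i)^{-1}\sum_i Z_i(\tau_i - \hat\tau_i)^2$, which is bounded by $\delta_n^2 = o(1)$ by the assumed consistency of $\hat\tau(\cdot)$; the cross term $2\,(\sum Z_i)^{-1}\sum_i Z_i \epsilon_i(\tau_i - \hat\tau_i)$ is handled by Cauchy--Schwarz combined with $n^{-1}\sum \epsilon_i^2 = O_p(1)$ and the same $\delta_n^2$ bound. All other steps carry over verbatim, yielding $\widehat\Sigma_n = \Sigma_n + o_p(1)$ and hence the asymptotic normality statement of Theorem \ref{thm:dra_tau_inf} with $\widehat\Sigma_n$ in place of $\Sigma_n$ by Slutsky.

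The main obstacle is controlling term (II) in the presence of network dependence among the $\hat f_i$: the estimator $\hat f_i$ is built from all control outcomes and interacts with many nodes through the learned pattern tree. This is precisely where the decoupling via Algorithm 2 matters, since Assumption \ref{assum:DR}(ii) gives a pointwise $L_2$-rate $\mathbb{E}[(f_i - \hat f_i^*)^2 \mid Z] \lesssim \bar\sigma^2\delta_n^2 n^{-1/2}(\log n)^{-1}/(1+D_n)$ whose averaged version $n^{-1}\sum(\hat f_i - f_i)^2$ can be bounded by Markov's inequality uniformly, and the $(1+D_n)$ denominator absorbs the overlap in the feature neighborhoods when bounding products like $\epsilon_i(\hat f_i - f_i)$ via a dependency-graph concentration argument analogous to the one used in the proof of Theorem \ref{thm:dra_tau_inf}.
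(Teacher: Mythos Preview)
Your decomposition is correct and lands on the same key observation as the paper: after stripping away the $o_p(1)$ contributions from $\hat f_i-f_i$ and $\hat e_i-e_i$, the only surviving discrepancy between $\widetilde\Sigma_n$ and $\Sigma_n$ is the nonnegative term $\frac{1}{\sum Z_i}\sum_i Z_i(\tau_i-\tau)^2$, which yields conservativeness, and replacing $\hat\tau^{DR}$ by a consistent $\hat\tau_i$ kills that term. The paper's proof differs only cosmetically: it works at the level of $\Sigma_n^{1/2}$ rather than $\Sigma_n$, using the reverse triangle inequality $\bigl|\|\tilde\psi\|_{2,n}-\|\tilde\psi^*\|_{2,n}\bigr|\le\|\tilde\psi-\tilde\psi^*\|_{2,n}$ with $\tilde\psi_i^*=\psi_i+Z_i(\tau_i-\hat\tau^{DR})$, and then bounds $\|\tilde\psi-\tilde\psi^*\|_{2,n}$ directly by the $L_2$ rates for $\hat f-f$ (Lemma~\ref{lemma:sparsity_gamma_star}) and $\hat e-e$ (Assumption~\ref{assum:DR}(i)). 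Your expansion $\widetilde S_i^2-S_i^2=(\widetilde S_i-S_i)^2+2S_i(\widetilde S_i-S_i)$ is an equivalent route that produces one extra cross term to control.

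The one place you overshoot is the final paragraph. The cross terms of the form $\frac{1}{\sum Z_i}\sum_i Z_i\epsilon_i(f_i-\hat f_i)$ or $\frac{1}{\sum Z_i}\sum_i (1-Z_i)\epsilon_i\frac{e_i}{1-e_i}(f_i-\hat f_i)$ do not require any dependency-graph or McDiarmid-type concentration here: Cauchy--Schwarz gives
\[
\Bigl|\tfrac{1}{N_1}\textstyle\sum_i S_i(\widetilde S_i-S_i)\Bigr|\ \le\ \Bigl(\tfrac{1}{N_1}\textstyle\sum_i S_i^2\Bigr)^{1/2}\Bigl(\tfrac{1}{N_1}\textstyle\sum_i(\widetilde S_i-S_i)^2\Bigr)^{1/2}=O_p(1)\cdot o_p(1),
\]
using only the empirical $L_2$ bound $\frac{1}{n}\sum_i(f_i-\hat f_i)^2=o_p(1)$ from Lemma~\ref{lemma:sparsity_gamma_star} and $\max_i(1-Z_i)|Y_i-f_i|\le C\bar\sigma\sqrt{\log n}$. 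This is exactly how the paper disposes of these terms; no appeal to the $D_n$ mechanism or to the argument in Step~4 of Theorem~\ref{thm:dra_tau_inf} is needed for the variance estimate.
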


Without additional assumptions on the shocks and on the individual treatment effects we can still construct conservative confidence regions based on $\widetilde{\Sigma}_n$. Under the condition that the individual treatment effects can be approximated by an estimable function of the covariates, we can obtain consistent estimates for $\Sigma_n$ and obtain asymptotically correct coverage. In practice, we recommend to pursue the estimation of $\tau_i=\tau(X_i)=\mathbb{E}[Y_i-f_i\mid X_i]$ using the ``data'' $\{(Y_i-\hat f_i, X_i)\}_{\{i \in [n]:Z_i=1\}}$ by a predictive machine learning model, e.g., random forest. Since only a slow rate of convergence is needed, the condition on the estimators $\hat \tau_i$ in Corollary \ref{cor:var} is plausible despite of the use of (consistent) estimates $\hat f_i$ of the interference function.

Next we consider randomization over the treatment which alters the estimand  
and the construction of the confidence interval. 
It follows that the proof of Theorem \ref{thm:dra_tau_inf} can be used to establish an asymptotic normality result considering the treatment assignment as random.

\begin{proposition}
\label{clt_delta}
Under Assumptions \ref{cond_unconf}, \ref{ass:cond_ind}, \ref{assump:outc_mod}, and \ref{assum:DR} 
we have:
\begin{equation*}
    \sqrt{n} \ (\Sigma^D_n)^{-1/2} ( \hat{\tau}^{AIPW} - \mathbb{E}_Z (\tau) ) \xrightarrow{d} N(0,1),
\end{equation*}
where $\bar e := \frac{1}{n} \sum_{i=1}^n e_i $, \(\Gamma_n  := \sum_{i=1}^n \left[ Z_i(Y_i - \tau_i - f_i) - \frac{e_i}{1 - e_i} (1 - Z_i) (Y_i - f_i) \right]^2\), and 
\[
\Sigma^D_n = \frac{1}{n\bar e^2}\mathbb{E}_{Z,Y} [\Gamma_n] + \frac{1}{n^3}\sum_{i=1}^n \sum_{j=1}^n \sum_{h=1}^n \frac{e_i (1 - e_i) e_j e_h }{ \bar e^4}(\tau_i - \tau_j)(\tau_i - \tau_h).
\]
 \end{proposition}

Proposition \ref{clt_delta} establishes the limiting distribution with the new centering and a correction on the standard error due to the randomness of the treatment. The next proposition discusses the estimation of the relevant variance. 

\begin{proposition}
\label{cons_var_epsilon_Z}
Let \(\widetilde{\Sigma}^C_n = \left( \frac{1}{n} \sum_{i=1}^n Z_i\right)^{-1}  \widetilde{\Sigma}_n\), where \(\widetilde{\Sigma}_n\) is defined in Corollary \ref{cor:var}. Under Assumptions \ref{cond_unconf}, \ref{ass:cond_ind}, \ref{assump:outc_mod}, and \ref{assum:DR}, confidence intervals based on $\widetilde{\Sigma}^C_n$ are asymptotically conservative, namely with probability $1-o(1)$ we have $$\Sigma^D_n \leq \widetilde{\Sigma}^C_n  + o(1).$$
Moreover, for \(\delta_n\) as defined in Assumption \ref{assum:DR}, suppose that \(\frac{1}{n} \sum_{i=1}^n (\hat{\tau}_i - \tau_i)^2 \leq \delta_n^2\) 
holds with probability $1-o(1)$. Then Proposition \ref{clt_delta} continues to hold with a plug-in estimator for \(\Sigma^D_n\), with probability $1-o(1)$, given by:
\begin{equation*}
    \begin{split}
       \hat{\Sigma}^D_n & = \widehat \Sigma_n \frac{\mbox{$\frac{1}{n}\sum_{i=1}^nZ_i$}}{ \mbox{$ \left( \frac{1}{n}\sum_{i=1}^n \hat{e}_i \right)^2$}}  +\frac{1}{n^3}\sum_{i=1}^n \sum_{j=1}^n \sum_{h=1}^n  \frac{\hat{e}_i (1 - \hat{e}_i) \hat{e}_j \hat{e}_h }{ \mbox{$\left( \frac{1}{n}\sum_{i=1}^n \hat{e}_i \right)^4$}}(\hat{\tau}_i - \hat{\tau}_j) (\hat{\tau}_i - \hat{\tau}_h).
    \end{split}
\end{equation*}
\end{proposition}

Proposition \ref{cons_var_epsilon_Z} shows that $\widetilde{\Sigma}_n^C$ can be used as a conservative estimate of the variance even if we consider the randomness of the assignment of treatment. Moreover, under consistent estimates of the individual treatment effects and propensity scores, we can construct a consistent estimate for $\Sigma_n^D$.  

\subsection{Rates for the Outcome Regression Estimator}\label{sec:OutcomeRegression}

Next, we consider the outcome regression (OR) estimator \ref{def:tauOR} for (\ref{ATT}).  This estimator can be consistent without the assumption of independent treatment assignment assumption. For the OR estimator, we apply Algorithm 1 and use the resulting estimator of the interference function in (\ref{def:tauOR}). The next result leverages Theorem \ref{thm:single_node} to provide a guarantee for $\hat \tau^{OR}$. 

\begin{theorem}
\label{ora_tau_inf}
Let $\hat\tau^{OR}$ be defined as in (\ref{def:tauOR}) and the interference estimator be constructed based on Algorithm 1. 
Under Assumptions  \ref{cond_unconf} and \ref{assump:outc_mod},  with probability at least $1-2\delta-\exp(-n\tilde c)$, for some positive constants $\tilde c$ and $C$,  we have 
\begin{equation}
\label{ora_tau}
\begin{split}
 | \hat{\tau}^{OR}-\tau | \leq &  \frac{C\lambda}{n}  \sum_{i=1}^n Z_i \! \left[ \min_{ k \in [n] } \frac{r^Z(\gamma_0(G_i^Z(k)))}{\lambda-1}+\bar \sigma \sqrt{\frac{\log(n^2/\delta)}{|V_{\gamma_0(G_i^Z(k))}|}} 
 \ \right] + \bar\sigma \sqrt{\frac{4\log( 2/\delta)}{cn}} \\
\end{split}
\end{equation}
\end{theorem}

Theorem \ref{ora_tau_inf} is based on the oracle inequality of Theorem \ref{thm:single_node}. A consequence of the proof of Theorem \ref{ora_tau_inf} is that (for $\lambda = 2$) with probability $1-3\delta$
$$|\hat{\tau}^{OR}-\tau | \leq  C\max_{i\in [n]} \min_{k \in [n], g_k=\gamma_0(G_i^Z(k))} r^Z(g_k)+\bar \sigma \sqrt{\frac{\log(n/\delta)}{|V_{g_k}|}}  $$
which has the maximum rate for the interference function (moreover, note that the last term in Theorem \ref{ora_tau_inf} cannot determine the rate). However, Theorem \ref{ora_tau_inf} averages the adaptive rates across units. Thus, it allows some units to have a small misspecification while other units have a higher level of misspecification. The estimator $\hat\tau^{OR}$ is in general non-regular due to the regularization bias introduced by selecting the relevant $\hat m_i$-hop by balancing (an estimate of) bias and variance. The performance guarantees of $\hat \tau^{OR}$ are robust with respect to the joint dependence on the assignment of treatment between units.

\section{Experimental Performance of the AIPW Estimator}\label{sec:experiments}

We evaluated the performance of the estimator \(\hat{\tau}^{AIPW}\) under varying graph structures and interference scenarios.
Each simulation is based on a fixed sample size of \(n = 2000\) and are repeated \(M = 10,000\) times.
%
%
Two graph models are considered: Erd\H{o}s R\'{e}nyi (ER) model with independent edges, each having probability \(p = \frac{4}{n}\) and Barabási\-Albert (BA) model which follows a preferential attachment mechanism (with parameter \(\alpha = 1/2\)) where new nodes are more likely to connect to high degree existing nodes. The BA model produces sparser graphs with heavier tailed degree distributions than the ER model.

We consider a single confounder \(X_i = s_i(1)\), where \(s_i(1)\) is the size of the immediate neighborhood of individual \(i\). The treatment assignment mechanism is \(P(Z_i = 1 \mid X_i) = \frac{X_i}{C_0}\), where \(C_0 = (3/2) \max_{i \in [n]} s_i(1)\). This setup ensures that treatment probabilities are proportional to the size of an individual’s neighborhood. The direct treatment effect is $\tau(X_i) = X_i/2$ (meaning that there is greater heterogeneity in the direct treatment effect for BA graphs due to their degree distributions) and we consider two interference functions throughout the simulation: 
\begin{equation}
\label{int_exp_dec} 
\text{exponential decay (exp$(k)$): } f(X, \gamma(G^{Z}_{i})) = \mbox{$\sum_{l=1}^{k} (k-1)^{-l} \left| s^{Z}_{i}(l) - s^{Z}_{i}(l-1) \right|$},
\end{equation}
which decays exponentially on the differences between successive neighborhood sizes, and  
\begin{equation}
\label{int_long_dist}
\text{inverse proportion (inv): }        f\left(X, \gamma\left(G^{{Z}}_{i}\right)\right) = \Lambda(X, {Z})/\min\limits_{l \in \{1, \cdots, m_i\}} \{l : s_i^{{Z}}(l) > 0\}
\end{equation}
where interference is larger the closer unit $i$ is to a treated unit. Lastly, the additive noise in the simulations is \(\epsilon \overset{i.i.d.}{\sim} N(0, 1.5^2)\) and we set the maximum depth of $\widehat{E}_n$ as \(m_{max}\).


Eight scenarios are constructed to examine the interplay between graph structures and interference types. The combinations are labeled via (Graph Type, Interference Function, $m_{max}$), so for example (ER, exp$(n)$, $n$) means the underlying graph is an ER graph, interference has the exponential decay form of depth $n$, and $m_{max} = n$. When $m_{max} = n$ or is greater than the depth of the exponential decay, we apply the pruning rule from in Algorithm $1$. 
We let $\lambda = 1.01$ and use the synthetic treatments of Algorithm 2.

We report bias, as well as standard errors and coverage for both $\tau$ and $\tau_{ADET}$ in Table~\ref{tab_perf_AIPW_cov_synthetic_lambda_1_01}.  Standard errors for \(\hat{\tau}^{AIPW}(\cdot)\) are constructed using three formulations, reflecting varying degrees of randomness and levels of conservativeness: (1) \(\hat{se} = ( \widehat{\Sigma}_n/\sum_{i=1}^n Z_i)^{1/2}\), where \(\widehat{\Sigma}_n\) is a plug-in estimator of \(\Sigma_n\) from Theorem \ref{thm:dra_tau_inf}, as detailed in Corollary \ref{cor:var}; (2) \(\hat{se}_D := (\hat{\Sigma}^D_n/n)^{1/2}\), accounts for the randomness arising from \(\epsilon\) and \(Z\) by employing the plug-in estimator \(\hat{\Sigma}^D_n\), as defined in Proposition \ref{cons_var_epsilon_Z}, to estimate \({\Sigma}^D_n\) from Proposition \ref{clt_delta}; (3) 
\(\hat{se}_C := (\widetilde{\Sigma}_n/\sum_{i=1}^n Z_i)^{1/2}\), incorporates \(\widetilde{\Sigma}_n\) from Corollary \ref{cor:var} and Proposition \ref{cons_var_epsilon_Z} to construct a conservative estimate of the uncertainty for both \(\tau\) and \(\tau_{ADET}\). Throughout the simulation, \(\hat{se}\) and \(\hat{se}_D\) rely on regression estimates of the individual treatment effects.

\begin{small}
\begin{table}[t]
\centering
\caption{Performance of the estimator \(\hat{\tau}^{AIPW} \) using \textbf{synthetic} treatments ($\lambda = 1.01$). 
}
\label{tab_perf_AIPW_cov_synthetic_lambda_1_01} 
\begin{tabular}{c|cc|ccc|cc|cc}
  \hline
 & & 
 &  & &  & \multicolumn{2}{p{2 cm}}{Coverage of $\tau$}   & \multicolumn{2}{|p{3 cm}}{Coverage of $\tau_{ADET}$}  \\ 
  \hline
 & $\tau_{ADET}$ & 
 bias & $\hat{se}$  & $\hat{se}_D$ & $\hat{se}_C$ & $\hat{se}$ & $\hat{se}_C$  & $\hat{se}_D$ & $\hat{se}_C$ \\ 
  \hline
(ER,exp$(n)$,$n$) & 2.480 
& -0.018 & 0.105 & 0.126 &  0.133 &  
0.953  & 0.987 & 0.970 & 0.978 \\ 
(BA,inv,$n$) & 1.679 
& -0.018 & 0.153  & 0.176  & 0.187 & 
0.958 & 0.988 & 0.955  & 0.968\\
  (ER,inv,$n$) &  2.480 
  & 0.002 & 0.091            & 0.102  & 0.105 &   
  0.950  & 0.974  & 0.955  &  0.961 \\ 
  (BA,exp$(n)$,$n$) & 1.679 
  & -0.035 & 0.145 &  0.172 & 0.184 & 
  0.949 & 0.988 &  0.958 & 0.972\\ 
  (ER,exp$(3)$,$10$) & 2.480 
  & -0.021 & 0.100 & 0.117 & 0.123 &  
  0.949 & 0.982  & 0.963 & 0.971 \\ 
  (ER,exp$(3)$,$3$) &  2.480 
  & -0.011 & 0.097 & 0.111 & 0.115 & 
  0.949 & 0.978 & 0.956 & 0.963 \\ 
  (BA,exp$(3)$,$10$) &  1.679 
  & -0.053  & 0.152 &   0.181  &  0.194  &    
  0.949    & 0.989 &  0.960  & 0.972 \\
  (BA,exp$(3)$,$3$) & 1.679 
  & -0.021  & 0.145 & 0.170 &   0.181  & 
  0.949 & 0.987 &  0.957  & 0.967 \\  
   \hline
\end{tabular}
\end{table}
\end{small}

The simulations empirically demonstrate the relationship between the standard errors, with \(\hat{se} < \hat{se}_C\) and \(\hat{se}_D < \hat{se}_C\). Throughout the simulations, the greater heterogeneity in the degrees of the BA graphs leads to greater heterogeneity in the individual treatment effects, which in turn makes  $\hat{se}_C$ be a substantially more conservative estimator (this is seen through the substantial overcoverage for those graphs). As established in the theory, $\hat{se}$ is the correct standard error for $\tau$, as demonstrated in proper nominal coverage across simulation settings, and similarly $\hat{se}_D$ that incorporates the randomness in $Z$ is the correct standard error for $\tau_{ADET}$ producing proper nominal coverage across those simulations.

\section{Real data application}
\label{sec_sin_out_pattern_0_1_2}
We reanalize the results of the field experiment from 
\cite{aronow2017estimating} and \cite{paluck2016changing}, which investigated the effects of anti-conflict interventions on school conflict climates. The experiment involved 28 schools and in 
each school, 15\% of students (capped at 64) were identified as eligible for a bi-weekly anti-conflict training intervention based on gender and grade. Half of eligible students were randomly assigned to this
intervention. 
At the end of the school year, various anti-conflict norms were measured, including the self-reported wearing of orange wristbands, which were rewards for anti-conflict behaviors. 
Previous work identified this measure as particularly salient for assessing the intervention's efficacy. Social networks were constructed based on students’ nominations of up to 10 peers they spent time with
forming an undirected graph where an edge exists if either student nominated the other.

We utilize several estimators, each accompanied by valid variance estimators. For \(\hat{\tau}^{AIPW}\), propensity scores are estimated using the covariates ``grade" and ``gender," consistent with the experimental design in \cite{paluck2016changing}. The patterns we considered are the number of extra treated units equal to $0$, $1$ or larger than $1$ by increasing the radius of unit's neighbors. For example, $(0,1, \geq 2)$ refers to the set of units where the number of treated units at $0$-th distance neighbor is $0$, the number of treated units at $1$-st distance neighbor is equal to $1$, the number of treated units at $2$-nd distance neighbor is larger or equal to $2$. We estimate $\hat{\tau}^{AIPW}$ under penalty parameter \(\lambda = 1.01\) respectively.\footnote{Since \(\bar{\sigma}\) in Algorithm 1 is unknown, we estimate the variance of each pattern \(g\) by retaining nodes in \(\widehat{E}_n\) with at least five observations. The upper bound in Step 2 of Algorithm 1 is modified to incorporate these variance as:  
\(
|\hat{f}(g^\prime) - \hat{f}(g^{\prime \prime})| \geq \lambda \sqrt{2\log(2n^2/\delta)} \left({\sigma(g^\prime)}/{\sqrt{|V_{g^\prime}|}} + {\sigma(g^{\prime \prime})}/{\sqrt{|V_{g^{\prime \prime}}|}}\right),
\)
where \(\sigma(g)\) is estimated by sample variance for units with pattern \(g\), defined as \(\sigma_g =({|V_g| - 1})^{-\frac{1}{2}} \cdot \left( \sum_{i \in V_g} (Y_i - \bar{Y}_g)^2\right)^{\frac{1}{2}} \).} The maximum tree depth before pruning is set to \(m_{max} = 4\). 

We also consider the H\'{a}jek estimators for the treated population which are compatible with the exposure mapping introduced in \cite{aronow2017estimating}: $D_i = d_{ab}$ where $a$ is 0 if unit $i$ is not treated and 1 otherwise and $b$ is 0 if unit $i$ does not have any treated neighbors, and 1 otherwise. We let $\hat{\tau}^{HJ}(d_{1j}, d_{0j})$ be the H\'{a}jek estimator contrasting exposure $d_{1j}$ with $d_{0j}$ for $j\in\{0,1\}$.
For comparison, we also include \(\hat{\tau}^{NV}\), which does not adjust for confounding or interference. 
We derive standard error estimators under the two scenarios that the estimand is $\tau$ (written as \(\hat{se}(rs)\)) and $\tau_{ADET}$ (written as \(\hat{se}(rs)\)). When estimating $\tau_{ADET}$ the standard error estimators are constructed by modifying the HAC variance estimators utilized in \cite{wang2024designbasedinferencespatialexperiments} and applying the law of total variance to account for the joint distribution of treatments and shocks.
For \(\hat{\tau}^{AIPW}\), these are given by $\hat{se}(rs) = (\widehat{\Sigma}_n/\sum_{i=1}^n Z_i)^{1/2}$ and $\hat{se}(rst) = (  \widehat{\Sigma}^D_n / n)^{1/2}$. 

\begin{minipage}{0.45\textwidth}
\begin{table}[H] \centering 
 \caption{Comparison of \(\hat{\tau}^{AIPW}\) 
 (with 
 Patterns \(0, 1, \geq 2\)), H\'{a}jek and naive estimators}
\label{tab:num_treat_units_pattern_0_1_2_all_lambdas}
\resizebox{\textwidth}{!}{
\begin{tabular}{@{\extracolsep{5pt}} cccc} 
\\[-1.8ex]\hline 
 & point est & $\hat{se}(rs)$  & $\hat{se}(rst)$   \\ 
\hline \\[-1.8ex] 
$\hat{\tau}^{AIPW}$  
& $0.1871$ & $0.0203$ & $0.0204$  \\ 
$\hat{\tau}^{HJ}(d_{11},d_{01})$ & $0.1570$  & $0.0362$ & $0.0509$   \\
$\hat{\tau}^{HJ}(d_{10},d_{00})$ & $0.2496$ & $0.0348$ & $0.0355$   \\
$\hat{\tau}^{NV}$ & $0.2484$ & $0.0203$ & $0.0283$   \\
\hline \\[-1.8ex] 
\end{tabular}
}
\end{table}
\end{minipage}%
\begin{minipage}{0.5\textwidth}
\begin{figure}[H]
\resizebox{.7\textwidth}{!}{
       \begin{tikzpicture}
    \node[align=left] at (5,7) {$0$};
    \draw[thick] (5,7) circle (0.4cm);
    \node[align=left] at (5,6.4) {$0.044$};
    \node[align=left] at (5,6.0) {$(1670)$};
    
    \draw[thick] (2,5.4) -- (5,5.8); 
    \draw[thick] (5,5.4) -- (5,5.8);
    \draw[thick] (8,5.4) -- (5,5.8); 

    \node[align=left] at (2,5.0) {$0$};
    \node[align=left] at (5,5.0) {$1$};
    \node[align=left] at (8,5.0) {$\geq 2$};
    
    \draw[thick] (2,5.0) circle (0.4cm);
    \draw[thick] (5,5.0) circle (0.4cm);
    \draw[thick] (8,5.0) circle (0.4cm);

    \node[align=left] at (2,4.4) {$0.009$};
    \node[align=left] at (2,4.0) {$(1296)$};
    \node[align=left] at (5,4.4) {$0.153$};
    \node[align=left] at (5,4.0) {$(261)$};
    \node[align=left] at (8,4.4) {$0.186$};
    \node[align=left] at (8,4.0) {$(113)$};

    \node[align=left] at (3.5,3) {$0$};
    \node[align=left] at (5,3) {$1$};
    \node[align=left] at (6.5,3) {$\geq 2$};
    
    \node[align=left] at (3.5,2.4) {$0.148$};
    \node[align=left] at (3.5,2.0) {$(128)$};
    \node[align=left] at (5,2.4) {$0.134$};
    \node[align=left] at (5,2) {$(82)$};
    \node[align=left] at (6.5,2.4) {$0.196$};
    \node[align=left] at (6.5,2.0) {$(51)$};

    \draw[thick] (3.5,3) circle (0.4cm);
    \draw[thick] (5,3) circle (0.4cm);
    \draw[thick] (6.5,3) circle (0.4cm);

    \draw[thick] (3.5,3.4) -- (5,3.8); 
    \draw[thick] (5,3.4) -- (5,3.8); 
    \draw[thick] (6.5,3.4) -- (5,3.8); 





\end{tikzpicture}
}\vspace{-.25cm}
        \caption{Structure of \(\widehat{\mathcal{K}}\), interference estimators, and control unit counts at tree nodes generated by algorithm $2$ (\
        node size \(\geq 5\)) 
        }
\label{Fig:real_data_pattern_0_1_2_lambda_1_01_est}
\end{figure}
\end{minipage}


Table \ref{tab:num_treat_units_pattern_0_1_2_all_lambdas} reports the estimated direct treatment effects using the original exposure mappings and our adaptive approach. We see that failing to adapt to general confounding and interference patterns leads to smalled direct effect estimates. To further explore the underlying structures for the heterogeneity of interference, we investigate \(\widehat{\mathcal{K}}\) under the less conservative  penalty parameters \(\lambda = 0.4, 0.7,\) (presented in Appendix \ref{sec:additional_figs_analysis}) and \(1.01\).
If the correct exposure mapping is defined by \(d_{11}, d_{10}, d_{01},\) and \(d_{00}\), it would be expected that \(\widehat{\mathcal{K}}\) retains only the first layer. However, as shown in Figure \ref{Fig:real_data_pattern_0_1_2_lambda_1_01_est}, and Figures \ref{Fig:real_data_pattern_0_1_2_lambda_0_4_est} and \ref{Fig:real_data_pattern_0_1_2_lambda_0_7_est} in Appendix \ref{sec:additional_figs_analysis}, nodes in deeper layers are consistently retained. This observation suggests the presence of richer structures governing the heterogeneity of interference in this data application.

\section{Additional Discussion and Extensions}\label{sec:Additional}

There are many possible extensions of the current baseline model that are of interest. In this section, we point out some extensions that are relatively direct to accommodate (possibly increasing the data requirements), while we mention others that would require fundamentally new results. 

Throughout the paper we focus on the estimation of the average direct treatment effect on the treated (ADTT) under a simple specification that allows us to clearly communicate the main insights and key contributions (i.e., adaptivity, oracle inequalities, handle feature engineering, etc). Nonetheless the ideas here can be considered in other settings.

For example, the estimation of the average direct treatment effect (ADTE) is a key estimand in causal inference. The ideas proposed in this work extends naturally to this case. In order to pursue this estimand we consider the potential outcomes model described in Assumption \ref{assump:outc_mod} with the additional condition that $\tau_i=\tau(X_i)$. The additional assumption allows us to exploit the idea that $\tau(X_i)=\mathbb{E}[Y_i-f_i\mid X_i]$ when the $i$th node is treated.

Based on Algorithm 1, we obtain estimates of the interference, $\hat f_i, i\in[n]$. Therefore we can estimate the function $\tau(\cdot)$ by estimating a conditional mean model for the data $(Y_i-\hat f_i,X_i)_{\{i\in[n]:Z_i=1\}}$ pertaining to the treated nodes. Letting $\hat\tau_i:=\hat\tau(X_i)$, we have 
 different estimators for ADTE, e.g., $\widehat{\tau}_{ADTE}^{AIPW} = \frac{1}{n} \sum_{i=1}^n  \hat\tau(X_i) + Z_i\frac{(Y_i- \hat\tau(X_i) - \hat f_i)}{\hat e_i} -  (1-Z_i)\frac{(Y_i - \hat f_i)}{1-\hat e_i}$.

The proposed estimator for $\tau(\cdot)$ relies on an estimated target due to the use of the interference estimator. Despite of the bias that this introduces, the rates of convergence for the interference can be used to establish rates of convergence for the proposed estimator. 

The numerical experiments in Section~\ref{sec:simulation-with-x} illustrates a natural extension of our Algorithm 1 to the setting of discrete covariates (i.e. finite support). Importantly, the theoretical development in the paper can be immediately adapted to this setting. The approximation error in Definition \ref{bias_term} is now defined over patterns of the form $(x,g)$, requiring us to track $V_{x,g}$ throughout. This necessarily has an effect on the effective sample size for estimating $\hat f(x,g)$ in Algorithm 1 but the results of Theorem~\ref{thm:single_node} (based on Proposition~\ref{Good_event} in the Appendix) follow, mutatis mutandis. The discrete covariate setting is a special case of the following formulation that can handle continuous and other covariates. We note that stratifying on a discrete covariate and then growing the trees in Algorithm 1 has a direct parallel to first growing a single tree based on graph patterns only, and then enforcing a split on the discrete covariate in the leafs. To see the extension to continuous covariates we define $\hat f(g,\cdot) = \arg\min_{h\in\mathcal{H}}\sum_{i\in V_g} (Y_i-h(X_i))^2$ where $\mathcal{H}$ is a class of function that describes the influence of covariates on the interference patterns. If we grow the tree based on $\hat f(g,\cdot)$ that corresponds to splitting on the $X$'s in the leafs. Unlike the discrete case, however, this more general formulation has a more profound effect on the rates of convergence which impacts the pruning of the tree (i.e. the thresholds that are estimated).

We close by noting that we view the current work as an initial contribution to blend machine learning tools within causal inference under network interference for observational studies. Although outside the scope of the current paper, there are several different variations and extensions. Among the extensions that would require fundamentally new results, we would include to handle unobserved graphs, relax the partial linear setting, and to provide a data-driven estimation for the function of patterns $\gamma$. We view the development of tools to allow proper inference under these alternative models as interesting directions for future research.

\begin{appendix}

\section{Proofs for section \ref{The_Res}}
Let $\mathcal{W}:= \{ w=(\gamma_0(G_i^z(1)),\ldots,\gamma_0(G_i^z(m))) : \ m, i \in [n], z\in\{0,1\}^n \}$
denote a sequence of possible patterns $g=\gamma_0(G_i^z(k))$ that is obtained by increasing $k$. For each element in  $g \in \mathcal{E}$, there is a unique element in $\mathcal{W}$ that corresponds to a path $\gamma_0(G_i^z(1)),\ldots,g$.  We note that $\widehat E_n \subset \mathcal{E}$.

\begin{proposition}
\label{Good_event}
Let $f(g)=\frac{1}{|V_g|}\sum_{i\in V_g} f(\gamma(G_i^Z))$, $\hat f(g) = \frac{1}{|V_g|}\sum_{i\in V_g}  Y_i$, and $\alpha(g)
=   \bar\sigma \sqrt{\frac{2\log( 2n^2/\delta)}{|V_g|}}
$ if $|V_g|>0$. Suppose Assumption \ref{assump:outc_mod} holds and we observe $(Y_i,Z_i)_{i=1}^n$. Then for any $\delta \in (0,1)$ we have 
\begin{equation*}
\mathbb{P}\Big( \  \bigcap_{ g \in \mathcal{W}, |V_g|>0}  \left\lbrace  \left\lvert f(g)- \hat{f}(g) \right\rvert   \leq  \alpha(g)  \right\rbrace   \    \Big) \geq 1-\delta 
\end{equation*}
\end{proposition}
\begin{proof}[Proof for Proposition \ref{Good_event}]
Let  $g \in \mathcal{W}$ and recall that $V_g = V_g(Z)$. Recall that under Assumption  \ref{assump:outc_mod}, since for any $i\in V_g$ we have $Z_i=0$ so that $Y_i=\widetilde Y_i(0,Z_{-i}) = f(\gamma(G_i^Z))+\epsilon_i$ so that
\begin{equation*}
\begin{split}
| f(g)-\hat{f}(g) | & \leq  \frac{1}{|V_g|}\Big| \sum_{i \in V_g} \epsilon_i \Big|   
\end{split}
\end{equation*}
Then we have
$$
\begin{array}{rl}
\mathbb{P}\left(   \frac{\lvert \sum_{i \in V_g} \epsilon_i \rvert }{\sqrt{|V_g|}} > \bar \sigma \sqrt{2\log(\frac{2n^2}{\delta})}\Bigg|  |V_g|>0 \right) & = \mathbb{E}\left[\mathbb{P}\left(   \frac{\lvert \sum_{i \in V_g} \epsilon_i \rvert }{\sqrt{|V_g|}} > \bar \sigma \sqrt{2\log(\frac{2n^2}{\delta})}\mid |V_g|>0, Z \right) \right] \\
& \leq \delta/ n^2\\
\end{array}
$$
where we used that $\epsilon_i$'s are independent, zero mean  sub-Gaussian random variables with parameter $\bar \sigma^2$ conditionally on $Z$ by Assumption \ref{assump:outc_mod}. 

Thus we have
{\small $$
\begin{array}{rl}
&\mathbb{P}\left( \  \bigcap_{ g \in \mathcal{W} : |V_g|>0}  \left\lbrace  \left\lvert f(g)- \hat{f}(g) \right\rvert   \leq  \alpha(g)  \right\rbrace   \    \right) \\
&= 1-\mathbb{P}\left( \  \bigcup_{ g \in \mathcal{W} : |V_g|>0}  \left\lbrace  \left\lvert f(g)- \hat{f}(g) \right\rvert   >  \alpha(g)  \right\rbrace   \    \right) \\
& \geq 1 -  {\displaystyle \sum_{g \in \mathcal{W}}} \mathbb{P}\left(   \frac{\lvert \sum_{i \in V_g} \epsilon_i \rvert }{|V_g|} > \bar \sigma \sqrt{\frac{2\log(2n^2/\delta)}{|V_g|}}\mid |V_g|>0 \right)\mathbb{P}\left( |V_g|>0 \right)\\
& \geq 1-\frac{\delta}{n^2}\sum_{g \in \mathcal{W}}\mathbb{P}\left( |V_g|>0 \right)\\
& \geq 1-\delta.
\end{array}
$$}
where the last line  is based on $\sum_{g\in \mathcal{W}}  1\{ |V_g|>0 \} \leq n^2$ for every realization of $Z$.  
\end{proof}

\subsection{Proofs for Section \ref{sec:OracleInterference}}
\begin{proof}[Proof of Theorem \ref{thm:single_node}]
By Proposition \ref{Good_event}, for any $g = \gamma_0(G_i^Z(m)) \in \mathcal{W}$ such that $|V_g|>0$,  with probability at least $1-\delta$, 
$$|f(\gamma(G_i^Z)) - \hat{f}(g)| \leq |f(\gamma(G_i^Z)) - f(g)| + | f(g)- \hat{f}(g) | \leq r^Z(g) + \alpha(g)$$
where we used the triangle inequality and the fact that 
$$\begin{array}{rl}
|f(\gamma(G_i^Z)) - f(g)| & =\left| \frac{1}{|V_g|}\sum_{j\in V_g} f(\gamma(G_i^Z)) - f(\gamma(G_j^Z))\right| \\
& \leq \frac{1}{|V_g|}\sum_{j\in V_g} |f(\gamma(G_i^Z)) - f(\gamma(G_j^Z))| \leq r^Z(g) \end{array}$$
since $V_g \cup \{i\} \subseteq \overline{V}_g$  by definition of $\overline{V}_g$ (see Definition \ref{bias_term}).

Fix $i\in [n]$, let $\hat f_i = \hat f( \gamma_0(G_i^Z(\hat m_i))$ denote the estimate of the interference for $i$, let $k^*_i$ denote the minimizer of the RHS in the statement,  and $g_k=\gamma_0(G_i^Z(k))$ denote the pattern at the $k$-hop from $i$. We divide the analysis into three cases. 

\noindent Case 1: $k^*_i=\hat m_i$. In that case we have with probability at least $1-\delta$,
\begin{equation}
\label{oracle_leaf_sce_1}
\begin{split}
 | f(\gamma(G_i^Z)) - \hat f_i | & = | f(\gamma(G_i^Z)) - \hat f(g_{\hat m_i})| \\
&  \leq | f(\gamma(G_i^Z)) - f(g_{\hat m_i})| + |f(g_{\hat m_i}) - \hat f(g_{\hat m_i})|\\
&  = | f(\gamma(G_i^Z)) - f(g_{k_i^*})| + |f(g_{k_i^*}) - \hat f(g_{k_i^*})| \leq  r^Z(g_{k^*_i}) + \alpha(g_{k^*_i})\\
\end{split}   
\end{equation}

\noindent Case 2: $k^*_i > \hat m_i$. In that case with probability at least $1-\delta$, 
\begin{equation}
\label{oracle_leaf_sce_2}
\begin{array}{rl}
& | f(\gamma(G_i^Z))-\hat{f}_i | = |f(\gamma(G_i^Z))- \hat f(g_{\hat m_i})| \\
& \leq_{(1)}  |f(\gamma(G_i^Z))- f(g_{k_i^*})| + |f(g_{k_i^*}) - \hat f(g_{k_i^*})| + |\hat f(g_{k_i^*}) - \hat f(g_{\hat m_i})|\\
& \leq_{(2)} r^Z(g_{k^*_i}) + \alpha(g_{k^*_i}) + |\hat f(g_{k_i^*}) - \hat f(g_{\hat m_i})|\\
& \leq_{(3)} r^Z(g_{k^*_i}) + \alpha(g_{k^*_i}) + \lambda\{\alpha(g_{k^*_i})+\alpha(g_{\hat m_i})\} \leq_{(4)} r^Z(g_{k^*_i}) + (1+2\lambda)\alpha(g_{k^*_i})\\
\end{array}
\end{equation}
\noindent \noindent \!\!where (1) holds by the triangle inequality, (2) holds with probability $1-\delta$ by definition of the approximation error and Proposition \ref{Good_event}, (3) holds because $g_{\hat m_i+1}$ was pruned in Algorithm 1. Since $g_{\hat m_i}=par(g_{\hat m_i+1})$ and $g_{k^*_i}\succeq g_{\hat m_i+1}$, this pruning implies that $|\hat f(g_{\hat m_i}) - \hat f(g_{k^*_i})| \leq \lambda\{\alpha(g_{\hat m_i}) + \alpha(g_{k^*_i})\}$. Finally (4) follows from the fact that $\alpha(g_{\hat m_i}) \leq \alpha(g_{k^*_i})$ since $V_{g_{k^*_i}} \subset V_{g_{\hat m_i}}$ as $\hat m_i < k^*_i$.

\noindent Case 3: $k_i^* < \hat m_i$. We have that with probability $1-\delta$,
\begin{equation}
\label{oracle_leaf_int_1}
\begin{split}
|f(\gamma(G_i^Z)) - \hat{f}_i| &  = |f(\gamma(G_i^Z)) - \hat{f}(g_{\hat m_i})| \\
& \leq_{(1)} |f(\gamma(G_i^Z)) - {f}(g_{\hat m_i})|+|{f}(g_{\hat m_i})- \hat {f}(g_{\hat m_i})| \\
& \leq_{(2)} r^Z(g_{\hat m_i})+ \alpha(g_{\hat m_i})\leq_{(3)} r^Z(g_{k_i^*})+ \alpha(g_{\hat m_i})\\ 
\end{split}
\end{equation}
where (1) follows by the triangle inequality, (2) holds with (uniformly over $i \in [n]$) probability at least $1-\delta$ by Proposition \ref{Good_event}, and (3) holds by $k_i^* < \hat m_i$ and the definition of the approximation error.

We proceed to bound $\alpha(g_{\hat m_i})$. Since $g_{\hat m_i} \in \widehat{\mathcal{K}}$ by Step 2 of Algorithm 1, for some $g'\succeq g_{\hat m_i-1}$, $g'' \succeq g_{\hat m_i}$ we have
\begin{equation} 
\label{oracle_leaf_int_1b} |\hat f(g') - \hat f(g'') | > \lambda \{ \alpha(g')+\alpha(g'')\}  \end{equation}
Moreover, again by the triangle inequality and Proposition \ref{Good_event} (on the same events), with probability at least $1-\delta$
\begin{equation} 
\label{oracle_leaf_int_2}
\begin{split} |\hat f(g') - \hat f(g'') | & \leq |\hat f(g') - f(g')| + |f(g')-f(g'')|+|f(g'')- \hat f(g'') | \\
&   \leq \alpha(g^\prime)+ r^Z(g_{k^*_i})+\alpha(g^{\prime\prime})  \\
\end{split}
\end{equation}
Then, since $\lambda>1$, $\alpha(g_{\hat m_i-1}) \leq \alpha(g')$, and $\alpha(g_{\hat m_i}) \leq \alpha(g'')$, by (\ref{oracle_leaf_int_1b}) and (\ref{oracle_leaf_int_2}) we have
 $$(\lambda-1) \{ \alpha(g_{\hat m_i-1})+\alpha(g_{\hat m_i})\} \leq r^Z(g_{k^*_i})$$

Therefore we have 
$$ |f_i-\hat f_i| \leq r^Z(g_{k^*_i}) + \alpha(g_{\hat m_i}) \leq r^Z(g_{k^*_i}) + \frac{r^Z(g_{k^*_i})}{\lambda-1}.$$

Combining Cases 1, 2, and 3, uniformly over $i\in[n]$, we have that
$$ |f(\gamma(G_i^Z)) - \hat{f}_i| \leq \max\left\{  r^Z(g_{k^*_i}) + (1+2\lambda)\alpha(g_{k^*_i}),  \frac{\lambda}{\lambda -1}r^Z(g_{k_i^*})\right\}$$
holds  with probability at least $1-\delta$.

\end{proof}

 \subsection{Proofs of Section \ref{sec:MainATT}}

\begin{proof}[Proof of Theorem \ref{thm:dra_tau_inf}]
For the $i$th node let $f_i:=f(\gamma(G_i^Z))$ denote the interference, $\hat f_i$ denote its estimate defined in Algorithm 1, and $N_1:=\sum_{i=1}^nZ_i$ denote the number of treated nodes. We have that
\begin{equation}\label{eq:dr_main}
\begin{array}{rl}
|\hat\tau^{AIPW}-\hat\tau^{AIPW*}| & = \left| \frac{1}{N_1}\sum_{i=1}^n Z_i (f_i - \hat f_i) + \frac{(Y_i - f_i)(1-Z_i)e_i}{1-e_i} - \frac{(Y_i - \hat f_i)(1-Z_i)\hat e_i}{1-\hat e_i} \right| \\
& \leq  \left|\frac{1}{N_1}\sum_{i=1}^n (f_i - \hat f_i)\left\{Z_i-\frac{(1-Z_i)e_i}{1-e_i}\right\}\right|\\
&+ \left|\frac{1}{N_1}\sum_{i=1}^n(1-Z_i)(Y_i-f_i)\left\{\frac{e_i}{1-e_i}-\frac{\hat e_i}{1-\hat e_i}\right\} \right|\\
& + \left|\frac{1}{N_1}\sum_{i=1}^n(1-Z_i)(f_i-\hat f_i)\left\{\frac{e_i}{1-e_i}-\frac{\hat e_i}{1-\hat e_i}\right\} \right|\\
& =:  (I) + (II) + (III)
\end{array}
\end{equation}

Since by Assumption \ref{assump:outc_mod} we have that $(Y'_i,Z_i,X_i)$, for $Y'_i := Y_i - f_i$, satisfies SUTVA, unconfoundness and overlap, it follows that conditional on $Z,X$, we have $$\sqrt{N_1}\Sigma_n^{-1/2}(\hat \tau^{AIPW*}- \tau) \rightarrow N(0,1)$$ 
where $\Sigma_n := \frac{1}{\sum_{i=1}^nZ_i}\sum_{i=1}^n \mathbb{E}\left[\{ Z_i(Y_i-\tau_i -f_i) - (Y_i-f_i)(1-Z_i)e_i/(1-e_i)\}^2 \mid Z_i,X_i \right]$.

Thus, to show the asymptotic normality of $\sqrt{n}(\hat \tau^{AIPW}-\tau)$ it suffices to establish that with high probability $1-o(1)$ we have $$(I)+(II)+(III) \leq C\delta_n^{1/2} n^{-1/2} \Sigma_n^{-1/2}.$$  
Step 2 provides the bound for (III), Step 3 for (II), Step 4 for (I). (Step 5 below contains a technical derivation for a concentration inequality.)

{\it Step 2: Bound on (III).}  To bound (III) in (\ref{eq:dr_main}) we apply  Cauchy-Schwartz
$$
\begin{array}{rl}
(III)& :=\left|\frac{1}{N_1}\sum_{i=1}^n(1-Z_i)(f_i-\hat f_i)\left\{\frac{e_i}{1-e_i}-\frac{\hat e_i}{1-\hat e_i}\right\} \right| \\
& \leq \frac{n}{N_1} \sqrt{\frac{1}{n}\sum_{i=1}^n (1-Z_i)(f_i - \hat f_i)^2} 
 \sqrt{\frac{1}{n}\sum_{i=1}^n(1-Z_i)\left(\frac{e_i}{1-e_i}-\frac{\hat e_i}{1-\hat e_i}\right)^2}\\
\end{array}
$$
By Assumption \ref{assum:DR}(i), we have with probability $1-\Delta_n$ $$\sqrt{\frac{1}{n}\sum_{i=1}^n\left(\frac{e_i}{1-e_i}-\frac{\hat e_i}{1-\hat e_i}\right)^2} \leq C\sqrt{\frac{\bar v\log(n/\Delta_n)}{n}} =: r^e_n$$ 

By Lemma \ref{lemma:sparsity_gamma_star},  with probability $1-\delta-\Delta_n$
$$ \begin{array}{rl}
\frac{1}{n}\sum_{i=1}^n (1-Z_i)(f_i - \hat f_i)^2 
& \leq C_\lambda (\bar\sigma/\underline \sigma)^2  \bar\sigma^2 \sqrt{\log(n/\delta)}\mathbb{E}[ \frac{1}{n}\sum_{i=1}^n(f_i-\hat f_i^*)^2\mid Z]\\
& \leq C_\lambda (\bar\sigma/\underline \sigma)^2  \bar\sigma^2 \sqrt{\log(n/\delta)} \delta_n^2 n^{-1/2} / \log n
\end{array}$$
where we used $\delta \geq n^{-c}$, Assumption \ref{assum:DR}(ii) and $D_n \geq 0$.

 Thus, with probability $1-\delta-2\Delta_n-\exp(-nc^2/8)$ we have 
$$(III) \leq \frac{n}{N_1} \cdot C_\lambda \delta_n  (\bar\sigma/\underline\sigma)\bar\sigma\frac{\log^{\frac{1}{2}}(2n^2/\delta)}{\log(n)}  n^{-1/4} C\sqrt{\frac{\bar v  \log(n/\Delta_n)}{n}} \leq C_\lambda' (\bar\sigma/\underline\sigma)\bar\sigma \delta_n n^{-1/2}$$ 
where we used that $\frac{n}{N_1} \leq C$ with probability $1-\exp(-nc^2/8)$.

{\it Step 3: Bound on (II).} To bound (II) we observe that $$\begin{array}{rl}
\mathbb{E}[(1-Z_i)(Y_i-f_i)\mid X_i] & = \mathbb{E}[(1-Z_i)(Y_i(0,Z_{-i})-f_i)\mid X_i] \\
&= \mathbb{E}[(1-Z_{i})\mid X_i] \mathbb{E}[(Y_i(0,Z_{-i})-f_i)\mid X_i] =0
\end{array}
$$ 
where we used Assumption \ref{assump:outc_mod}   and Assumption \ref{cond_unconf}(i-ii) (Unconfoundedness).  By construction we have $(1-Z_i)(Y_i-f_i) = (1-Z_i)(\widetilde Y_i(0,Z_{-i})-f_i) = (1-Z_i)\epsilon_i$ which are independent across $i$ and zero mean conditional on $X$. Let $\widetilde{\mathcal{E}}:=\{ f= (1-Z_i)\epsilon_i\{ \frac{e_i}{1-e_i}-\frac{\tilde e_i}{1-\tilde e_i}\}, e_i \in \mathcal{E}, \frac{1}{n}\sum_{i=1}^n(\frac{e_i}{1-e_i}-\frac{\tilde e_i}{1-\tilde e_i})^2 \leq C \bar v \log n / n \}$. Then with probability $1-\Delta_n- n^{-2}$, we have
{\small $$ 
\begin{array}{rl}
(II) & = \left|\frac{1}{n}\sum_{i=1}^n(1-Z_i)(Y_i-f_i)\left\{\frac{e_i}{1-e_i}-\frac{\tilde e_i}{1-\tilde e_i}\right\} \right|\\
& \leq_{(1)} \max_{\tilde h \in \widetilde{\mathcal{E}}} \left|\frac{1}{n}\sum_{i=1}^n h(Z_i,X_i,\epsilon_i) \right| \\
&\leq_{(2)}   C\sqrt{ \frac{\bar v \log(n)}{n}}  \sup_{\tilde e \in \mathcal{F}} \sqrt{\frac{1}{n}\sum_{i=1}^n (\frac{e_i}{1-e_i}-\frac{\tilde e_i}{1-\tilde e_i})^2}\\
& \leq_{(3)} Cn^{-1} \bar v \log n \leq_{(4)} \delta_n \bar \sigma \delta_n n^{-1/2}
\end{array}$$}
where (1) holds by $\hat e \in \mathcal{E}$ with probability $1-\Delta_n$  by Assumption \ref{assum:DR}(i), (2) holds with probability $1-n^{-2}$ by Lemma 16 in \cite{belloni2011l1} with $D= \bar\sigma C$, $n(\epsilon,\widetilde {\mathcal{E}},P) \leq (A/\epsilon)^{\bar v}$, $F(Y,X,Z) \leq (1-Z_i)\epsilon_i \max e_i/(1-e_i)$, $\|F\|_{P,2} \leq C \bar \sigma$, $\rho = C\sqrt{v\log n / n}$, (3) holds  since $\sup_{\tilde e \in \mathcal{E}} \frac{1}{n}\sum_{i=1}^n (\frac{e_i}{1-e_i}-\frac{\tilde e_i}{1-\tilde e_i})^2 \leq \frac{C\bar v \log(n)}{n}$, and (4) holds by $\bar v^2 \log n \leq \bar\sigma \delta_n^2 n$.
~\\
{\it Step 4: Bounding (I).} Next we proceed to bound (I).  We will use the notation $g \in \gamma$ to denote an element of the partition (i.e. corresponding to a node of the associated context tree). For an $s$-valued function $\tilde f$, it is  consistent with a context tree with at most $s$  nodes denoted by  $\mathcal{T}_s$. The function $\tilde\gamma$ maps a labelled graph $G_i^z$ to one of its $s$ nodes. (That is the function $\tilde\gamma$ defines some function $\tilde m$ that determines the hops, namely $\tilde \gamma (G_i^z) = \gamma_0( G_i^Z(\tilde m))$ where $\tilde m= \tilde m(G_i^Z)$.)

We start by bounding the number of partitions $\hat s$ associated with $\hat \gamma$. By Lemma \ref{lemma:inclusion} we have that $\gamma^* \succeq \hat \gamma$ with probability $1-\delta$  so that $\hat s \leq s$ with the same probability. Furthermore, under Assumptions \ref{assump:outc_mod} and \ref{assum:DR}, Lemma \ref{lemma:sparsity_gamma_star} shows $s\leq (n/\underline \sigma^2) \mathbb{E}[ \frac{1}{n}\sum_{i=1}^n (f_i-\hat f_i^*)^2 \mid Z]$ with probability $1-\delta-\Delta_n$. Therefore, with probability $1-\delta-\Delta_n$
\begin{equation}\label{bound:hat-s}\hat s \leq s \leq (n/\underline \sigma^2) \mbox{$\mathbb{E}\left[ \frac{1}{n}\sum_{i=1}^n (f_i-\hat f_i^*)^2 \mid Z\right] $}
\end{equation}
Furthermore, with probability at least $1-\delta-\Delta_n$, uniformly over $i\in[n]$, we have that our estimator $\hat f$ satisfies \begin{equation}\label{eq:bound:fstart-hatf}\begin{array}{rl}
|f_i^* - \hat f_i | & \leq |f_i^* -  f_i | + |f_i - \hat f_i | \\
& \leq |f_i^* -  f_i | + \frac{\lambda}{\lambda-1} \bar \sigma \delta_n n^{-1/2} + 3\lambda \frac{\bar\sigma}{\underline{\sigma}}  \sqrt{2\log(2n^2/\delta) \mathbb{E}[ |f_i-\hat f_i^*|^2 \mid Z]}\\
& \leq C_{\lambda,\sigma}  \sqrt{2\log(2n^2/\delta) \mathbb{E}[ |f_i-\hat f_i^*|^2 \mid Z]} \\
\end{array}\end{equation}
where the first inequality holds by the triangle inequality, the second with probability $1-\delta-\Delta_n$ by Lemma \ref{lemma:sparsity_gamma_star}, and the third by Assumption \ref{assum:DR}(ii). We define $r$ such that with probability $1-\delta - \Delta_n$ we have $ r \geq  C_{\lambda,\sigma}  \sqrt{2\log(2n^2/\delta) \max_{i\in [n]}\mathbb{E}[ |f_i-\hat f_i^*|^2 \mid Z]} $, which is given by Assumption \ref{assum:DR}(ii).

It will be convenient to stress a few observations and notation that will be used. In what follows we will write $\tilde f \sim \tilde \gamma$ if $\tilde f$ is function associated with nodes in $\tilde\gamma$, we will let $\mathcal{F}_{\tilde \gamma} := \{ \tilde f: {\rm{Im}}(\tilde \gamma) \to [-C,C] : \tilde f \ \mbox{is $s$-valued}\}$, and $\mathcal{T}_s:=\{ \tilde\gamma : {\rm{Im}}(\tilde \gamma) \ \mbox{is $s$-valued}\}$. Both $f^*$ and $\hat f$ are (random) $s$-valued with probability $1-\delta-\Delta_n$. Thus we say that $f^*\sim \gamma^*, \hat f \sim \hat \gamma$ where $\gamma^*, \hat \gamma \in \mathcal{T}_s$ with probability $1-\delta-\Delta_n$. Note that if a pattern $g \in \hat \gamma$ does not correspond to any $\hat \gamma(G_i^Z)$, the bound (\ref{eq:bound:fstart-hatf}) does on impose restrictions on $f^*(g)-\hat f(g)$. Nonetheless, we can extend $\hat f$ to $\hat f^e$ so that $\hat f^e(g) = f^*(g)$ if $g \not \in \{\hat \gamma(G_i^Z), i\in [n]\}$ with $\hat f^e$ being $s$-valued and without changing $(f^*_i-\hat f_i)_{i\in [n]}$. Moreover, by construction, both $\gamma^*, \hat\gamma \subset \widetilde {E}^o_n$. We will define $\widetilde{\mathcal{F}}_{\tilde \gamma,\tilde \gamma^*, r} = \{ (\tilde f, \tilde f^*) \in \mathcal{F}_{\tilde \gamma} \times  \mathcal{F}_{\tilde \gamma^*} : \|\tilde f - \tilde f^* \|_\infty \leq r \}$.

Letting $U_i:= Z_i - (1-Z_i)e_i/(1-e_i)$, we proceed to bound (I) as follows
{\small \begin{equation}\label{eq:aux:I:initial} \begin{array}{rl}
&  \mathbb{P}\left(  \left| \frac{1}{n} \sum_{i=1}^n(f_i-\hat f_i)U_i  \right|  > t  \right) \\
&  \leq_{(1)} \mathbb{P}\left(  \left| \frac{1}{n} \sum_{i=1}^n(f_i^*-\hat f_i)U_i  \right|  > t - C\delta_n \bar\sigma n^{-1/2}  \right) \\
& \leq_{(2)} \mathbb{P}\left( \max_{\tilde \gamma, \tilde \gamma^* \in \mathcal{T}_{s} \cap \widetilde{E}_n^o, (\tilde f, \tilde f^*) \in  \widetilde{\mathcal{F}}_{\tilde \gamma,\tilde \gamma^*,r} } \left| \frac{1}{n} \sum_{i=1}^n(\tilde f_i^*-\tilde f_i)U_i  \right|  > t - C\delta_n \bar\sigma n^{-1/2} \right) + \delta + \Delta_n\\
& \leq_{(3)} \displaystyle \sum_{\tilde \gamma, \tilde \gamma^* \in \mathcal{T}_{s}} \mathbb{P}\left(  \max_{(\tilde f, \tilde f^*) \in  \widetilde{\mathcal{F}}_{\tilde \gamma,\tilde \gamma^*,r}} \left| \frac{1}{n}\sum_{i=1}^n(f_i^*-\tilde f_i)U_i  \right|  > t - C\delta_n \bar\sigma n^{-1/2},  \tilde \gamma, \tilde\gamma^*\in \widetilde{E}_n^o \right) + \delta + \Delta_n\\
& \leq_{(4)} \displaystyle \sum_{\tilde \gamma,\tilde\gamma^* \in \mathcal{T}_{s}} \mathbb{P}\left(  \max_{(\tilde f, \tilde f^*) \in  \widetilde{\mathcal{F}}_{\tilde \gamma,\tilde \gamma^*,r}^\varepsilon} \left| \frac{1}{n}\sum_{i=1}^n(\tilde f_i^*-\tilde f_i)U_i  \right|  > t - 2C\delta_n \bar\sigma n^{-1/2},  \tilde \gamma, \tilde\gamma^* \in \widetilde{E}_n^o \right) + \delta + \Delta_n\\
\end{array}
\end{equation}}
where (1) follows from Assumption \ref{assum:DR}(ii) and $\frac{1}{n}\sum_{i=1}^n(f_i-f_i^*)U_i \leq \frac{1}{n}\sum_{i=1}^nr^Z(g_i^*)|U_i| \leq C\delta_n \bar\sigma n^{-1/2}$,
(2) follows from $f^* \sim \gamma^*, \hat f \in \hat\gamma $ such that  $\gamma^*, \hat \gamma \in \mathcal{T}_{s}$ with probability at least $1-\delta-\Delta_n$, and (3) follows from the union bound. Inequality (4) follows from setting $\varepsilon= \bar \sigma \delta_n n^{-1/2}$ so that for any $\tilde f \in \mathcal{F}_{\tilde \gamma}$ we find a  $\tilde f' \in \mathcal{F}_{\tilde \gamma}^\varepsilon$ such that $|\tilde f_i-\tilde f_i'| \leq \bar \sigma \delta_n n^{-1/2} $ and recall that $|U_i|\leq C$. Then note that the main term in the RHS of (\ref{eq:aux:I:initial}) above can be bounded as
{\small \begin{equation}\label{eq:aux:I:initial-II} \begin{array}{rl}
& \displaystyle \sum_{\tilde \gamma,\tilde\gamma^* \in \mathcal{T}_{s}} \mathbb{P}\left(  \max_{(\tilde f, \tilde f^*) \in  \widetilde{\mathcal{F}}_{\tilde \gamma,\tilde \gamma^*,r}^\varepsilon} \left| \frac{1}{n}\sum_{i=1}^n(\tilde f_i^*-\tilde f_i)U_i  \right|  > t - 2C\delta_n \bar\sigma n^{-1/2},  \tilde \gamma, \tilde\gamma^* \in \widetilde{E}_n^o \right)\\
& \leq_{(5)} \displaystyle \sum_{\tilde \gamma,\tilde\gamma^* \in \mathcal{T}_{s}} |\mathcal{F}_{\tilde{\gamma}}^\varepsilon|\cdot |\mathcal{F}_{\tilde{\gamma}^*}^\varepsilon| \max_{(\tilde f, \tilde f^*) \in  \widetilde{\mathcal{F}}_{\tilde \gamma,\tilde \gamma^*,r}}\mathbb{P}\left(   \left| \frac{1}{n}\sum_{i=1}^n(\tilde f_i^*-\tilde f_i)U_i  \right|  > t - 2C\delta_n \bar\sigma n^{-1/2},  \tilde \gamma, \tilde\gamma^* \in \widetilde{E}_n^o \right)\\
& \leq_{(6)} \displaystyle \sum_{\tilde \gamma,\tilde\gamma^* \in \mathcal{T}_{s}} (Cn)^{2s} \max_{(\tilde f, \tilde f^*) \in  \widetilde{\mathcal{F}}_{\tilde \gamma,\tilde \gamma^*,r}}\mathbb{P}\left(   \left| \frac{1}{n}\sum_{i=1}^n(\tilde f_i^*-\tilde f_i)U_i  \right|  > t - 2C\delta_n \bar\sigma n^{-1/2},  \tilde \gamma, \tilde\gamma^* \in \widetilde{E}_n^o \right)\\
& \displaystyle =_{(7)}  \displaystyle \sum_{\tilde \gamma,\tilde\gamma^* \in \mathcal{T}_{s}} (Cn)^{2s} \max_{(\tilde f, \tilde f^*) \in  \widetilde{\mathcal{F}}_{\tilde \gamma,\tilde \gamma^*,r}}\mathbb{P}\left(   \left| \frac{1}{n}\sum_{i=1}^n(\tilde f_i^*-\tilde f_i)U_i  \right|  > t - 2C\delta_n \bar\sigma n^{-1/2}\right) \mathbb{P}\left( \tilde \gamma, \tilde\gamma^* \in \widetilde{E}_n^o \right)\\
\end{array}
\end{equation}}
where relation (5) holds by the union bound and $\widetilde{\mathcal{F}}_{\tilde \gamma,\tilde \gamma^*,r}  \subset \mathcal{F}_{\tilde \gamma} \times \mathcal{F}_{\tilde \gamma^*}$ (6) holds since $\tilde f \sim \tilde \gamma$ is $s$-valued so that $|\mathcal{F}_{\tilde\gamma}^{\varepsilon}| \leq (2C/\varepsilon)^{s}$ since $\tilde \gamma$ is fixed.
Finally, recall that $\tilde\gamma$ and $\tilde \gamma^*$ are fixed (and non-random). Thus (7) holds by independence between $\tilde Z$, that determines $\tilde\gamma, \tilde\gamma^* \in \widetilde E_n^o$, and $Z$, that determines $(f^*,\tilde f, U)$.

By Step 2, we have that for fixed $\tilde{f}^*, \tilde {f} \sim \tilde\gamma \in \mathcal{T}_{s}$ such that $\|\tilde f - \tilde f^*\|_\infty \leq r$, we have
{\small \begin{equation}
\begin{split}
\label{eq:Step2result}
\mathbb{P}\left(   \left| \frac{1}{n}\sum_{i=1}^n(f_i^*-\tilde f_i)U_i  \right| > \left( t - 2C\delta_n \bar\sigma n^{-1/2}    \right) \right) &
\leq \exp\Bigg( \frac{-\mbox{$\frac{1}{2}$}\left( t - 2C\delta_n \bar\sigma n^{-1/2} \right)^2n}{ (1+2D_n)^2\|\tilde f^* - \tilde f\|_\infty^2} \Bigg) \\ 
&\leq \exp\Bigg( \frac{-\mbox{$\frac{1}{2}$}\left( t - 2C\delta_n \bar\sigma n^{-1/2} \right)^2n}{  (1+2D_n)^2 r^2} \Bigg) \\ 
\end{split}
\end{equation}
}
Therefore by (\ref{eq:aux:I:initial}), (\ref{eq:aux:I:initial-II}) and (\ref{eq:Step2result}) with $$t = \sqrt{2} n^{-1/2} \left( r (1+2D_n) \sqrt{Cs\log n} \right) + 2C\bar\sigma \delta_n n^{-1/2}$$ for $C$ large enough we have
$$ \begin{array}{rl}
  \mathbb{P}\left(  \left| \frac{1}{n} \sum_{i=1}^n(f_i-\hat f_i)U_i  \right|  > t  \right)
&  \displaystyle  \leq (Cn)^{2s}\exp( -Cs\log n ) \sum_{\tilde \gamma^*,\tilde \gamma \in \mathcal{T}_{s}}  \mathbb{P}\left(\tilde \gamma, \tilde\gamma^* \in  \widetilde{E}^0_n \right) + \delta + \Delta_n\\
&  \displaystyle \leq \exp( -Cs\log(n) + 2s\log(Cn) + 4s\log n) + \delta + \Delta_n\\
& \leq  Cn^{-1}  + \delta + \Delta_n\\
\end{array}
$$
where the second inequality follows by  $$\sum_{\tilde \gamma \in \mathcal{T}_{s}} \sum_{\tilde \gamma^* \in \mathcal{T}_{s}} \mathbb{P}\left( \tilde \gamma, \tilde\gamma^* \in \widetilde E_n^o \right)= \mathbb{E}\Big[ \sum_{\tilde \gamma \in \mathcal{T}_{s}} \sum_{\tilde \gamma^* \in \mathcal{T}_{s}}1\{\tilde \gamma, \tilde\gamma^* \in \widetilde E_n^o \}\Big]\leq \mbox{$\binom{n^2}{s}^2$} \leq n^{4s}$$ 
as any realization of $\widetilde E_n^o$ has at most $n^2$ partitions, $\tilde\gamma, \tilde\gamma^*$ chooses at most $\binom{n^2}{s}$ of them. 

Finally, we have that $t \leq C\bar\sigma \delta_n n^{-1/2}$ since  $$ r (1+2D_n) \sqrt{Cs\log (n)} \leq \frac{r^2}{\sqrt{\log(n/\delta)}}(1+2D_n) \sqrt{C n \log(n)} / \underline{\sigma}  \leq C'_{\lambda,\sigma}\delta_n$$
where the first inequality holds by the bound on $s$ in (\ref{bound:hat-s}) and under the condition in Assumption \ref{assum:DR}(ii) as $r^2 \leq C_{\lambda,\sigma}^2 \max_{i\in[n]} \mathbb{E}[|f_i-f_i^*|^2\mid Z] \log(n/\delta)$.
~\\

\underline{Step 2}. In this step we establish (\ref{eq:Step2result}).
Fix $\tilde \gamma, \tilde \gamma^* \in \mathcal{T}_{s}$, $\tilde f^*\sim \gamma^*, \tilde f \sim \tilde\gamma$, where $$\|\tilde f^*-\tilde f\|_\infty := \max_{z\in\{0,1\}^n, i\in[n]} | \tilde f^*(\tilde \gamma^*(G_i^z)) - \tilde f( \tilde \gamma(G_i^z))| $$ 
where $\tilde \gamma^*(G_i^z)=\gamma_0(G_i^z(\tilde m_i^*))$ and $\tilde \gamma(G_i^z)=\gamma_0(G_i^z(\tilde m_i))$ for some $\tilde m_i^*$ and $\tilde m_i$ (possibly dependent on $z$). Recall that $U_i= Z_i - (1-Z_i)e_i/(1-e_i)$, $i\in[n]$, are zero mean and independent across $i$, so that $\mathbb{E}[\{\tilde f^*( \tilde \gamma^*(G_i^Z))-\tilde f(\tilde  \gamma(G_i^Z)) \}U_i \mid Z_{-i} ] = 0$. We define $h(Z):=  \frac{1}{n}\sum_{i=1}^n\{\tilde f^*( \tilde \gamma^*(G_i^Z))-\tilde f(\tilde  \gamma(G_i^Z)) \}U_i$. 
Define $\tilde I^*_k(z) := \{ \ell \in [n] : \ell \ \mbox{is a vertex in} \  G_k^z(\tilde m^*_k) \}$ and $\tilde I_k(z) := \{ \ell \in [n] : \ell \ \mbox{is a vertex in } $ $\ G_k^z(\tilde m_k) \}$.
We have that for any $z, z'$ that differs only in the $i$th component satisfies 
$$\begin{array}{rl}
|h(z)-h(z') | & \leq \left|\frac{1}{n} \sum_{k : i \in \tilde I^*_k(z) \cup \tilde I_k(z) } \{\tilde f^*( \tilde \gamma^*(G_k^z))  -\tilde f(\tilde \gamma(G_k^z)) \} U_k \right|\\
& +\left|\frac{1}{n} \sum_{k : i \in \tilde I^*_k(z') \cup \tilde I_k(z') } \{\tilde f^*( \tilde \gamma^*(G_k^{z'}))-\tilde f(\tilde \gamma^*(G_k^{z'})) \} U_k \right| \\
& + |\frac{1}{n}  \{\tilde f^*( \tilde \gamma^*(G_i^z))  -\tilde f(\tilde \gamma(G_i^z)) \} (U_i - U_i') | \\
& \leq  \frac{1}{n}\bar C |\{ k : i \in \tilde I^*_k(z) \cup \tilde I_k(z) \}| \max_{k \in [n]} | \tilde f^*( \tilde \gamma^*(G_k^{z}))-\tilde f(\tilde \gamma^*(G_k^{z})) |  \\
&+  \frac{1}{n}\bar C |\{ k : i \in \tilde I^*_k(z') \cup \tilde I_k(z') \}| \max_{k \in [n]} | \tilde f^*( \tilde \gamma^*(G_k^{z'}))-\tilde f(\tilde \gamma^*(G_k^{z'})) |  \\
& + \frac{1}{n}\bar C | \tilde f^*( \tilde \gamma^*(G_i^{z}))-\tilde f(\tilde \gamma^*(G_i^{z})) |\\
& \leq \frac{1}{n}(1+2D_n) \| \tilde f^* - \tilde f\|_\infty
\end{array}
$$ 
where the last inequality is based on the definition of $D_n$ as in Assumption \ref{assum:DR} (ii) and $\tilde\gamma \preceq  \gamma^*$. Since $Z_i$'s are independent, by McDiarmid's inequality (\cite{mcdiarmid1989method}) we have
\begin{equation}\label{eq:aux:I:part3:McDiamard}
 \mathbb{P}\left( |h(Z)|> t \right) \leq 2\exp\left( -\frac{1}{2}t^2n/\{ (1+2D_n)^2 \|\tilde f^* - \tilde f\|_\infty^2 \}\right)
 \end{equation}

\end{proof}

\begin{lemma}\label{lemma:inclusion}
Under Assumption \ref{assum:DR}(ii), suppose that $ \sqrt{2\log(2n^2/\delta)} \geq \delta_n/(\lambda - 1)$. Then we have that $\mathbb{P}(  \gamma^* \succeq \hat \gamma ) \geq 1-\delta$.
\end{lemma}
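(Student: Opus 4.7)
My plan is to work entirely on the good event $\mathcal{G}$ from Proposition~\ref{Good_event} (which holds with probability at least $1-\delta$) and argue that, on this event, the estimated depth $\hat m_i$ never exceeds the oracle depth $m_i^*$ for any $i \in [n]$. This would give $\gamma^* \succeq \hat\gamma$ deterministically on $\mathcal{G}$. Concretely, I will argue by contradiction that no pattern $g_m := \gamma_0(G_i^Z(m))$ with $m > m_i^*$ can be admitted by Step~2 of Algorithm~1.

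The first step is to fix such an $i$ and $m > m_i^*$, set $g_i^* := \gamma_0(G_i^Z(m_i^*)) \in \mathcal{K}^*$, and observe that both $g_m$ and $\mathrm{par}(g_m) = g_{m-1}$ are descendants of $g_i^*$ on node $i$'s path (since $m - 1 \geq m_i^*$). Consequently, any candidate pair $g' \succeq g_m$, $g'' \succeq \mathrm{par}(g_m)$ that could be used in Step~2 automatically satisfies $g', g'' \succeq g_i^*$, so $V_{g'}, V_{g''} \subseteq \bar V_{g_i^*}$. Since $f(g')$ and $f(g'')$ are averages of values $f(\gamma(G_j^Z))$ taken over $j \in \bar V_{g_i^*}$, the definition of the interference approximation error forces
\begin{equation*}
|f(g') - f(g'')| \;\leq\; r^Z(g_i^*) \;\leq\; \bar\sigma\,\delta_n n^{-1/2},
\end{equation*}
where the last inequality uses Assumption~\ref{assum:DR}(ii) applied to the oracle leaf $g_i^* \in \mathcal{K}^*$.

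Next, on $\mathcal{G}$, Proposition~\ref{Good_event} gives $|\hat f(g) - f(g)| \leq \alpha(g) = \bar\sigma\sqrt{2\log(2n^2/\delta)/|V_g|}$ uniformly in $g$, so a triangle inequality yields $|\hat f(g') - \hat f(g'')| \leq \alpha(g') + \alpha(g'') + \bar\sigma\,\delta_n n^{-1/2}$. Because $|V_{g'}|, |V_{g''}| \leq n$, we have $\alpha(g') + \alpha(g'') \geq 2\bar\sigma\sqrt{2\log(2n^2/\delta)/n}$, and the hypothesis $\sqrt{2\log(2n^2/\delta)} \geq \delta_n/(\lambda-1)$ gives $\bar\sigma\,\delta_n n^{-1/2} \leq (\lambda-1)\bar\sigma\sqrt{2\log(2n^2/\delta)/n} \leq (\lambda-1)(\alpha(g') + \alpha(g''))$. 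Combining these two bounds shows $|\hat f(g') - \hat f(g'')| \leq \lambda(\alpha(g') + \alpha(g''))$, which is precisely the negation of the Step~2 admission inequality. Hence no admissible pair exists, $g_m \notin \widehat{\mathcal{K}}$, and therefore $\hat m_i \leq m_i^*$ for all $i$.

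The step I expect to be most delicate is the bookkeeping in the second paragraph: one must check that \emph{every} pair $(g', g'')$ that Step~2 could invoke for $g_m$ already sits in the subtree rooted at the oracle leaf $g_i^*$, so that the uniform approximation-error control provided by $\mathcal{K}^*$ applies to both; once this observation is secured, the remaining triangle-inequality calculation and the use of the lemma's hypothesis are routine. I will also note that the uniform concentration in Proposition~\ref{Good_event} already ranges over all realizable patterns in $\mathcal{W}$ (covering both $\widehat E_n$ and $\widetilde E_n^o$), so the argument applies verbatim whether Step~0 is replaced by Algorithm~2 or not.
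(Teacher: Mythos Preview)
Your proposal is correct and follows essentially the same route as the paper's proof: both work on the concentration event of Proposition~\ref{Good_event}, fix a pattern strictly below the oracle leaf, bound $|\hat f(g') - \hat f(g'')|$ by the triangle inequality using $|f(g')-f(g'')|\le r^Z(g_i^*)\le \bar\sigma\delta_n n^{-1/2}$ together with the concentration bound $\alpha(\cdot)$, and then invoke the hypothesis $\sqrt{2\log(2n^2/\delta)}\ge \delta_n/(\lambda-1)$ to conclude that the Step~2 admission inequality cannot fire. Your write-up is slightly more explicit than the paper's (e.g., spelling out $V_{g'},V_{g''}\subseteq \bar V_{g_i^*}$ and the $|V_g|\le n$ bound), but the argument is the same.
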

\begin{proof}
With probability $1-\delta$, we have
that $|f(g)-\hat f(g)| \leq \bar{\sigma}\sqrt{ 2\log(2n^2/\delta)/|V_g|}$ uniformly over all $g\in \widehat E_n$. 

Suppose that $g \not\in \gamma^*$. Note that for any $\tilde g $ such that 
$\tilde g \succeq  par(g)$, we have that $r^Z(\tilde g) \leq r^Z(par(g)) \leq \max_{g'' \in \gamma^*} r^Z(g'') \leq \bar{\sigma} \delta_n n^{-1/2}$ by Assumption \ref{assum:DR}(ii). Thus for any  
$g',g'' \in \widehat E_n : g'\succeq \tilde g, g'' \succeq par(\tilde g) $ we have by the triangle inequality 
$$\begin{array}{rl}
|\hat f(g') - \hat f(g'')| & \leq |\hat f(g') - f(g')|+|f(g') - f(g'')|+|f(g'') - \hat f(g'')|\\
& \leq_{(1)}  \bar\sigma\sqrt{2\log(2n^2/\delta)}\left(\frac{1}{\sqrt{|V_{g'}|}} + \frac{1}{\sqrt{|V_{g''}|}}\right) + r^Z(g)\\
& \leq_{(2)}  \bar\sigma\sqrt{2\log(2n^2/\delta)}\left(\frac{1}{\sqrt{|V_{g'}|}} + \frac{1}{\sqrt{|V_{g''}|}}\right) + \delta_n\bar \sigma n^{-1/2}\\
\end{array}$$
where (1) holds with probability $1-\delta$ by Proposition \ref{Good_event}, and (2) holds by Assumption \ref{assum:DR}. Then provided that 
 $\mbox{$2\delta_n\bar \sigma n^{-1/2} \leq (\lambda - 1)\bar\sigma\sqrt{2\log(2n^2/\delta)}\left(\frac{1}{\sqrt{|V_{g'}|}} + \frac{1}{\sqrt{|V_{g''}|}}\right)$} $, 
which is implied by $ (\lambda - 1)\sqrt{2\log(2n^2/\delta)} \geq \delta_n$, we have that $\tilde g$ is pruned based on Algorithm $1$ and thus $\tilde{g}$ is not included in $\hat \gamma$.
\end{proof}

\begin{lemma}\label{lemma:sparsity_gamma_star}
Suppose Assumption \ref{assump:outc_mod} and Assumption \ref{assum:DR} hold. Let $s$ denote the number of partitions induced by $\hat f^* \sim \gamma^*$ defined in Assumption \ref{assum:DR}. We have that 
$$ s \leq (n/\underline{\sigma}^2)\mathbb{E}\left[ \mbox{$\frac{1}{n}\sum_{i=1}^n(f_i-\hat f_i^*)^2$} \mid Z \right]$$
Moreover, with probability $1-\delta$, the estimates from Algorithm 1 satisfy 
$$ \frac{1}{n}\sum_{i=1}^n (f_i-\hat f_i)^2 \leq  \frac{\lambda^2}{(\lambda-1)^2}  \frac{\bar \sigma^2\delta_n^2}{n} + C\lambda^2  \frac{\bar\sigma^2}{\underline{\sigma}^2} \log(2n^2/\delta) \mathbb{E}\left[ \frac{1}{n}\sum_{i=1}^n (\hat f_i^* - f_i)^2 \mid Z\right]  $$ and with probability $1-\delta - \Delta_n$ 
$$  
|f_i - \hat f_i |  \leq \frac{\lambda}{\lambda-1} \bar \sigma \delta_n n^{-1/2} + 3\lambda \frac{\bar\sigma}{\underline{\sigma}}  \sqrt{2\log(2n^2/\delta) \ \mathbb{E}[ | \hat f_i^* - f_i |^2 \mid Z ]} 
 $$

\end{lemma}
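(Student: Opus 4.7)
The key observation that drives all three bounds is a variance lower bound on the oracle estimator $\hat f^*$. For each node $i$ with oracle partition $g_i^* \in \mathcal{K}^*$, the estimator $\hat f_i^* = |V_{g_i^*}|^{-1}\sum_{j \in V_{g_i^*}} Y_j$ decomposes, using $Y_j = f_j + \epsilon_j$ for control $j$, into a piece that is deterministic conditional on $Z$ plus a noise average $|V_{g_i^*}|^{-1}\sum_{j \in V_{g_i^*}} \epsilon_j$. Because Assumption \ref{assump:outc_mod} forces the $\epsilon_j$ to be conditionally mean-zero and conditionally uncorrelated given $Z$, with conditional second moment at least $\underline\sigma^2$, the noise term alone contributes variance $\geq \underline\sigma^2/|V_{g_i^*}|$, and so
\begin{equation*}
\mathbb{E}\!\left[(\hat f_i^* - f_i)^2 \mid Z\right] \;\geq\; \underline\sigma^2/|V_{g_i^*}|.
\end{equation*}
I will use this fact in two ways: to count partitions, and to invert the factor $1/|V_{g_i^*}|$ that appears in the variance term of Theorem \ref{thm:single_node}.

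For the cardinality bound on $s$, I sum the display above over $i=1,\dots,n$ grouped by oracle partition. For partition $g \in \mathcal{K}^*$ the nodes mapped to $g$ number at least $|V_g|$ (they include every element of $V_g$), so the contribution of that partition is at least $|V_g| \cdot \underline\sigma^2/|V_g| = \underline\sigma^2$. Summing over the $s$ partitions gives $\sum_i \mathbb{E}[(\hat f_i^*-f_i)^2\mid Z] \geq s\,\underline\sigma^2$, which rearranges to the stated inequality.

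For the pointwise and $\ell_2$ bounds, I apply Theorem \ref{thm:single_node} with the oracle index $m = m_i^*$ corresponding to $g_i^* \in \mathcal{K}^*$ (on the high-probability event $1-\delta$). The bias term $\tfrac{\lambda}{\lambda-1} r^Z(g_i^*)$ is controlled by Assumption \ref{assum:DR}(ii) (on the event of probability $1-\Delta_n$), contributing $\tfrac{\lambda}{\lambda-1}\bar\sigma\delta_n n^{-1/2}$. The variance term $3\lambda\bar\sigma\sqrt{2\log(2n^2/\delta)/|V_{g_i^*}|}$ is handled by inverting the first-paragraph lower bound, which replaces $1/|V_{g_i^*}|$ by $\underline\sigma^{-2}\,\mathbb{E}[(\hat f_i^* - f_i)^2\mid Z]$ and produces exactly the second term in the pointwise bound. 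Squaring this pointwise bound, using $(a+b)^2 \leq 2a^2 + 2b^2$, averaging over $i$, and absorbing the numerical constants (including the factor $2$ and the $9$ from $(3\lambda)^2$) into $C$ yields the displayed $\ell_2$ bound.

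\textbf{Main obstacle.} The only delicate step is the variance lower bound. It relies crucially on the conditional independence and second-moment pieces of Assumption \ref{assump:outc_mod}, so that the deterministic bias $f_j - f_i$ (which is $Z$-measurable) cannot cancel the pure-noise contribution $\underline\sigma^2/|V_{g_i^*}|$. Once this lower bound is established, the remaining work in all three parts is essentially algebra atop Theorem \ref{thm:single_node} and Assumption \ref{assum:DR}(ii); no new probabilistic tools are needed.
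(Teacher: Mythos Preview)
Your proposal is correct and follows essentially the same route as the paper: the variance lower bound $\mathbb{E}[(\hat f_i^*-f_i)^2\mid Z]\ge \underline\sigma^2/|V_{g_i^*}|$ drives the sparsity count, and Theorem~\ref{thm:single_node} applied at the oracle pattern $g_i^*$ (with the bias controlled by Assumption~\ref{assum:DR}(ii)) yields the pointwise and $\ell_2$ bounds. The only cosmetic difference is that the paper first applies a Jensen step to pass to $(\bar f_g-\hat f_g^*)^2$ before invoking the variance lower bound, whereas you establish the pointwise lower bound directly; your route is slightly more streamlined but equivalent.
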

\begin{proof}
We recall some notation for convenience. In what follows we denote $\bar V_g := \{ i \in [n] : g = \gamma^*(G_i^Z)\}$ and $ V_g := \{ i \in [n] : g = \gamma^*(G_i^Z), Z_i=0\}$. By definition, for any $g \in \widehat E_n$ we have $\hat f(g) = \frac{1}{|V_g|}\sum_{i \in V_g}Y_i$. The estimator defined by Algorithm 1 is $\hat f_i = \hat f(g)$ where $g=\gamma_0(G_i^Z(\hat m_i))$ and the (near oracle) estimator defined in Assumption \ref{assum:DR}.(ii) is defined as $\hat f_i^* = \hat f(g)$ where $g=\gamma^*(G_i^Z) = \gamma_0(G_i^Z(m_i^*))$. The true interference function is denoted by $f_i = f(\gamma(G_i^Z))$, and we denote $f_g := \frac{1}{|\bar V_g|}\sum_{i\in\bar V_g}f_i$ and $\bar f_g := \frac{1}{|\bar V_g|}\sum_{i\in\bar V_g}f_i$ and $\hat f^*_g = \hat f(g)$.

We have that 
\begin{equation}\label{eq:main_sparse:1}  \frac{1}{n}\sum_{i=1}^n(f_i-\hat f_i^*)^2 =  \frac{1}{n}\sum_{g \in \gamma^*}\sum_{i \in \bar V_g}  (f_i-\hat f_g^*)^2    
\geq \frac{1}{n}\sum_{ g \in \gamma^*} |\bar V_g|  ( \bar f_g^*-\hat f_g^*)^2  \end{equation} 
where the second inequality follows from convexity and $\bar f_g^*= \frac{1}{|\bar V_g|}\sum_{i \in \bar V_g} f_i$.

Next note that for any $g \in \widehat{E}_n$ such that $g=\gamma^*(G_i^Z)$, we have that 
$$\begin{array}{rl}
( \bar f_g -\hat f_g^*)^2 & = ( \frac{1}{|\bar V_g|}\sum_{i \in \bar V_g} f_i - \frac{1}{|V_g|}\sum_{i \in V_g} Y_i)^2\\
& = (\bar f_g - f_g)^2+ 2(\bar f_g - f_g)\frac{\sum_{i \in V_g}\epsilon_i}{|V_g|} + \frac{(\sum_{i \in V_g}\epsilon_i)^2}{| V_g|^2} \\
\end{array}
$$

So that 
\begin{equation}\label{eq:lb}\begin{array}{rl}
\mathbb{E}[( \bar f_g-\hat f_g^*)^2 \mid Z]  & \geq \underline{\sigma}^2/| V_g| \\
\end{array}\end{equation}
where the last bound follows from $\mathbb{E}[\epsilon_i^2\mid Z,X] \geq \underline{\sigma}^2$ and $\mathbb{E}[\epsilon_i\epsilon_j\mid Z]=0$ by Assumption \ref{assump:outc_mod}.

Thus we have
$$\begin{array}{rl}
\mathbb{E}\left[ \frac{1}{n}\sum_{i=1}^n(f_i-\hat f_i^*)^2 \mid Z \right] & 
\geq_{(1)}  \mathbb{E}\left[\frac{1}{n}\sum_{g \in \gamma^*} |\bar V_g|(\bar f_g-\hat f_g^*)^2 \mid Z \right] \\
&\geq_{(2)}   \frac{1}{n}\sum_{g \in \gamma^*} |\bar V_g|\mathbb{E}\left[(\bar f_g-\hat f_g^*)^2 \mid Z \right] \\
& \geq_{(3)} \underline{\sigma}^2 \frac{1}{n}\sum_{g \in \gamma^*} \frac{|\bar V_g|}{|V_g|} \geq_{(4)} \underline{\sigma}^2 s / n \\
\end{array}
$$ 
where (1) by  (\ref{eq:main_sparse:1}), (2) follows since $|\bar V_g|$ is known given $Z$, (3) by the lower bound (\ref{eq:lb}), and (4) by $|\bar V_g| \geq |V_g|$ by definition. Therefore we have
\begin{equation}\label{eq:refref} \mbox{$\underline{\sigma}^2 \frac{s}{n} \leq   \underline{\sigma}^2 \frac{1}{n}\sum_{g \in \gamma^*} \frac{|\bar V_g|}{|V_g|} \leq  
\mathbb{E}\left[ \frac{1}{n}\sum_{i=1}^n(f_i-\hat f_i^*)^2 \mid Z\right]$}  \end{equation} 

To show the second part, for each node $i \in [n]$ let $g_i=\gamma^*(G_i^Z)$, the pattern associated with $\gamma^*$. Then with probability $1-\delta-\Delta_n$, for all $i\in[n]$, we have
\begin{equation}\label{eq:uniform:cor6:DR}\begin{array}{rl}
\ |f_i - \hat f_i |  & \leq_{(1)} \frac{\lambda}{\lambda-1} r^Z(g_i) + 3\lambda \bar\sigma\sqrt{\frac{2\log(2n^2/\delta)}{|V_{g_i}|}}\\
& \leq_{(2)} \frac{\lambda}{\lambda-1} \bar \sigma \delta_n n^{-1/2} + 3\lambda \bar\sigma\sqrt{\frac{2\log(2n^2/\delta)}{|V_{g_i}|}} \\
\end{array}
\end{equation}
where (1) holds by Corollary \ref{cor:second_context_tree}, and (2) holds by Assumption \ref{assum:DR} since $g_i\in \gamma^*$.

Thus we have that 
$$\begin{array}{rl}
\frac{1}{n}\sum_{i=1}^n|f_i - \hat f_i |^2 & \leq C_\lambda \frac{\delta_n^2\bar \sigma^2}{n} + C\lambda^2 \bar\sigma^2 \frac{2}{n}\sum_{i=1}^n\frac{\log(2n^2/\delta)}{|V_{g_i}|}\\
& = C_\lambda \frac{\delta_n^2\bar \sigma^2}{n} + C\lambda^2 \bar\sigma^2 \frac{2\log(2n^2/\delta)}{n}\sum_{g \in \gamma^*}\frac{|\bar{V}_{g}|}{|V_{g}|}\\
& \leq C_\lambda \delta_n^2\bar \sigma^2 n^{-1} + 2C\lambda^2 \frac{\bar\sigma^2}{\underline{\sigma}^2} \log(2n^2/\delta)\mathbb{E}\left[\frac{1}{n}\sum_{j=1}^n (\hat f_i^* - f_i)^2 \mid Z\right] \\
\end{array}$$
where the last step follows from the second inequality in (\ref{eq:refref}). Similarly, because
\begin{equation}\label{eq:aux:uniform:cor6} \mathbb{E}[ (f_i - \hat f_i^*)^2 \mid Z] \geq \underline{\sigma}^2 / |V_{g_i}|\end{equation}
we have with probability $1-\delta -\Delta_n$
$$\begin{array}{rl} 
|f_i - \hat f_i | & \leq \frac{\lambda}{\lambda-1} \bar \sigma  \delta_n n^{-1/2} + 3\lambda \bar\sigma\sqrt{\frac{2\log(2n^2/\delta)}{|V_{g_i}|}}\\ 
& \leq \frac{\lambda}{\lambda-1} \bar \delta_n \sigma n^{-1/2} + 3\lambda \frac{\bar\sigma}{\underline{\sigma}}\sqrt{2\log(2n^2/\delta) \mathbb{E}[(f_i-\hat f_i^*)^2\mid Z]}\\
& \leq \frac{\lambda+2}{\lambda-1} \delta_n\bar \sigma n^{-1/2} + 3\lambda \frac{\bar\sigma^2}{\underline{\sigma}}\sqrt{2\log(2n^2/\delta)} \frac{ \delta_n n^{-1/4}}{\log(n)} 
\end{array}$$
where the first inequality holds by (\ref{eq:uniform:cor6:DR}), the second holds  by (\ref{eq:aux:uniform:cor6}), and the third inequality by Assumption \ref{assum:DR}(ii). 
\end{proof}

\begin{proof}[Proof of Corollary \ref{cor:var}]
Let $N_1 := \sum_{i=1}^n Z_i$, 
$$
\begin{array}{l}
\tilde \psi_i = Z_i(Y_i-\hat\tau^{AIPW} - \hat f_i)-(Y_i-\hat f_i)(1-Z_i)\hat e_i/(1-\hat e_i)\\
\tilde \psi_i^* = Z_i(Y_i-\hat\tau^{AIPW} -  f_i)-(Y_i- f_i)(1-Z_i) e_i/(1-e_i), \mbox{ and } \\ 
\psi_i = Z_i(Y_i-\tau_i - f_i)-(Y_i- f_i)(1-Z_i) e_i/(1-e_i).\end{array}$$ We have that
$$
\begin{array}{rl}
&\Sigma_n^{1/2} - \widetilde{\Sigma}_n^{1/2}  = \Sigma_n^{1/2} - \sqrt{\frac{1}{N_1} \sum_{i=1}^n (\tilde \psi_i)^2} \\
& \leq \Sigma_n^{1/2} - \sqrt{\frac{1}{N_1} \sum_{i=1}^n (\tilde \psi_i^*)^2}  + \sqrt{\frac{1}{N_1} \sum_{i=1}^n (\tilde \psi_i - \tilde \psi_i^*)^2 } \\ & = \Sigma_n^{1/2} - \sqrt{\frac{1}{N_1} \sum_{i=1}^n (\psi_i+Z_i\{\tau_i-\hat\tau^{AIPW}\})^2}  + \sqrt{\frac{1}{N_1} \sum_{i=1}^n (\tilde \psi_i - \tilde \psi_i^*)^2 } \\
& \leq \Sigma_n^{1/2} - \sqrt{\frac{1}{N_1} \sum_{i=1}^n (\psi_i+Z_i\{\tau_i-\tau\})^2} + |\tau-\hat \tau^{AIPW}|  + \sqrt{\frac{1}{N_1} \sum_{i=1}^n (\tilde \psi_i - \tilde \psi_i^*)^2 } \\
& \leq \Sigma_n^{1/2} - \sqrt{\frac{1}{N_1} \sum_{i=1}^n (\psi_i+Z_i\{\tau_i-\tau\})^2} + C \delta_n
\end{array}
$$
since $|\tau-\hat \tau^{AIPW}|\leq C\delta_n$ with probability $1-o(1)$ by Theorem \ref{thm:dra_tau_inf}, and 
$$ 
\begin{array}{rl}
\sqrt{\frac{1}{N_1} \sum_{i=1}^n (\tilde \psi_i - \tilde \psi_i^*)^2 } & \leq_{(1)} C\sqrt{\frac{1}{N_1} \sum_{i=1}^n (f_i-\hat f_i)^2} \\
& +C\sqrt{\frac{1}{N_1} \sum_{i=1}^n (Y_i-f_i)^2(1-Z_i)\left( \frac{\hat e_i}{1-\hat e_i}-\frac{e_i}{1- e_i}\right)^2} \\
& \leq_{(2)}  C\delta_n + C\bar \sigma \sqrt{\log n} \sqrt{\frac{1}{N_1} \sum_{i=1}^n \left( \frac{\hat e_i}{1-\hat e_i}-\frac{e_i}{1- e_i}\right)^2} \leq_{(3)} C\delta_n \\
\end{array}
$$
where (1) holds by triangle inequality, (2) follows with probability $1-o(1)$ by Lemma 
\ref{lemma:sparsity_gamma_star} 
and by $\max_{i\in[n]} |Y_i-f_i|(1-Z_i) \leq C\bar\sigma\sqrt{\log n}$ holding with probability $1-o(1)$ by  $\epsilon_i$, $i\in[n]$, being independent zero mean subgaussian random variables by Assumption \ref{assump:outc_mod}, and (3) from Assumption \ref{assum:DR}(i). Finally
$$\begin{array}{rl} 
\frac{1}{N_1} \sum_{i=1}^n (\psi_i+\{\tau_i-\tau\})^2 & = \frac{1}{N_1} \sum_{i=1}^n \psi_i^2+\frac{1}{N_1} \sum_{i=1}^nZ_i\{\tau_i-\tau\}^2 + \frac{2}{N_1}\sum_{i=1}^n(\tau_i-\tau)\psi_i\\
& \geq \frac{1}{N_1} \sum_{i=1}^n \psi_i^2+\frac{1}{N_1} \sum_{i=1}^nZ_i\{\tau_i-\tau\}^2 - C \delta_n\\
& \geq \frac{1}{N_1} \sum_{i=1}^n \psi_i^2 - C \delta_n = \Sigma_n + (\widehat \Sigma_n - \Sigma_n  ) - C \delta_n\\
\end{array}$$
with probability $1-o(1)$ since $\mathbb{E}[(\tau_i-\tau)\psi_i\mid X]=0$, $|\tau_i-\tau|\leq C$ and $\psi_i$ is sub-Gaussian.

Thus, the first and second results follow from $|\widehat \Sigma_n - \Sigma_n | \leq C\delta_n $. Since $\psi_i$ are sub-Gaussian and independent, the result follows.
\end{proof}

\subsection{Proofs of Section \ref{sec:OutcomeRegression}}

\begin{proof}[Proof of Theorem \ref{ora_tau_inf}]
Let $f_i := f(\gamma(G_i^Z))$. We have
\begin{equation}\label{eq:or:main_main}
\begin{split}
\left\lvert \hat{\tau}^{OR}-\tau  \right\rvert & =_{(1)}\Bigg| \frac{\mbox{$\sum_{i=1}^n Z_i ( Y_i - \hat f_i )$} }{\mbox{$\sum_{i=1}^n Z_i$} }-\tau  \Bigg|  =_{(2)}\left\lvert \frac{\mbox{$\sum_{i=1}^n Z_i ( \widetilde{Y}_i(1,Z_{-i}) - \hat f_i )$}  }{\mbox{$\sum_{i=1}^n Z_i$} }-\tau  \right\rvert \\
& =_{(3)}\left\lvert \frac{1}{\mbox{$\sum_{i=1}^n Z_i$}}\sum_{i=1}^n Z_i \left( \tau_i + f_i + \epsilon_i - \hat f_i \right) -\tau  \right\rvert \\
& \leq_{(4)}  \frac{1}{\mbox{$\sum_{i=1}^n Z_i$} }  \left\lvert  \sum_{i=1}^n Z_i   \left( f_i -  \hat{f}_i    \right) \right\rvert+ \frac{1}{\mbox{$\sum_{i=1}^n Z_i$} } \left\lvert \sum_{i=1 }^n Z_i   \epsilon_i  \right\rvert \\
\end{split}
\end{equation}
where (1) holds by definition of the estimator, (2) holds by the potential outcomes definition, (3) holds by Assumption \ref{assump:outc_mod}, and (4) by the triangle inequality. 
To bound the second term in the last line of (\ref{eq:or:main_main}), by $\epsilon_i$, $i\in[n]$, being independent zero mean subgaussian random variables by Assumption \ref{assump:outc_mod} so that

\begin{equation}\label{OR:ATT:part1}
\begin{array}{rl}
\mathbb{P}\left( \left\lvert \frac{\sum_{i=1}^n Z_i \epsilon_i }{\sum_{i=1}^n Z_i} \right\rvert >t \right) & = \mathbb{E}\left[\mathbb{P}\left( \left| \frac{\sum_{i=1}^n Z_i \cdot  \epsilon_i  }{\sum_{i=1}^n Z_i} \right| >t \mid Z \right)\right] \\
& \leq 2\mathbb{E}\left[\exp\left(-\frac{t^2 (\sum_{i=1}^n Z_i)^2}{2 \sum_{i =1 }^n Z_i^2 \bar\sigma^2 }\right)\right]\\
&\leq 2\mathbb{E}\left[\exp\left(-\frac{t^2 (\sum_{i=1}^n Z_i^2)}{2 \bar\sigma^2 } \right)\right] \leq \delta    
\end{array} 
\end{equation} 
by taking $t=t(Z)=\bar \sigma \sqrt{2\log(2/\delta)/\sum_{i=1}^n Z_i}$ since $Z_i\in\{0,1\}$ for $i \in [n]$. Moreover, \begin{equation}\label{OR:ATT:part2}\mathbb{P}\left( \sum_{i=1}^n Z_i < nc/2 \right) = \mathbb{E}_X\left[ \mathbb{P}\left( \sum_{i=1}^n Z_i < nc/2 \mid X \right) \right]  \leq \exp(- n c^2/8 )  \end{equation}
since $\mathbb{P}(Z_i=1 \mid X ) \geq c$ and the components of $Z$ being conditionally independent by Assumption \ref{cond_unconf}.(ii). 

To bound the first term in the last line of (\ref{eq:or:main_main}), by Theorem \ref{thm:single_node}, with probability $1-\delta$, for all $i\in[n]$, we have
\begin{equation}\label{OR:ATT:part3}
   |f_i - \hat f_i| \leq 3\lambda \min_{k \in [n], g_k=\gamma_0(G_i^Z(k))} \frac{r^Z(g_k)}{\lambda-1}+\bar \sigma \sqrt{\frac{2\log(2n^2/\delta)}{|V_{g_k}|}} \end{equation}

The result follows by combining (\ref{OR:ATT:part1}), (\ref{OR:ATT:part2}), and (\ref{OR:ATT:part3}) into  (\ref{eq:or:main_main}).
\end{proof}

\section{Proof of Examples}

\begin{proof}[Proof of Example 4]
		For each fixed $g_k$, $|V_{g_k}|=\mbox{$\sum_{i=1}^n$} 1\{\gamma_0(G^Z_i(k)) =g_k \}$. Based on the graph and treatment generating processes,
		\begin{equation*}
		\begin{split}
		& \mathbb{E}\left(\mbox{$\sum_{i=1}^n$} 1\{\gamma_0(G^Z_i(k)) =g_k \} \right)= \mbox{$\sum_{i=1}^n$} \mathbb{P}\left( \gamma_0(G_i^Z(k))=g_k  \right) \\
		& = \mbox{$\sum_{i=1}^n$} \left(1/2\right)^{1+\sum_{j=1}^{k} jd} = n \cdot \left( 1/2 \right)^{1+\frac{d(k+1)k}{2}} \\
		\end{split}
		\end{equation*}
		Let a realized $z=(z_1,\cdots,z_i,\cdots,z_n)$. The treatment vector of which only $u$-th element is different from $z$ is defined as $z_{(u)}:=(z_1,\cdots,1-z_u,\cdots,z_n)$. Then for $\forall u \in [n]$, 
	\begin{small}
        \begin{equation*}
		\begin{split}
		& \sup_{z \in \{0,1\}^n} \left\lvert \mbox{$\sum_{i=1}^n$} 1\{\gamma_0(G^z_i(k)) =g_k \}-\mbox{$\sum_{i=1}^n$} 1\{\gamma_0(G^{z_{(u)}}_i(k)) =g_k \} \right\rvert  \leq  \mbox{$\sum_{j=1}^k$} jd = 2^{-1} d(k+1)k   \\
		\end{split}
		\end{equation*}
            \end{small}
		where the inequality is obtained from counting the number of units whose pattern may change because of altering $u$-th component. Then 
		by utilizing McDiarmid's inequality in \cite{mcdiarmid1989method}, we have  
		\begin{equation*}
		\begin{split}
		\mathbb{P}\left(|V_{g_k}|-\mathbb{E}(|V_{g_k}|) \geq -t  \right) \leq  \exp\left(-{2t^2}/\mbox{$\sum_{i=1}^n$} d^2 (k+1)^2k^2/4\right)
		\end{split}
		\end{equation*}
		Let $g_{i,k}:=\gamma_0(G^\mathbf{Z}_{i}(k))$ for $i\in [n]$, i.e., $g_{i,k}$ is the pattern induced by specific $\mathbf{Z}$ and $k$-hop neighborhood from unit $i$. Then by taking $t=n\cdot \left( \frac{1}{2} \right)^{2+\frac{d(k+1)k}{2}}$,
		\begin{equation}
		\label{lower_bound_V_g}
		\begin{split}
		& \mathbb{P}\left( \cap_{i=1}^n \left\lbrace  |V_{g_{i,k}}|-\mathbb{E}(|V_{g_{i,k}}|)  \geq -t \right\rbrace  \right) \leq \sum_{i=1}^n \mathbb{P}\left(   |V_{g_{i,k}}| \geq n \cdot \left( 1/2 \right)^{2+\frac{d(k+1)k}{2}} \right) \\
		& \leq n \cdot \exp\left( - \frac{8n (1/2)^{4+d(k+1)k}}{d^2(k+1)^2k^2}     \right)= \exp \left( \log n - \frac{n}{2^{1+d(k+1)k} \cdot d^2(k+1)^2 k^2 } \right) 
		\end{split}
		\end{equation}
		Therefore, based on Theorem \ref{thm:single_node} and formula \eqref{lower_bound_V_g}, 
		\begin{equation*}
		\begin{split}
		\max_{i\in [n]} |f(G^\mathbf{Z}_{i})-\hat{f}_i| & \leq \max_{i \in [n]} \  \frac{\lambda+2}{\lambda-1} r^\mathbf{Z}(g_{i,m_0})+ 3\lambda \bar \sigma \sqrt{\frac{2\log({2n^3}/{\delta})}{|V_{g_{i, m_0}}|}} \\
		& \leq \max_{i\in [n]} \ 0+3\lambda \bar{\sigma} \cdot \sqrt{ \frac{2\log ({2 n^3}/{\delta})}{|V_{g_{i,m_0}}|}} \leq  3\lambda \bar{\sigma} 2^{\frac{3}{2}+\frac{d}{4}(k+1)k} \cdot \sqrt{ \frac{\log ({2 n^3}/{\delta})}{n}}  \\
		\end{split}
		\end{equation*}
		with probability at least $1-\exp \left( \log n - \frac{n}{2^{1+d(k+1)k} \cdot d^2(k+1)^2 k^2 } \right) -\delta$. The final result is obtained by taking $k=m_0$. 
	\end{proof}

\begin{proof}[Proof of Example \ref{example_exp_decay_intf}]
Based on Algorithm $1$, we take $\lambda=2$ in the following proof. To obtain the upper bound in the statement, we have
\begin{equation*}
	\begin{split}
	& \max_{i\in [n]}  |f(G^Z_{i})-\hat{f}_i|  \leq_{(1)} \max_{i \in [n]}  \frac{\lambda}{\lambda-1} r^Z(g_{i,k})+ 3\lambda \bar \sigma \sqrt{\frac{2\log({2n^2}/{\delta})}{|V_{g_{i, k}}|}} \\
	& \leq_{(2)} \frac{\lambda}{\lambda-1}  C^{\prime} \exp(-4k^2)  +3\lambda \bar{\sigma} 2^{\frac{3}{2}+\frac{d}{4}(k+1)k} \cdot \sqrt{ \frac{\log ({2 n^2}/{\delta})}{n}} \\
	& \leq_{(3)} \frac{1}{2} \cdot \frac{\lambda C^\prime \bar{\sigma}^2}{\lambda-1} \cdot \frac{1}{n^{\frac{1}{4}}\log n}   + 3\lambda \bar{\sigma} 2^{\frac{3}{2}+\frac{d}{4}(k+1)k} \cdot \sqrt{ \frac{\log ({2 n^2}/{\delta})}{n}} \leq_{(4)} \frac{\lambda C^\prime \bar{\sigma}^2}{\lambda-1} \cdot \frac{1}{n^{\frac{1}{4}}\log n}
	\end{split}
\end{equation*}
$(1)$ is based on Theorem \ref{thm:single_node} and it holds for each $k \in [n]$. In steps $(2)-(4)$, we will seek for sufficient conditions of $k$ such that the bounds can reach desirable accuracy as in $(4)$. $(2)$ is based on the upper bound of approximation error in the statement and the lower bound of $|V_{g_k}|$ in Example \ref{example_threshold_intf} with probability at least $1-\delta - n\exp \left(- \frac{n}{d^2(k+1)^2 k^2 \cdot 2^{1+d(k+1)k} } \right)$. We then derive the sufficient condition of $k$ such that $\frac{n}{d^2(k+1)^2 k^2 \cdot 2^{1+d(k+1)k}} \geq 2\log n$. Note $\frac{n}{d^2(k+1)^2 k^2 \cdot 2^{1+d(k+1)k}} \geq \frac{n}{2^{1+d(k+1)k+\log(6d^2)\cdot k}}$ for $k\geq 0$. Then by setting $\frac{n}{2^{1+d(k+1)k+\log(6d^2)\cdot k}} \geq 2 \log n$, we have 
\begin{equation*}
    \begin{split}
     & 0 \leq  k \leq -\frac{1}{2}-\frac{1}{2d}\log(6d^2)+\sqrt{ \left[\frac{1}{2}+\frac{1}{2d}\log(6d^2) \right]^2+\log\left(\frac{n^{1/d}}{2^{1/d}(\log n)^{1/d}}\right)-\frac{1}{d}} \\
    \end{split}
\end{equation*}
Under this condition, with probability at least $1-\delta-\frac{1}{n}$, the lower bound of $|V_{g_{i,k}}|$ holds uniformly across different units. Furthermore, the necessary condition of $n$ such that $0 \leq -\frac{1}{2}-\frac{1}{2d}\log(6d^2)+\sqrt{ \left[\frac{1}{2}+\frac{1}{2d}\log(6d^2) \right]^2+\log\left(\frac{n^{1/d}}{2^{1/d}(\log n)^{1/d}}\right)-\frac{1}{d}}$ is $\frac{n}{(\log n)}\geq 2e$. Therefore, we take $n \geq 15$.

For deriving $(3)$, the sufficient condition of $k$ such that $\frac{\lambda}{\lambda-1} C^\prime \exp(-4k^2) \leq \frac{1}{2} \cdot \frac{\lambda C^\prime \bar{\sigma}^2}{\lambda-1} \cdot \frac{1}{n^{\frac{1}{4}}\log n}$ is $k \geq \sqrt{ \log \left(  \frac{2^{1/4}}{\bar{\sigma}^{1/2}} \cdot n^{1/16}\cdot (\log n)^{1/4} \right) }$. We next derive the sufficient condition of $k$ for $(4)$. By considering $3\lambda \bar{\sigma} 2^{\frac{3}{2}+\frac{d}{4}(k+1)k} \cdot \sqrt{ \frac{\log (\frac{2 n^2}{\delta})}{n}}\leq  \frac{1}{2} \cdot \frac{\lambda C^\prime \bar{\sigma}^2}{\lambda-1} \cdot \frac{1}{n^{\frac{1}{4}}\log n}$ where $\delta= \frac{1}{n}$, we have $0\leq k \leq -\frac{1}{2}+\sqrt{\frac{1}{4}+ \log\left(  \frac{C^\prime \bar{\sigma}}{ 3 (\lambda-1) 2^{5/2} }   \frac{n^{1/d}}{(\log n)^{4/d} (\log (2n^3))^{2/d} } \right)}$. Such range of $k$ leads to the necessary condition of $n$. That is, $\frac{n^{1/d}}{(\log n)^{4/d} \cdot (\log (2n^3))^{2/d}} \geq \frac{3(\lambda-1)2^{5/2}}{C^\prime \bar{\sigma}}$, i.e., $n\geq 2$ by plugging in the lower bound of $C^\prime$, $\lambda=2$ and $\bar{\sigma}^2=1$.  
Next we let $$a:= \sqrt{ \log \left(  \frac{2^{1/4}}{\bar{\sigma}^{1/2}} \cdot n^{1/16}\cdot (\log n)^{1/4} \right)}$$ $$b:=-\frac{1}{2}-\frac{1}{2d}\log(6d^2)+\sqrt{ \left[\frac{1}{2}+\frac{1}{2d}\log(6d^2) \right]^2+\log\left(\frac{n^{1/d}}{2^{1/d}(\log n)^{1/d}}\right)-\frac{1}{d}}$$ and 
$$c:=-\frac{1}{2}+\sqrt{\frac{1}{4}+ \log\left(  \frac{C^\prime \bar{\sigma}}{ 3 (\lambda-1) 2^{5/2} }   \frac{n^{1/d}}{(\log n)^{4/d} (\log (2n^3))^{2/d} } \right)}.$$ 
Then we have, by taking $a\leq k \leq \min(b,c)$, with probability at least $1-\frac{2}{n}$,
\begin{equation*}
	\begin{split}
	& \max_{i\in [n]} |f(G^Z_{i})-\hat{f}_i| \leq \frac{\lambda C^\prime \bar{\sigma}^2}{\lambda-1} \cdot \frac{1}{n^{\frac{1}{4}}\log n}. 
	\end{split}
\end{equation*}
Now we obtain the sufficient conditions of $n$ such that $a \leq b$. Note $b^2$ can be lower bounded by 
\begin{small}
\begin{equation*}
    \begin{split}
       b^2 & = \frac{1}{d} \log \left( \frac{n}{ 2 \log n} \right) - \frac{1}{d}+  \frac{1}{2} \left(1+\frac{1}{d} \log (6d^2) \right)^2 \\
       & - \left(1+\frac{1}{d} \log(6d^2) \right)\cdot \sqrt{ \frac{1}{4}\left( 1+ \frac{1}{d} \log(6d^2) \right)^2 + \frac{1}{d} \log \left( \frac{n}{2 \log n} \right) -\frac{1}{d} } \\
       & \geq \frac{1}{d} \log \left( \frac{n }{2 \log n}  \right)- \frac{1}{d} - \left(1+\frac{1}{d} \log(6d^2) \right)\cdot \sqrt{\frac{1}{d} \log \left( \frac{n}{2 \log n}  \right) -\frac{1}{d}   }   \geq_{(1)} \frac{1}{2d} \left[  \log \left( \frac{n}{2 \log n} \right)- 1 \right]
    \end{split}
\end{equation*}
\end{small}
$(1)$ is by taking $n\geq 5 \cdot 10^9, \lambda =2$ and $\bar{\sigma}^2=1$ such that $$ \left(1+\frac{1}{d} \log(6d^2) \right)\cdot \sqrt{\frac{1}{d} \log \left( \frac{n}{2 \log n}  \right) -\frac{1}{d} }\leq \frac{1}{2} \left( \frac{1}{d} \log\left( \frac{n}{2 \log n}   \right) - \frac{1}{d}    \right). $$
Then by setting $a^2 \leq \frac{1}{2} \left( \frac{1}{d} \log \left( \frac{n}{2 \log n} \right)- \frac{1}{d}  \right) $, we obtain the sufficient condition of $n$ for this bound. That is, $\frac{n^{1/16}}{(\log n)^{3/8}} \geq (2^3e)^{1/8}$. Therefore, we take $n\geq 10^{12}$, which satisfies this sufficient condition.

Similarly, we derive the sufficient condition of $n$ such that $a \leq c$. Note the lower bound of $c^2$ is 
\begin{small}
\begin{equation*}
    \begin{split}
      c^2  & = \frac{1}{2} + \log\left(  \frac{C^\prime \bar{\sigma}}{ 3 (\lambda-1) 2^{5/2} }   \frac{n^{1/d}}{(\log n)^{4/d} (\log (2n^3))^{2/d} } \right) \\ & -\sqrt{\frac{1}{4}+ \log\left(  \frac{C^\prime \bar{\sigma}}{ 3 (\lambda-1) 2^{5/2} }   \frac{n^{1/d}}{(\log n)^{4/d} (\log (2n^3))^{2/d} } \right)} \\ 
      & \geq \log\left(  \frac{C^\prime \bar{\sigma}}{ 3 (\lambda-1) 2^{5/2} }   \frac{n^{1/d}}{(\log n)^{4/d} (\log (2n^3))^{2/d} } \right)\\
      & - \sqrt{\log\left(  \frac{C^\prime \bar{\sigma}}{ 3 (\lambda-1) 2^{5/2} }   \frac{n^{1/d}}{(\log n)^{4/d} (\log (2n^3))^{2/d} } \right)}\\
      & \geq_{(1)} \frac{1}{2} \log\left(  \frac{C^\prime \bar{\sigma}}{ 3 (\lambda-1) 2^{5/2} }   \frac{n^{1/d}}{(\log n)^{4/d} (\log (2n^3))^{2/d} } \right)
    \end{split}
\end{equation*}
\end{small}
$(1)$ is by taking $n \geq 1.5 \cdot 10^7$ such that, by plugging in the values of $C^\prime, \bar{\sigma}^2$ and $\lambda$
\begin{small}
$$ \sqrt{\log\left(  \frac{C^\prime \bar{\sigma}}{ 3 (\lambda-1) 2^{5/2} }   \frac{n^{1/d}}{(\log n)^{4/d} (\log (2n^3))^{2/d} } \right)}  \leq \frac{1}{2} \log\left(  \frac{C^\prime \bar{\sigma}}{ 3 (\lambda-1) 2^{5/2} }  \frac{n^{1/d}}{(\log n)^{4/d} (\log (2n^3))^{2/d} } \right).$$
\end{small}
Then by setting $a^2 \leq \frac{1}{2} \log\left(  \frac{C^\prime \bar{\sigma}}{ 3 (\lambda-1) 2^{5/2} } \cdot   \frac{n^{1/d}}{(\log n)^{4/d} (\log (2n^3))^{2/d} } \right)$, we have the sufficient condition $n \geq 2$. 
Therefore, combining all conditions of $n$, by setting $n \geq n_0$ where $n_0\geq 10^{12}$, we have with probability at least $1-\frac{2}{n}$, $$\max_{i\in[n]} |f(G_i^Z)- \hat f_i |  \leq   \frac{\lambda C^\prime \bar{\sigma}^2}{\lambda-1} \cdot \frac{1}{n^{\frac{1}{4}}\log n}.$$
\end{proof}

\section{Technical Results for Randomized Treatments}
The proof of Proposition \ref{clt_delta} relies on  Lemmas \ref{exp_ratio_ratio_exp}, \ref{clt_numerator}, \ref{clt_denominator} and \ref{multi_clt}.
\begin{lemma}
\label{exp_ratio_ratio_exp}
Under Assumptions \ref{ass:cond_ind} and \ref{assump:outc_mod} we have 
   \begin{equation*}
        \begin{split}
         \left\lvert   \mathbb{E}\left( \frac{ \mbox{$\sum_{i=1}^n$} \tau_i Z_i}{ \mbox{$\sum_{i=1}^n$} Z_i }  \right) - \frac{\mathbb{E}(\mbox{$\sum_{i=1}^n$} \tau_i Z_i)}{ \mathbb{E} (\mbox{$\sum_{i=1}^n$} Z_i) } \right\rvert \leq  C {n}^{-1}. 
        \end{split}
    \end{equation*}
\end{lemma}

\begin{lemma}
\label{clt_numerator}
Under Assumptions \ref{ass:cond_ind} and \ref{assump:outc_mod} we have
    \begin{equation*}
        \begin{split}
            \sqrt{n} \sigma^{-1}_{11} \left[ \frac{1}{n} \mbox{$\sum_{i=1}^n$} \left(Z_i(Y_i-f_i)- \frac{e_i}{1-e_i} (1-Z_i)(Y_i-f_i)  \right)- \frac{1}{n} \mathbb{E}\left(\mbox{$\sum_{i=1}^n$}  Z_i\tau_i\right)  \right] \rightarrow N(0,1)
        \end{split}
    \end{equation*}
where $\sigma^{2}_{11}=\frac{1}{n}\mathbb{E}_{Z,Y} [\Gamma_n]+\frac{1}{n} \sum_{i=1}^n \tau^2_i e_i (1-e_i)$.
\end{lemma}

\begin{lemma}
\label{clt_denominator}
Under Assumptions \ref{ass:cond_ind} and \ref{assump:outc_mod} we have
  \begin{equation*}
        \begin{split}
            \sqrt{n} \sigma^{-1}_{22} \left( \frac{1}{n} \mbox{$\sum_{i=1}^n$} Z_i- \frac{1}{n} \mbox{$\sum_{i=1}^n$} e_i  \right) \rightarrow N(0,1)
        \end{split}
    \end{equation*}
where $\sigma^{2}_{22}=\frac{1}{n} \sum_{i=1}^n e_i (1-e_i)$. 
\end{lemma}

\begin{lemma}
\label{multi_clt}
Let $ \Sigma^{M}_{n}= \begin{pmatrix}
\sigma^2_{11} & \sigma_{12}\\
\sigma_{21} & \sigma^2_{22}
\end{pmatrix}$ where $\sigma_{12}=\sigma_{21}=\frac{1}{n }\sum_{i=1}^n e_i(1-e_i)\tau_i$. The entries $\sigma^2_{11}$ and $\sigma^2_{22}$ are defined as in Lemmas \ref{clt_numerator} and \ref{clt_denominator} respectively. Under Assumptions \ref{ass:cond_ind} and \ref{assump:outc_mod}, we have  
 \[ \frac{\sqrt{n}}{\Sigma^{M \ \frac{1}{2}}_{n}} \left[ 
\begin{pmatrix}
\frac{1}{n} \sum_{i=1}^n \left[ Z_i (Y_i-f_i)- \frac{e_i}{1-e_i} (1-Z_i)(Y_i-f_i)  \right] \\
\frac{1}{n} \sum_{i=1}^n Z_i  
\end{pmatrix}
- 
\begin{pmatrix}
\frac{1}{n} \sum_{i=1}^n e_i \tau_i\\
\frac{1}{n} \sum_{i=1}^n e_i 
\end{pmatrix} 
\right] \rightarrow \mathcal{N}(0, I)
\] where $I$ is a two by two identity matrix. 
\end{lemma}
\begin{proof}[Proof of Proposition \ref{clt_delta}]
Let $U=\frac{1}{n}\sum_{i=1}^n \left[ Z_i (Y_i-f_i)-  \frac{e_i}{1-e_i} (1-Z_i)(Y_i-f_i)\right]$, $W=\frac{1}{n}\sum_{i=1}^n Z_i$ and $f((U,W)^T)=\frac{U}{W}$. Then $\mathbb{E}(U)= \frac{1}{n} \sum_{i=1}^n \tau_i e_i$ and $\mathbb{E}(W)=\frac{1}{n}\sum_{i=1}^n e_i$. By applying Delta method on $f((U,W)^T)$, we have 
\begin{small}
\begin{equation*}
    \begin{split}
       & \sqrt{n} \ (\Sigma^D_n)^{-1/2} \left( \hat{\tau}^{AIPW}- \mathbb{E}(\tau) \right)=_{(1)} \sqrt{n} \ \left(\Sigma^{D}_n\right)^{-\frac{1}{2}} \left( \hat{\tau}^{AIPW*}- \mathbb{E}(\tau) \right) + O\left((\Sigma^D_n)^{-1/2}   \delta^{\frac{1}{2}}_n \Sigma^{-1/2}_n \right) \\
       &=_{(2)} \frac{\sqrt{n}}{(\Sigma^D_{n})^{1/2}} \Bigg( \sum_{i=1}^n \frac{Z_i(Y_i-f_i)- \frac{e_i(1-Z_i)}{1-e_i} (Y_i-f_i)}{ \mbox{$\sum_{i=1}^n Z_i$}  } - \frac{\mbox{$\sum_{i=1}^n e_i \tau_i$} }{ \mbox{$\sum_{i=1}^n e_i$}}  \Bigg) + O(\delta^{1/2}_n)  \rightarrow_{(3)} N\left(0,  1  \right)
    \end{split}
\end{equation*}
\end{small}
where $\Sigma^D_n =\left( \nabla f  \left( \left(\mathbb{E}(U), \mathbb{E}(W) \right)^T \right) \right)^T \Sigma^M_n \nabla f \left( ( \mathbb{E}(U),\mathbb{E}(W) )^T \right)$. $(1)$ is based on equation \eqref{eq:dr_main} in the proof of Theorem \ref{thm:dra_tau_inf} in 
the main paper, with probability $1-5\Delta_n-3\delta-\exp(-nc^2/8)-n^{-2}-Cn^{-1}$. $(2)$ is based on Lemma \ref{exp_ratio_ratio_exp} and $(3)$ is based on Lemmas \ref{clt_numerator}, \ref{clt_denominator} and \ref{multi_clt}. Then the variance of $\sqrt{n} \hat{\tau}^{AIPW*}$ is equal to 
\begin{small}
\begin{equation*}
\begin{split}
  & \left( \nabla f  \left( \left(\mathbb{E}(U), \mathbb{E}(W) \right)^T \right) \right)^T \Sigma^M_n \nabla f \left( ( \mathbb{E}(U),\mathbb{E}(W) )^T \right)  \\
  & = \begin{pmatrix}
      \frac{1}{n^{-1}\mbox{$\sum_{i=1}^n$} e_i } & - \frac{\mbox{$\sum_{i=1}^n$} \tau_i e_i}{ n^{-1} \left( \mbox{$\sum_{i=1}^n$} e_i \right)^2  } 
  \end{pmatrix}  \begin{pmatrix}
      \sigma^2_{11} & \sigma_{12} \\
      \sigma_{21} & \sigma^2_{22}
  \end{pmatrix}  \begin{pmatrix}
      \frac{1}{n^{-1}\mbox{$\sum_{i=1}^n$} e_i } & - \frac{\mbox{$\sum_{i=1}^n$} \tau_i e_i}{ n^{-1} \left( \mbox{$\sum_{i=1}^n$} e_i \right)^2  } 
  \end{pmatrix}^T \\
  & = \frac{\sigma^2_{11}}{ ( n^{-1} \mbox{$\sum_{i=1}^n$} e_i)^2 } - \frac{ 2 \sigma_{21} \mbox{$\sum_{i=1}^n$} \tau_i e_i }{ n^{-2} (\mbox{$\sum_{i=1}^n$} e_i)^3 } + \frac{ \left( \mbox{$\sum_{i=1}^n$} \tau_i e_i \right)^2 \sigma^2_{22}}{  n^{-2}  (\mbox{$\sum_{i=1}^n$} e_i)^4 } \\
  & = \frac{\mathbb{E}_{Z,Y} [\Gamma_n] }{ n^{-1}(\mbox{$\sum_{i=1}^n$} e_i )^2 }+ \frac{\mbox{$\sum_{i=1}^n$} \tau^2_i e_i (1-e_i) }{ n^{-1} (\mbox{$\sum_{i=1}^n$} e_i)^2 } - \frac{ \left( \mbox{$\sum_{i=1}^n$} \tau_i e_i \right) \cdot \left( \mbox{$\sum_{i=1}^n$} e_i(1-e_i) \tau_i \right) }{n^{-1} \left( \mbox{$\sum_{i=1}^n$} e_i \right)^3 } \\ 
  & + \frac{ \left(\mbox{$\sum_{i=1}^n$} \tau_i e_i \right)^2 \cdot \left( \mbox{$\sum_{i=1}^n$} e_i(1-e_i) \right) }{ n^{-1} \left(  \mbox{$\sum_{i=1}^n$} e_i \right)^4  }-\frac{ \left( \mbox{$\sum_{i=1}^n$} \tau_i e_i \right) \cdot \left( \mbox{$\sum_{i=1}^n$} e_i(1-e_i) \tau_i \right) }{n^{-1} \left( \mbox{$\sum_{i=1}^n$} e_i \right)^3 }\\
  & = \frac{\mathbb{E}_{Z,Y} [\Gamma_n] }{ n^{-1}(\mbox{$\sum_{i=1}^n$} e_i )^2 }+ \frac{ \mbox{$\sum_{i=1}^n$} \mbox{$\sum_{j=1}^n$} \tau_i e_i (1-e_i) e_j (\tau_i-\tau_j) }{ n^{-1} \left( \mbox{$\sum_{i=1}^n$} e_i \right)^3 } \\
  & + \frac{ \mbox{$\sum_{i=1}^n$} \mbox{$\sum_{j=1}^n$} \mbox{$\sum_{h=1}^n$} \tau_j e_i (1-e_i) e_j e_h (\tau_h-\tau_i) }{ n^{-1} \left( \mbox{$\sum_{i=1}^n$} e_i \right)^4 } \\
  & =  \frac{\mathbb{E}_{Z,Y} [\Gamma_n] }{ n^{-1}(\mbox{$\sum_{i=1}^n$} e_i )^2 } + \frac{ \mbox{$\sum_{i=1}^n$} \mbox{$\sum_{j=1}^n$}\mbox{$\sum_{h=1}^n$} e_i (1-e_i) e_j e_h  (\tau_i-\tau_j)(\tau_i-\tau_h) }{ n^{-1} \left( \mbox{$\sum_{i=1}^n$} e_i \right)^4  }
\end{split}    
\end{equation*}
\end{small}
\end{proof}

The proof of Proposition \ref{cons_var_epsilon_Z} relies on Lemmas \ref{cons_exp}, \ref{int_upper_bound} and \ref{dif_e_i_hat_e_i}. 
\begin{lemma}
\label{cons_exp}
Let \(\widetilde{\Sigma}_n\) be as defined in Corollary \ref{cor:var} and 
$$S:=\frac{1}{(\mbox{$\sum_{i=1}^n$} e_i)^{4} } \left[ \sum_{i,j,h: j\neq h} e_i e_j e_h (\tau_i-\tau_j)(\tau_i-\tau_h) + \sum_{i,j,h: j= h} e_i e_j  (\tau_i-\tau_j)^2 \right] +\frac{\mathbb{E}_{Z,Y}(\Gamma_n)}{(\mbox{$\sum_{i=1}^n$} e_i)^{2}}$$
where \(\Gamma_n\) is as defined in Proposition \ref{clt_delta}. Under Assumptions \ref{cond_unconf}, 
    \ref{ass:cond_ind}, \ref{assump:outc_mod} and \ref{assum:DR}, there exists a constant \( C > 0 \) such that, with probability at least \(1 - 3\delta - \Delta_n - \exp(-nc^2/8) - 6n^{-1}\),
\begin{equation*}
   \left\lvert {(\mbox{$\sum_{i=1}^n$} Z_i)^{-1}} \widetilde{\Sigma}_n - S \right\rvert \leq  C n^{-\frac{5}{4}} \log^{\frac{1}{2}}(2n^2/\delta)\delta_n\log^{-\frac{1}{2}}(n).
\end{equation*}
\end{lemma}

\begin{lemma}
    \label{int_upper_bound}
    Under Assumptions \ref{ass:cond_ind} and \ref{assump:outc_mod} we have 
    \begin{equation*}
        \begin{split}
          &  \mbox{$\sum_{i=1}^n$} e^2_i \mbox{$\sum_{j=1}^n$} \mbox{$\sum_{h=1}^n$}  e_j e_h (\tau_i-\tau_j) (\tau_i-\tau_h)\geq 0. \\
        \end{split}
    \end{equation*}
\end{lemma}

\begin{lemma}
    \label{dif_e_i_hat_e_i}
    Let $\delta_n$ be as defined in Assumption \ref{assum:DR}. Under Assumptions \ref{ass:cond_ind} and  \ref{assum:DR} (ii), there exists a constant $C>0$ such that, with probability at least $1-\Delta_n$   
    \begin{equation*}
        \begin{split}
        \frac{1}{n} \sum_{i=1}^n (e_i-\hat{e}_i)^2 \leq C\delta^2_n.
        \end{split}
    \end{equation*}
\end{lemma}

\begin{proof}[Proof of Proposition \ref{cons_var_epsilon_Z}]
Based on Lemma \ref{cons_exp} and the formula for $\Sigma^D_n$ in Proposition \ref{clt_delta}, we have, with probability $1-3\delta-\Delta_n-\exp(-nc^2/8)-6n^{-1}$, 
\begin{equation*}
    \begin{split}
       &  \frac{1}{n} \widetilde{\Sigma}^C_n - \frac{1}{n}\Sigma^D_n =  S - \frac{1}{n} \Sigma^D_n+ \frac{1}{\mbox{$\sum_{i=1}^n$} Z_i }\widetilde{\Sigma}_{n}- S \\
       & \geq { (\mbox{$\sum_{i=1}^n$} e_i)^{-4} } \ \ [{ \sum_{i,j,h: j\neq h} e_i e_j e_h (\tau_i-\tau_j)(\tau_i-\tau_h) + \sum_{i,j,h: j= h} e_i e_j  (\tau_i-\tau_j)^2 } ] \\
       & - {(\mbox{$\sum_{i=1}^n$} e_i)^{-4} }{ \mbox{$\sum_{i=1}^n$} \mbox{$\sum_{j=1}^n$} \mbox{$\sum_{h=1}^n$} e_i (1-e_i)e_j e_h (\tau_i-\tau_j)(\tau_i-\tau_h)  } - C { \log^{\frac{1}{2}} \left( {2n^2}/{\delta} \right)} \cdot n^{-\frac{5}{4}} \\
       & \geq { (\mbox{$\sum_{i=1}^n$} e_i)^{-4} }{ [ \sum_{i,j,h: j\neq h} e_i e_j e_h (\tau_i-\tau_j)(\tau_i-\tau_h) + \sum_{i,j,h: j= h} e_i e_j e_{h}  (\tau_i-\tau_j)^2 ]} \\
       & - {(\mbox{$\sum_{i=1}^n$} e_i)^{-4} }{ [\mbox{$\sum_{i=1}^n$} \mbox{$\sum_{j=1}^n$} \mbox{$\sum_{h=1}^n$} e_i (1-e_i)e_j e_h (\tau_i-\tau_j)(\tau_i-\tau_h) ] } - C { \log^{\frac{1}{2}} \left( {2n^2}/{\delta} \right)} \cdot n^{-\frac{5}{4}} \\
       & = {(\mbox{$\sum_{i=1}^n$} e_i)^{-4}}{\mbox{$\sum_{i=1}^n$} \left[e_i-e_i (1-e_i)\right] \mbox{$\sum_{j=1}^n$} \mbox{$\sum_{h=1}^n$} e_j e_h (\tau_i-\tau_j) (\tau_i-\tau_h ) }\\
       &- C { \log^{\frac{1}{2}} \left( {2n^2}/{\delta} \right)} \cdot n^{-\frac{5}{4}} \geq  - C \cdot \sqrt{ \log \left( 2n^2/\delta \right)} \cdot n^{-\frac{5}{4}}  
    \end{split}
\end{equation*}
where the last inequality holds by Lemma \ref{int_upper_bound} that shows the first term is non-negative. 


To prove the second part of the proposition, we show that \( |\hat{\Sigma}^D_n - {\Sigma}^D_n| \leq C \delta_n \) with high probability, where \( C \) is a positive constant. Consequently, Proposition \ref{clt_delta} remains valid after replacing \( {\Sigma}^D_n \) with \( \hat{\Sigma}^D_n \). Let $\hat{a}:=\mbox{$\sum_{i=1}^n$} \left( Z_i(Y_i-\hat{\tau}_i-\hat{f}_i) - \frac{\hat{e}_i}{1-\hat{e}_i} (1-Z_i) (Y_i-\hat{f}_i) \right)^2$, $\hat{b}:=n^{-1} \left( \mbox{$\sum_{i=1}^n$} \hat{e}_i   \right)^2$, 
$\hat{c}:=\mbox{$\sum_{i=1}^n$} \mbox{$\sum_{j=1}^n$} \mbox{$\mbox{$\sum_{h=1}^n$}$} \hat{e}_i (1-\hat{e}_i) \hat{e}_j \hat{e}_h (\hat{\tau}_i-\hat{\tau}_j)(\hat{\tau}_i- \hat{\tau}_h)$ and $\hat{d}:=n^{-1} (\mbox{$\sum_{i=1}^n$} \hat{e}_i)^4$  
where $a,b,c,d$ denote the oracle quantities for $\hat{a},\hat{b},\hat{c},\hat{d}$ respectively. Then $\hat{\Sigma}^D_n= \frac{\hat{a} }{ \hat{b} }+\frac{\hat{c} }{\hat{d} }$ and $\Sigma^D_n= \frac{a}{b}+\frac{c}{d}$. Thus, the difference between these two quantities is 
\begin{small}
\begin{equation}
\label{decomp_hat_Sigma_D_Sigma_D}
    \begin{split}
        \hat{\Sigma}^D_n- {\Sigma}^D_n=  \frac{(\hat{a}- a)}{\hat{b}}+ \left( \frac{1}{\hat{b}}-\frac{1}{b} \right) a + \frac{(\hat{c}-c)}{\hat{d}}+ \left( \frac{1}{\hat{d}}-\frac{1}{d} \right) c. 
    \end{split}
\end{equation}
\end{small}
\noindent Let $C_i:=  Z_i(Y_i-\hat{\tau}_i-\hat{f}_i) - \frac{\hat{e}_i}{1-\hat{e}_i} (1-Z_i) (Y_i-\hat{f}_i) +Z_i(Y_i-{\tau}_i-{f}_i) - \frac{{e}_i}{1-{e}_i} (1-Z_i) (Y_i-{f}_i)$ and $d(\hat{e}_i,e_i):=\frac{\hat{e}_i}{1-\hat{e}_i}-\frac{{e}_i}{1-{e}_i}$ for $i\in [n]$. We begin by bounding \( |\hat{a} - a| \).  
\begin{small}
\begin{equation*}
    \begin{split}
       & |\hat{a}-a|  =  \left\lvert \mbox{$\sum_{i=1}^n$}  Z_i(\tau_i-\hat{\tau}_i) C_i + \mbox{$\sum_{i=1}^n$}  Z_i(f_i-\hat{f}_i) C_i + \mbox{$\sum_{i=1}^n$} \left( \frac{e_i}{1-e_i}- \frac{\hat{e}_i}{1-\hat{e}_i} \right) (f_i+\epsilon_i) (1-Z_i) C_i \right.\\
        & \left. + \mbox{$\sum_{i=1}^n$} \left[\left( \frac{\hat{e}_i}{1-\hat{e}_i} \hat{f}_i- \frac{{e}_i}{1-{e}_i} \hat{f}_i\right) + \left( \frac{{e}_i}{1-{e}_i} \hat{f}_i - \frac{{e}_i}{1-{e}_i} f_i \right) \right] (1-Z_i) C_i \right\rvert \\
        & \leq_{(1)} C \left\lbrace [\sum_{i=1}^n  (\tau_i - \hat{\tau}_i)^2]^{\frac{1}{2}}  +  [ \sum_{i=1}^n  (f_i - \hat{f}_i)^2 ]^{\frac{1}{2}} +  [\sum_{i=1}^n   d^2(\hat{e}_i,e_i) ]^{\frac{1}{2}} +  \left[\mbox{$\sum_{i=1}^n $}  d^2(\hat{e}_i,e_i) \epsilon^2_i \right]^{\frac{1}{2}} \right\rbrace  (\sum_{i=1}^n  C^2_i )^{\frac{1}{2}}    \\
        &:= C \left[(I) +(II)+(III)+(IV)\right] (\sum_{i=1}^n  C^2_i )^{\frac{1}{2}} \\
    \end{split}
\end{equation*}
\end{small}
\((1)\) is obtained by applying the Cauchy-Schwarz inequality to each term. Next, we derive an upper bound for \( \sum_{i=1}^n  C^2_i \).
\begin{small}
\begin{equation*}
    \begin{split}
      \sum_{i=1}^n C^2_i &= \sum_{i=1}^n \left[Z_i(\tau_i-\hat{\tau}_i+f_i-\hat{f}_i+\epsilon_i)-\frac{\hat{e}_i}{1-\hat{e}_i} (1-Z_i) (f_i-\hat{f}_i+\epsilon_i) + Z_i\epsilon_i-\frac{e_i}{1-e_i}(1-Z_i)\epsilon_i  \right]^2  \\
      & = \sum_{i=1}^n \left\lbrace Z_i(\tau_i-\hat{\tau}_i)+\left[Z_i-\frac{\hat{e}_i(1-Z_i)}{1-\hat{e}_i}\right](f_i-\hat{f}_i)+\left[2Z_i-(1-Z_i)(\frac{\hat{e}_i}{1-\hat{e}_i}+\frac{e_i}{1-e_i})\right] \epsilon_i  \right\rbrace^2 \\
      & \leq_{(1)} C \mbox{$\sum_{i=1}^n$}  \left( \tau_i-\hat{\tau}_i \right)^2 + C \mbox{$\sum_{i=1}^n$}  \left( f_i-\hat{f}_i \right)^2+ C \mbox{$\sum_{i=1}^n$} \epsilon_i^2 \\
       & \leq_{(2.1)} C n \delta^2_n+_{(2.2)} C\left( \delta^2_n +  \frac{\log(2n^3)n^{1/2} \delta^2_n \log^{-1}n}{1+D_n}\right)+_{(2.3)}2n \bar{\sigma}^2  \leq C n 
    \end{split}
\end{equation*}
\end{small}
\noindent where the positive constants \(C\) vary line by line. 
\((1)\) is based on Assumptions \ref{ass:cond_ind} and \ref{assum:DR} (i). 
$(2.1)$ follows from the upper bound for \(\sum_{i=1}^n (\tau_i - \hat{\tau}_i)^2\) in Proposition \ref{cons_var_epsilon_Z} with probability $1-o(1)$. 
$(2.2)$ is obtained by using Lemma 
\ref{lemma:sparsity_gamma_star} 
and Assumption \ref{assum:DR} (ii), with probability $1-\delta-\Delta_n$. 
$(2.3)$ is derived by applying Bernstein’s inequality to the sum \(\sum_{i=1}^n \epsilon_i^2\) with probability $1-1/n$. Next, we establish bounds for \((I) - (IV)\). Based on the assumption in Proposition \ref{cons_var_epsilon_Z} and Lemma \ref{dif_e_i_hat_e_i} respectively, we have $(I)\leq n^{1/2} \delta_n$ and  $(III)\leq n^{1/2} \delta_n$ with probability $1-o(1)-\Delta_n$. Furthermore, $(II)\leq \delta_n + \frac{\log^{1/2}(2n^3)n^{1/4} \delta_n \log^{-1/2}(n) }{(1+D_n)^{1/2}}$ which follows from Lemma 
\ref{lemma:sparsity_gamma_star} 
combined with Assumption \ref{assum:DR} (ii), with probability $1-\delta-\Delta_n$. Next, we derive an upper bound for \((IV)\). Let $d(\hat{e}_i,e_i):=\frac{\hat{e}_i}{1-\hat{e}_i}-\frac{{e}_i}{1-{e}_i}$ for $i\in [n]$.
\begin{equation*}
    \begin{split}
    & (IV) = \left[\mbox{$\sum_{i=1}^n $}  d^2(\hat{e}_i,e_i) \epsilon^2_i \right]^{\frac{1}{2}}\leq_{(1)} \left[\mbox{$\sum_{i=1}^n $}  d^4(\hat{e}_i,e_i)\right]^{\frac{1}{4}} \left[ \mbox{$\sum_{i=1}^n $} \epsilon^4_i \right]^{\frac{1}{4}}\\
    & \leq_{(2)} C \left[\mbox{$\sum_{i=1}^n $}  d^2(\hat{e}_i,e_i)\right]^{\frac{1}{4}} \left[ \mbox{$\sum_{i=1}^n $} \epsilon^4_i \right]^{\frac{1}{4}} \leq_{(3)} C \log^{\frac{1}{4}} (n) n^{\frac{1}{4}}
    \end{split}
\end{equation*}
(1) is by the Cauchy-Schwarz inequality. (2) follows from the bound $d^4(\hat{e}_i,e_i)\leq C d^2(\hat{e}_i,e_i) [e^2_i/(1-e_i)^2+\hat{e}^2_i/(1-\hat{e}_i)^2]\leq C d^2(\hat{e}_i,e_i) [e^2_i/(1-e_i)^2+\hat{e}^2_i/(1-\hat{e}_i)^2]$ which holds under Assumptions \ref{ass:cond_ind} and \ref{assum:DR} (i). $(3)$ is based on Assumption \ref{assum:DR} (i) with probability $1-\Delta_n$, along with applying Bernsteins inequality to $\sum_{i=1}^n \epsilon^4_i$, with probability $1-1/n$, where $\epsilon_i$ is subgaussian for $i\in[n]$.
Finally, combining the results for \(\sum_{i=1}^n C^2_i\) and \((I) - (IV)\), we obtain, with probability at least \(1 - o(1) - \delta - 2\Delta_n - n^{-1}\),
\begin{small}
\begin{equation*}
    \begin{split}
       & |\hat{a}-a|  \leq C \left[(I) +(II)+(III)+(IV)\right] (\sum_{i=1}^n  C^2_i )^{\frac{1}{2}} \\
       & \leq C \left[ n^{1/2} \delta_n + \delta_n + \frac{\log^{1/2}(2n^3)n^{1/4} \delta_n \log^{-1/2}(n) }{(1+D_n)^{1/2}} + n^{1/2}\delta_n + \log^{\frac{1}{4}} (n) n^{\frac{1}{4}}  \right] n^{1/2} \leq C n \delta_n.  \\
    \end{split}
\end{equation*}
\end{small}
Furthermore, we have the difference $ 
      \left\lvert  \frac{a}{\hat{b}}-\frac{a}{b} \right\rvert \leq |a| \cdot \left\lvert \frac{ (\mbox{$\sum_{i=1}^n$} e_i)^2 - (\mbox{$\sum_{i=1}^n$} \hat{e}_i)^2 }{ n^{-1} (\mbox{$\sum_{i=1}^n$} \hat{e}_i)^2 \cdot (\mbox{$\sum_{i=1}^n$} {e}_i)^2 } \right\rvert \leq C \delta_n 
$
Since \(|a| = \sum_{i=1}^n \left[ Z_i - \frac{e_i}{1 - e_i}(1 - Z_i) \right]^2 \epsilon_i^2 \leq C \sum_{i=1}^n \epsilon_i^2 \leq Cn\), with probability at least \(1 - 2\exp(-3n/16)\). This result follows from Assumption \ref{ass:cond_ind} and applying Bernstein’s inequality to \(\sum_{i=1}^n \epsilon_i^2\). Additionally, with probability $1-\Delta_n$, we have
\begin{equation*}
    \begin{split}
    & \left \lvert  \left(\mbox{$\sum_{i=1}^n$} e_i \right)^2 - \left(\mbox{$\sum_{i=1}^n$} \hat{e}_i \right)^2  \right\rvert =\left\lvert \left( \mbox{$\sum_{i=1}^n$} (e_i - \hat{e}_i )  \right) \cdot \left( \mbox{$\sum_{i=1}^n$} e_i + \mbox{$\sum_{i=1}^n$} \hat{e}_i  \right) \right\rvert \\
    &\leq_{(1)} C \sqrt{n} \left[\mbox{$\sum_{i=1}^n$} (e_i - \hat{e}_i )^2 \right]^{\frac{1}{2}} n \leq_{(2)} C n^2 \delta_n 
    \end{split}
\end{equation*}
 where \((1)\) follows from the Cauchy-Schwarz inequality and Assumption \ref{assum:DR} (i), and \((2)\) is derived using 
 Lemma \ref{dif_e_i_hat_e_i}, with probability $1-\Delta_n$. Similarly,
\begin{equation*}
    \begin{split}
       \left\lvert \frac{c}{\hat{d}}-\frac{c}{d} \right\rvert \leq  |c| \cdot \left \lvert \frac{ (\mbox{$\sum_{i=1}^n$} e_i)^4 - (\mbox{$\sum_{i=1}^n$} \hat{e}_i)^4 }{ n^{-1} (\mbox{$\sum_{i=1}^n$} \hat{e}_i)^4 \cdot (\mbox{$\sum_{i=1}^n$} {e}_i)^4 } \right\rvert \leq C \delta_n 
    \end{split}
\end{equation*}
since $|c|\leq C n^3$ due to the boundedness of \(|\tau_i|\) and \(|e_i|\) for \(i \in [n]\) under Assumptions \ref{ass:cond_ind} and \ref{assump:outc_mod}, and 
\begin{small}
\begin{equation*}
    \begin{split}
    & \left\lvert \left(\mbox{$\sum_{i=1}^n$} e_i \right)^4 - \left(\mbox{$\sum_{i=1}^n$} \hat{e}_i \right)^4 \right\rvert =\left\lvert \left[ \mbox{$\sum_{i=1}^n$}(e_i - \hat{e}_i )  \right] \cdot \left( \mbox{$\sum_{i=1}^n$} e_i + \mbox{$\sum_{i=1}^n$} \hat{e}_i  \right)\cdot \left[ (\mbox{$\sum_{i=1}^n$} e_i)^2 + (\mbox{$\sum_{i=1}^n$}  \hat{e}_i)^2   \right] \right\rvert \\
    & \leq C \sqrt{n} \left[\mbox{$\sum_{i=1}^n$} (e_i - \hat{e}_i )^2 \right]^{\frac{1}{2}} n^3 \leq C n^4 \delta_n
    \end{split}
\end{equation*}
\end{small}
with probability $1-\Delta_n$, based on 
Lemma \ref{dif_e_i_hat_e_i}. For $|\hat{c}-c|$, define \(\mathbf{\hat{\tau}}_{(i)} = (\hat{\tau}_i - \hat{\tau}_1, \dots, \hat{\tau}_i - \hat{\tau}_n)^T\) for \(i \in [n]\) and \(\hat{\mathbf{e}} = (\hat{e}_1, \dots, \hat{e}_n)^T\). Let \(\tau_{(i)}\) and \(\mathbf{e}\) denote the corresponding oracle quantities. Then, we have 
\begin{equation*}
    \begin{split}
      & |\hat{c}-c|= \left\lvert \mbox{$\sum_{i=1}^n$} \hat{e}_i (1-\hat{e}_i) \cdot \left( \hat{\tau}^T_{(i)} \mathbf{\hat{e}} \right)^2 - \mbox{$\sum_{i=1}^n$} e_i (1- e_i) (\tau^T _{(i)} \mathbf{e} )^2 \right\rvert \\
      & = \left\lvert  \sum_{i=1}^n \left[ \hat{e}_i (1-\hat{e}_i) - e_i(1-e_i)   \right] \cdot \left( \hat{\tau}^T_{(i)} \mathbf{\hat{e}} \right)^2 + \sum_{i=1}^n e_i (1- e_i) \cdot \left[ (\hat{\tau}^T_{(i)} \mathbf{\hat{e}})^2 - (\tau^T_{(i)} \mathbf{e})^2   \right] \right\rvert \leq C n^3 \delta_n.
    \end{split}
\end{equation*}
The first term in this decomposition satisfies
\begin{small}
\begin{equation*}
    \begin{split}
     & \left\lvert \mbox{$\sum_{i=1}^n$} \left[ \hat{e}_i (1-\hat{e}_i) - e_i(1-e_i)   \right] \cdot \left( \hat{\tau}^T_{(i)} \mathbf{\hat{e}} \right)^2 \right\rvert \leq_{(1)} \sqrt{ \mbox{$\sum_{i=1}^n$} (\hat{e}_i - e_i + e^2_i-\hat{e}^2_i )^2 } \sqrt{\mbox{$\sum_{i=1}^n$} \left( \hat{\tau}^T_{(i)} \mathbf{\hat{e}} \right)^4 } \\
     & = \sqrt{ \mbox{$\sum_{i=1}^n$} \left[ (\hat{e}_i-e_i) \cdot (1+e_i+\hat{e}_i) \right]^2 } \sqrt{\mbox{$\sum_{i=1}^n$} \left( \hat{\tau}^T_{(i)} \mathbf{\hat{e}} \right)^4 } \leq_{(2)} C  \sqrt{ \mbox{$\sum_{i=1}^n$}  (\hat{e}_i-e_i)^2 } \sqrt{\mbox{$\sum_{i=1}^n$} \left( \hat{\tau}^T_{(i)} \mathbf{\hat{e}} \right)^4 } \\
     & \leq_{(3)} C \delta_n \sqrt{n} \cdot n^{\frac{5}{2}} = C n^3 \delta_n.
    \end{split}
\end{equation*}
\end{small}
\noindent with probability $1-o(1)-\Delta_n$, where \((1)\) follows from the Cauchy-Schwarz inequality. \((2)\) follows from the fact that \((1 + e_i + \hat{e}_i) \leq C\) for all \( i \in [n] \) under Assumptions \ref{ass:cond_ind} and \ref{assum:DR} (i). $(3)$ follows from Lemma \ref{dif_e_i_hat_e_i}, with probability at least \( 1 - \Delta_n \). Additionally, since \( |\hat{\tau}_{(i)}^T \hat{\mathbf{e}}| \leq Cn \) for all \( i \in [n] \), as implied by Assumption \ref{assum:DR} (i) for \(\hat{e}_i\) and the upper bound for \( \sum_{i=1}^n (\tau_i - \hat{\tau}_i)^2 \) in Assumption \ref{assump:outc_mod}, the result follows with probability at least \( 1 - o(1) \). For the second term, we have, for each $i \in [n]$,
\begin{small}
\begin{equation*}
    \begin{split}
      &  |\hat{\tau}^T_{(i)} \hat{\mathbf{e}} - \tau^T_{(i)} \mathbf{e}| = \left\lvert\hat{\tau}^T_{(i)} (\hat{\mathbf{e}}- \mathbf{e} ) + \left( \hat{\tau}_{(i)}^T- \tau^T_{(i)} \right) \mathbf{e} \right\rvert \leq \sqrt{\mbox{$\sum_{j=1}^n$} (\hat{\tau}_i-\hat{\tau}_j)^2 } \cdot \sqrt{ \mbox{$\sum_{j=1}^n$} (\hat{e}_j-e_j)^2 }\\
      & + |\sum_{j \neq i} (\hat{\tau}_j -\tau_j ) e_j | \leq_{(1)} \sqrt{n}\cdot C \cdot \sqrt{n} \cdot \delta_n + \sqrt{ \mbox{$\sum_{j=1}^n$} (\tau_j- \hat{\tau}_j)^2 } \cdot \sqrt{ \mbox{$\sum_{j=1}^n$} e^2_j } \leq_{(2)} C n\delta_n
    \end{split}
\end{equation*}
\end{small}
with probability $1-o(1)-\Delta_n$. \((1)\) follows from Assumption \ref{assump:outc_mod} and Lemma \ref{dif_e_i_hat_e_i}, with probability at least \( 1 - \Delta_n \). \((2)\) uses the assumption on \(\sum_{j=1}^n (\hat{\tau}_j - \tau_j)^2\) in Proposition \ref{cons_var_epsilon_Z} and Assumption \ref{ass:cond_ind}, with probability $1-o(1)$. Then 
\begin{small}
\begin{equation*}
\begin{split}
& \left\lvert \mbox{$\sum_{i=1}^n$} e_i (1- e_i) \cdot \left[ \left( \hat{\tau}^T_{(i)} \mathbf{\hat{e}}\right)^2  -  \left( \tau^T_{(i)} \mathbf{e} \right)^2  \right] \right\rvert \leq \mbox{$\sum_{i=1}^n$} \mid e_i (1-e_i ) \left( \hat{\tau}^T_{(i)} \mathbf{\hat{e}}- {\tau}^T_{(i)} \mathbf{e} \right) \cdot \left( \hat{\tau}^T_{(i)} \mathbf{\hat{e}}+ {\tau}^T_{(i)} \mathbf{e} \right)   \mid \\
& \leq C \cdot n \mbox{$\sum_{i=1}^n$} \mid \hat{\tau}^T_{(i)} \mathbf{\hat{e}}- {\tau}^T_{(i)} \mathbf{e}  \mid \leq C \cdot n^3 \delta_n.
\end{split}
\end{equation*}
\end{small}
\noindent Therefore, $\left\lvert \frac{\hat{c}-c}{\hat{d} }\right\rvert \leq C\delta_n$. Consequently, using the bounds for each term in \eqref{decomp_hat_Sigma_D_Sigma_D}, we have \(\left| \hat{\Sigma}^D_n - \Sigma^D_n \right| \leq C \delta_n\) with probability $1-o(1)-\delta-3\Delta_n-n^{-1}-2\exp(-3n/16)$. 
\end{proof}

Next, we present the proofs of the auxiliary lemmas. 
\begin{proof}[Proof of Lemma \ref{exp_ratio_ratio_exp}]
Let $a=\sum_{i=1}^n Z_i \tau_i$, $b=\sum_{i=1}^n Z_i$ and $f(a,b)=\frac{a}{b}$. Then based on Taylor expansion up to second order at $(\mathbb{E}(a),\mathbb{E}(b))$, we have 
\begin{small}
\begin{equation*}
\begin{split}
    \mathbb{E}\left( f(a,b) \right) & = f( \mathbb{E}(a), \mathbb{E}(b) )+ \left. \frac{\partial f(a,b)}{\partial (a,b)^T}  \right\rvert_{a=\mathbb{E}(a), b=\mathbb{E}(b) } \mathbb{E} (a-\mathbb{E}(a), b- \mathbb{E}(b))^T \\
    & + \frac{1}{2} \mathrm{tr} \left[ H_f ( (\mathbb{E}(a), \mathbb{E} (b))^T) \ \mathrm{var} ((a,b)^T)  \right] + o \left( n^{-1} \right)
    \end{split}
\end{equation*}
\end{small}
where $H_f((\mathbb{E}(a),\mathbb{E}(b))^T )$ denotes the Hessian matrix of $f$ evaluated at $(\mathbb{E}(a),\mathbb{E}(b))^T$. Then we have 
\begin{small}
\begin{equation*}
\begin{split}
    &\left\lvert  \mathbb{E}\left( f(a,b) \right)-\frac{\mbox{$\sum_{i=1}^n$} e_i \tau_i}{ \mbox{$\sum_{i=1}^n$} e_i} -0\right\rvert = \left\lvert \frac{1}{2} \mathrm{tr} \left[ H_f ( (\mathbb{E}(a), \mathbb{E} (b))^T) \cdot  \mathrm{var} ((a,b)^T)  \right] \right\rvert + o \left( n^{-1} \right)\\ 
    &  = \left\lvert  \frac{1}{2}  tr \left[ \ \begin{bmatrix}
0 & -( \mbox{$\sum_{i=1}^n$} e_i)^{-2}   \\
-( \mbox{$\sum_{i=1}^n$} e_i)^{-2}  & \frac{ 2 \mbox{$\sum_{i=1}^n$} e_i \tau_i }{\left(\mbox{$\sum_{i=1}^n$} e_i  \right)^3 }
\end{bmatrix}  \begin{bmatrix}
\sum_{i=1}^n e_i (1-e_i) \tau^2_i  & \sum_{i=1}^n e_i (1-e_i) \tau_i  \\
\mbox{$\sum_{i=1}^n$} e_i (1-e_i) \tau_i & \mbox{$\sum_{i=1}^n$} e_i (1-e_i)
\end{bmatrix} \ \right]  \right\rvert +o(\frac{1}{n})\leq  C  \frac{1}{n}
    \end{split}
\end{equation*}
\end{small}
where the second equality is based on Assumptions \ref{ass:cond_ind} and \ref{assump:outc_mod}.
\end{proof}

\begin{proof}[Proof of Lemma \ref{clt_numerator}]
We first derive the variance for $\frac{1}{n} \mbox{$\sum_{i=1}^n$} [ Z_i(Y_i-f_i)- \frac{e_i}{1-e_i} (1-Z_i)(Y_i-f_i)  ]$. Since 
\begin{equation*}
    \begin{split}
       &  var_{Y,Z} \left[ \frac{1}{n} \mbox{$\sum_{i=1}^n$} \left(Z_i(Y_i-f_i)- \frac{e_i}{1-e_i} (1-Z_i)(Y_i-f_i)  \right) \right] \\
       & = \mathbb{E}_{Z} \ var_{Y|Z} \left[ \left. \frac{1}{n} \mbox{$\sum_{i=1}^n$} \left( Z_i (Y_i-\tau_i-f_i)- \frac{e_i}{1-e_i}(1-Z_i) \cdot (Y_i-f_i) \right)  +\frac{1}{n} \mbox{$\sum_{i=1}^n$} Z_i \tau_i \right\rvert  Z \right] \\
       & + \ var_{Z} \ \mathbb{E}_{Y|Z}  \left[ \left. \frac{1}{n} \mbox{$\sum_{i=1}^n$} \left( Z_i (Y_i-\tau_i-f_i)- \frac{e_i}{1-e_i}(1-Z_i) \cdot (Y_i-f_i) \right)  +\frac{1}{n} \mbox{$\sum_{i=1}^n$} Z_i \tau_i  \right\rvert Z  \right] \\
       & = \frac{1}{n^2} \mathbb{E}_{Z,Y} (\Gamma_n) + var \left( 0+ \frac{1}{n} \mbox{$\sum_{i=1}^n$} Z_i \tau_i  \right) = \frac{1}{n^2} \mathbb{E}_{Z,Y} (\Gamma_{n})+ \frac{1}{n^2} \mbox{$\sum_{i=1}^n$} \tau^2_i e_i (1-e_i) = \frac{1}{n} \sigma^2_{11} 
    \end{split}
\end{equation*}
Furthermore, the sequence of $\left(Z_i(Y_i-f_i)- \frac{e_i}{1-e_i} (1-Z_i)(Y_i-f_i)  \right)$ satisfies Lindeberg's condition as verified using the Dominated Convergence Theorem. Consequently, the Central Limit Theorem holds.
\end{proof}

\begin{proof}[Proof of Lemma \ref{multi_clt}]
The covariance for the numerator and the denominator of $\hat{\tau}^{AIPW*}$ is 
\begin{small}
 \begin{equation*}
     \begin{split}
       &   cov \left(  \frac{1}{n} \mbox{$\sum_{i=1}^n$} \left[ Z_i \left(Y_i-f_i \right) - \frac{e_i}{1-e_i} (1- Z_i) (Y_i -f_i)-\tau_i e_i  \right], \frac{1}{n} \mbox{$\sum_{i=1}^n$} Z_i  \right) \\
       & = \frac{1}{n^2} \mathbb{E} \left\lbrace  \left[ \mbox{$\sum_{i=1}^n$}  \left(  Z_i (Y_i-f_i) - \frac{e_i}{1-e_i } (1- Z_i)(Y_i-f_i)- \tau_i e_i   \right) \right] \cdot \left( \mbox{$\sum_{i=1}^n$} Z_i \right)       \right\rbrace \\
       & - \frac{1}{n^2} \mathbb{E}_{Z} \mathbb{E}_{Y|Z} \left\lbrace  \left[ \mbox{$\sum_{i=1}^n$}  \left(  Z_i (Y_i-f_i) - \frac{e_i}{1-e_i } (1- Z_i)(Y_i-f_i)- \tau_i e_i   \right) \right] \right\rbrace \cdot  \mathbb{E} \left( \mbox{$\sum_{i=1}^n$} Z_i \right) \\
       & = \frac{1}{n^2} \mathbb{E}_{Z} \mathbb{E}_{Y|Z} \left[ \mbox{$\sum_{i=1}^n$} \mbox{$\sum_{j=1}^n$} Z_i (\tau_i+ \epsilon_i)Z_j - \mbox{$\sum_{i=1}^n$} \mbox{$\sum_{j=1}^n$} \frac{e_i}{1-e_i} (1-Z_i)\epsilon_i Z_j- \mbox{$\sum_{i=1}^n$} \mbox{$\sum_{j=1}^n$} \tau_i e_i Z_j   \right]  \\
       & = \frac{1}{n^2} \ \mathbb{E}_{Z} \left[ \mbox{$\sum_{i=1}^n$} \mbox{$\sum_{j=1}^n$} Z_i Z_j \tau_i - \mbox{$\sum_{i=1}^n$} \mbox{$\sum_{j=1}^n$} \tau_i e_i Z_j   \right] \\
       & = \frac{1}{n^2} \left[ \mbox{$\sum_{i=1}^n$} e_i \tau_i + \mbox{$\sum_{i=1}^n$} \sum_{j \neq i} e_i e_j \tau_i -  \left( \mbox{$\sum_{i=1}^n$} e^2_i \tau_i  + \mbox{$\sum_{i=1}^n$} \sum_{ j \neq i}  e_i e_j \tau_i \right) \right] \\
       & = \frac{1}{n^2} \mbox{$\sum_{i=1}^n$} e_i (1- e_i )\tau_i = \frac{1}{n} \sigma_{12} = \frac{1}{n} \sigma_{21}
     \end{split}
 \end{equation*}
 \end{small}
Based on Lemmas \ref{exp_ratio_ratio_exp} and 
\ref{clt_numerator}, we have, for any $a\in \mathbb{R}^2$, 
\begin{small}
\begin{equation*}
    \begin{split}
        & \sqrt{n} a^T (\Sigma^M_n)^{-1/2} \begin{pmatrix} 
        \frac{1}{n} \sum_{i=1}^n \left[ Z_i (Y_i-f_i)- \frac{e_i}{1-e_i}(1-Z_i)(Y_i-f_i)  \right] - \frac{1}{n}\sum_{i=1}^n e_i \tau_i \\
        \frac{1}{n} \sum_{i=1}^n Z_i- \frac{1}{n} \sum_{i=1}^n e_i 
        \end{pmatrix}  \rightarrow {N}(0, a^T a)
    \end{split}
\end{equation*}
\end{small}
Then based on Cram\'er-Wold Theorem, we have 
\begin{small}
\begin{equation*}
    \begin{split}
        & \sqrt{n} (\Sigma^M_n)^{-1/2}  \left[ \begin{pmatrix} 
        \frac{1}{n} \sum_{i=1}^n \left[ Z_i (Y_i-f_i)- \frac{e_i}{1-e_i}(1-Z_i)(Y_i-f_i)  \right] \\
        \frac{1}{n} \sum_{i=1}^n Z_i
        \end{pmatrix}  - \begin{pmatrix}
         \frac{1}{n}\sum_{i=1}^n e_i \tau_i \\  \frac{1}{n} \sum_{i=1}^n e_i 
        \end{pmatrix}   \right]  \rightarrow \mathcal{N} (0, I)
    \end{split}
\end{equation*}
\end{small}
where $I$ denotes a two-by-two matrix with diagonal entries $1$. 
\end{proof}

The proof of Lemma \ref{cons_exp} relies on Lemmas \ref{dif_tilde_Sigma_tilde_Sigma_star} and \ref{res_for_cons_var_int_1}.

\begin{lemma}
\label{dif_tilde_Sigma_tilde_Sigma_star}
Let \(\widetilde{\Sigma}_n\) be as defined in Corollary \ref{cor:var} and $\widetilde {\Sigma}^*_n := \frac{1}{\mbox{$\sum_{i=1}^n$} Z_i }\sum_{i=1}^n  [ Z_i(Y_i- \tau-\hat f_i) - (Y_i-\hat f_i)(1-Z_i)\hat e_i/(1-\hat e_i)]^2$.
Under Assumptions \ref{cond_unconf}, \ref{ass:cond_ind}, \ref{assump:outc_mod}, and \ref{assum:DR}, there exists a constant \(C > 0\) such that with probability at least \(1 - 2\delta - \exp(-nc^2/8) - \Delta_n - 2/n\),  
\[
\left\lvert \widetilde{\Sigma}_n - \widetilde{\Sigma}^*_n \right\rvert \leq C \log \left( \frac{2n^2}{\delta} \right) n^{-1}.
\]
\end{lemma}

\begin{lemma}
\label{res_for_cons_var_int_1}
Under Assumptions \ref{ass:cond_ind} and \ref{assump:outc_mod}, with probability at least $1- 2\delta$, we have,  
\begin{small}
   \begin{equation*}
    \begin{split}
      &  \left\lvert {(\mbox{$\sum_{i=1}^n$} Z_i)^{-4} } \mbox{$\sum_{i=1}^n$} \mbox{$\sum_{j=1}^n$} \mbox{$\mbox{$\sum_{h=1}^n$}$} Z_i Z_j Z_h (\tau_i-\tau_j)(\tau_i-\tau_h) \right. \\
      & \left. - { (\mbox{$\sum_{i=1}^n$} e_i)^{-4} } \left[\sum_{i,j,h: j\neq h} e_i e_j e_h (\tau_i-\tau_j)(\tau_i-\tau_h) + \sum_{i,j,h: j= h} e_i e_j  (\tau_i-\tau_j)^2 \right] \right\rvert \leq C  \sqrt{\log \left( \frac{2}{\delta} \right)} \cdot n^{-\frac{3}{2}} \\
      \end{split}
\end{equation*}
\end{small}
\noindent where $C>0$ is a constant.
\end{lemma}

\begin{proof}[Proof of Lemma \ref{cons_exp}]
We begin by decomposing the term \(\left| (\sum_{i=1}^n Z_i)^{-1}  \widetilde{\Sigma}_n - S \right|\) as 
\begin{equation*}
\begin{split}
   & \left\lvert \frac{1}{\mbox{$\sum_{i=1}^n$} Z_i } \widetilde{\Sigma}_n - S \right\rvert \leq \left\lvert  \frac{1}{\mbox{$\sum_{i=1}^n$} Z_i } \left( \widetilde{\Sigma}_n - \widetilde{\Sigma}^*_n \right)\right\rvert + \left\lvert \frac{1}{\mbox{$\sum_{i=1}^n$} Z_i } \widetilde{\Sigma}^*_n- S \right\rvert 
   \end{split}
\end{equation*}
where $\widetilde{\Sigma}^*_n$ is defined in Lemma \ref{dif_tilde_Sigma_tilde_Sigma_star}. 
For term $\left\lvert (\sum_{i=1}^n Z_i)^{-1} \widetilde{\Sigma}^*_n- S\right\rvert$, we have 
\begin{small}
\begin{equation*}
\begin{split}
    & \left\lvert\frac{1}{\mbox{$\sum_{i=1}^n$} Z_i} \widetilde{\Sigma}^*_n - S \right\rvert= \left\lvert \frac{1}{ (\mbox{$\sum_{i=1}^n$} Z_i)^2 }  \mbox{$\sum_{i=1}^n$}\left[ Z_i(Y_i -\tau - \hat{f}_i )-(Y_i- \hat{f}_i) (1-Z_i) \frac{ \hat{e}_i }{(1-\hat{e}_i)}\right]^2-S \right\rvert=_{(1)} \\
   & \left\lvert \frac{1}{ (\mbox{$\sum_{i=1}^n$} Z_i)^2 } \sum_{i=1}^n \left[ Z^2_i \left( \tau_i- \frac{\mbox{$\sum_{j=1}^n$} Z_j \tau_j}{ \mbox{$\sum_{j=1}^n$} Z_j} +f_i -\hat{f}_i +\epsilon_i   \right)^2 + \left( (f_i-\hat{f}_i+\epsilon_i )(1-Z_i) \frac{\hat{e}_i}{1-\hat{e}_i} \right)^2    \right]-S \right\rvert \\
   & = \left\lvert \frac{1}{ (\mbox{$\sum_{i=1}^n$} Z_i)^2 }  \mbox{$\sum_{i=1}^n$} \left\lbrace Z_i \left[ \left( \frac{\mbox{$\sum_{j=1}^n$} Z_j (\tau_i-\tau_j)}{ \mbox{$\sum_{j=1}^n$} Z_j  }  \right)^2  + \left( f_i -\hat{f}_i+ \epsilon_i  \right)^2 \right.\right. \right. \\
   & \left. \left.\left. + 2 \frac{ \mbox{$\sum_{i=1}^n$} Z_j (\tau_i-\tau_j)(f_i-\hat{f}_i+\epsilon_i) }{\mbox{$\sum_{j=1}^n$} Z_j}   \right] + (1-Z_i)\left( (f_i-\hat{f}_i+ \epsilon_i) \frac{\hat{e}_i}{1-\hat{e}_i}   \right)^2     \right\rbrace-S \right\rvert \\
   &\leq_{(a)} \mid {(\mbox{$\sum_{i=1}^n$} Z_i)^{-4} } \mbox{$\sum_{i=1}^n$} \mbox{$\sum_{j=1}^n$} \mbox{$\mbox{$\sum_{h=1}^n$}$} Z_i Z_j Z_h (\tau_i-\tau_j)(\tau_i-\tau_h) \\
   &  - { (\mbox{$\sum_{i=1}^n$} e_i)^{-4} } \sum_{i,j,h: j\neq h} e_i e_j e_h (\tau_i-\tau_j)(\tau_i-\tau_h) - \sum_{i,j,h: j= h} e_i e_j  (\tau_i-\tau_j)^2 \mid \\
   & +_{(b)} \left\lvert \frac{1}{ (\mbox{$\sum_{i=1}^n$} Z_i)^2 } \mbox{$\sum_{i=1}^n$} \left( Z_i (\epsilon_i+f_i-\hat{f}_i)+(\epsilon_i+f_i-\hat{f}_i) (1-Z_i)\frac{\hat{e}_i}{1-\hat{e}_i} \right)^2 - \frac{\mathbb{E}_{Z,Y}(\Gamma_n) }{ (\mbox{$\sum_{i=1}^n$} e_i)^2 } \right\rvert\\
   & +_{(c)} \left\lvert \frac{2}{(\mbox{$\sum_{i=1}^n$} Z_i)^3} \mbox{$\sum_{i=1}^n$} \mbox{$\sum_{j=1}^n$} Z_i Z_j (\tau_i-\tau_j)(f_i-\hat{f}_i + \epsilon_i)\right\rvert. \\
\end{split}
\end{equation*}
\end{small}
\((1)\) follows from the fact that the interaction term \( Z_i(1 - Z_i)(Y_i - \tau - \hat{f}_i)(Y_i - \hat{f}_i) \hat{e}_i(1 - \hat{e}_i)^{-1} = 0 \) for \( i \in [n] \). We next derive bounds for terms $(a)$, $(b)$ and $(c)$ in the last equality. For term \((a)\), we have
\begin{equation}
\label{cons_exp_part3}
    \begin{split}
        (a)\leq C  \ {\log^{\frac{1}{2}} \left( {2}/{\delta} \right)} n^{-\frac{3}{2}}
    \end{split}
\end{equation}
which follows from Lemma \ref{res_for_cons_var_int_1}, with probability $1-2\delta$. Next, for term $(b)$, we define $d(\hat{e}_i,e_i):=\frac{\hat{e}_i}{1-\hat{e}_i}-\frac{{e}_i}{1-{e}_i}$ and $a_i=\epsilon_i(1-Z_i)e_i(1-e_i)^{-1}$ for $i\in[n]$. Then we obtain the following bound.
\begin{small}
\begin{equation}
\label{cons_exp_part1}
 \begin{split}
 & \left\lvert \frac{1}{ (\mbox{$\sum_{i=1}^n$} Z_i)^2 } \mbox{$\sum_{i=1}^n$} \left( Z_i (\epsilon_i+f_i-\hat{f}_i)+(\epsilon_i+f_i-\hat{f}_i) (1-Z_i)\frac{\hat{e}_i}{1-\hat{e}_i} \right)^2- \frac{\mathbb{E}_{Z,Y}(\Gamma_n) }{ (\mbox{$\sum_{i=1}^n$} e_i)^2 } \right\rvert  \\
 &=_{(1)} \left\lvert \frac{1}{ (\mbox{$\sum_{i=1}^n$} Z_i)^2 } \mbox{$\sum_{i=1}^n$} \left( Z_i (\epsilon_i+f_i-\hat{f}_i)-a_i+(\epsilon_i+f_i-\hat{f}_i) (1-Z_i)\frac{\hat{e}_i}{1-\hat{e}_i}+a_i \right)^2- \frac{\mathbb{E}_{Z,Y}(\Gamma_n) }{ (\mbox{$\sum_{i=1}^n$} e_i)^2 } \right\rvert  \\
 & = \left\lvert \frac{1}{(\mbox{$\sum_{i=1}^n$} Z_i)^2} {\sum_{i=1}^n}\left\lbrace \left[ Z_i \epsilon_i + \epsilon_i  \frac{(1- Z_i)e_i}{1-e_i} \right]^2 + \left[ ( Z_i + \frac{(1-Z_i)\hat{e}_i}{1-\hat{e}_i} ) \left( f_i -\hat{f}_i \right)+ \epsilon_i(1-Z_i) d(\hat{e}_i,e_i) \right]^2 \right.\right.  \\
 & \left. \left. + 2 \left[  Z_i \epsilon_i + \epsilon_i  \frac{(1-Z_i)e_i}{1-e_i} \right]\cdot \left[ \left(Z_i+ \frac{(1-Z_i)\hat{e}_i}{1-\hat{e}_i}  \right) (f_i-\hat{f}_i)+\epsilon_i (1-Z_i)d(\hat{e}_i,e_i) \right]\right\rbrace  - \frac{\mathbb{E}_{Z,Y}(\Gamma_n)}{(\mbox{$\sum_{i=1}^n$} e_i)^2} \right\rvert \\
 & \leq_{(2)} \left\lvert \frac{\Gamma_n}{(\mbox{$\sum_{i=1}^n$} Z_i)^2 }- \frac{\mathbb{E}_{Z,Y}(\Gamma_n)}{ (\mbox{$\sum_{i=1}^n$} e_i)^2}  \right\rvert+ \frac{C}{ \left( \mbox{$\sum_{i=1}^n$} Z_i  \right)^2} {\sum_{i=1}^n} \left[ 
 (f_i-\hat{f}_i)^2+\epsilon_i^2(1-Z_i)d^2(\hat{e}_i,e_i) \right] \\
 & + \left\lvert \frac{2}{(\mbox{$\sum_{i=1}^n$} Z_i)^2 } \mbox{$\sum_{i=1}^n$} \left[  \left(Z_i + (1-Z_i)\frac{e_i}{1-e_i}\frac{\hat{e}_i}{1-\hat{e}_i} \right) \epsilon_i (f_i-\hat{f}_i)+  \left(1-Z_i\right) \frac{e_i}{1-e_i}\epsilon^2_i \ d(\hat{e}_i,e_i) \right]  \right\rvert \\
 & \leq \left\lvert \left[ ({\sum_{i=1}^n} Z_i )^{-2}  - ( {\sum_{i=1}^n} e_i )^{-2} \right] \cdot \Gamma_n \right\rvert+  \left\lvert  ( {\sum_{i=1}^n} e_i )^{-2} \left( \Gamma_n - \mathbb{E}_{Z,Y} (\Gamma_n)  \right)\right\rvert + \frac{C}{(\mbox{$\sum_{i=1}^n$} Z_i)^2 } \mbox{$\sum_{i=1}^n$} \left( f_i-\hat{f}_i \right)^2 \\
 & + \frac{C\sqrt{\mbox{$\sum_{i=1}^n$} \epsilon^4_i}}{(\mbox{$\sum_{i=1}^n$} Z_i)^2 }  \left( \sqrt{\mbox{$\sum_{i=1}^n$} d^4(\hat{e}_i,e_i)}+\sqrt{\mbox{$\sum_{i=1}^n$} d^2(\hat{e}_i,e_i)}  \right) + \frac{C}{(\mbox{$\sum_{i=1}^n$} Z_i)^2 } \sqrt{\mbox{$\sum_{i=1}^n$} \epsilon^2_i} \sqrt{\mbox{$\sum_{i=1}^n$} (f_i-\hat{f}_i)^2}  \\
 & \leq_{(3.1)} C n^{-\frac{3}{2}}\log^{\frac{1}{2}}(2/\delta)+_{(3.2)} C n^{-\frac{3}{2}}\log^{1/2}(n) + _{(3.3)}n^{-1}\left( \delta^2_n n^{-1}+ \log(2n^2/\delta)n^{-\frac{1}{2}}\delta_n^2\log^{-1}(n) \right)  \\
 & +_{(3.4)} C \left[ n^{-2} n^{\frac{1}{2}} \log^{1/2}(n) \left( \sqrt{\mbox{$\sum_{i=1}^n$} d^2(\hat{e}_i,e_i)} + \log^{1/2} (n) \right)\right] \\
 & +_{(3.5)}C n^{-2} n^{\frac{1}{2}} \log^{1/2}(n) \left( \delta_n+ \log^{\frac{1}{2}}(2n^2/\delta)n^{\frac{1}{4}}\delta_n\log^{-\frac{1}{2}}(n)\right) \\
 & \leq C n^{-5/4} \log^{1/2}(2n^2/\delta)\delta_n\log^{-1/2}(n)
\end{split}
\end{equation}
\end{small}
\noindent with probability at least \(1 - \exp(-nc^2/8) - \delta - \Delta_n - 3n^{-1}\). 
\((2)\) follows from the upper bound on \(\hat{e}_i\) specified in Assumption \ref{assum:DR} (i). 
$(3.1)$ is derived from the fact that \begin{equation}
\label{proof_res_for_cons_var_int_1_int_1}
    \begin{split}
  \left\lvert \mbox{$\sum_{i=1}^n$} Z_i - \mbox{$\sum_{i=1}^n$} e_i\right\rvert \leq \sqrt{ 12 \log \left(2/{\delta}\right)} \cdot n^{\frac{1}{2}}  
    \end{split}
\end{equation}
holds with probability $1-\delta$ (\cite{angluin1977fast}). 
$(3.2)$ is derived by applying Bernstein inequality on $\Gamma_n-\mathbb{E}_{Z,Y}(\Gamma_n)$. Then with probability $1-2{n}^{-1}$, $|\Gamma_n-\mathbb{E}_{Z,Y}(\Gamma_n)|\leq C \log^{1/2}(n) n^{1/2}$. 
$(3.3)$ and $(3.5)$ are derived using Lemma 
\ref{lemma:sparsity_gamma_star}, Assumption \ref{assum:DR} (ii), and Bernstein inequality for \(\sum_{i=1}^n \epsilon_i^2\). 
$(3.4)$ follows from Assumption \ref{assum:DR} (i) and apply Bernstein inequality to \(\sum_{i=1}^n \epsilon_i^4\). For term $(c)$, we have
\begin{small}
\begin{equation}
\label{cons_exp_part2}
    \begin{split}
    &\left\lvert \frac{2}{(\mbox{$\sum_{i=1}^n$} Z_i)^3} \mbox{$\sum_{i=1}^n$} \mbox{$\sum_{j=1}^n$} Z_i Z_j (\tau_i-\tau_j)(f_i-\hat{f}_i + \epsilon_i) \right\rvert \\
    & \leq_{(1)} \frac{2}{(\mbox{$\sum_{i=1}^n$} Z_i)^3} \left[ \sqrt{{\sum_{i=1}^n} (f_i-\hat{f}_i)^2} \sqrt{{\sum_{i=1}^n} \left( \mbox{$\sum_{j=1}^n$} Z_j (\tau_i-\tau_j) \right)^2} + \left\lvert  \mbox{$\sum_{i=1}^n$} \left[  \mbox{$\sum_{j=1}^n$} Z_i Z_j (\tau_i -\tau_j)  \right] \epsilon_i \right\rvert \right]\\
    & \leq C \frac{1}{n^3} \left[ \left( \delta_n+ \log^{\frac{1}{2}}(\frac{2n^2}{\delta})n^{\frac{1}{4}}\delta_n\log^{-\frac{1}{2}}(n)\right) n^{\frac{3}{2}}+  n^{\frac{3}{2}} \log^{\frac{1}{2}}(n) \right] \leq C n^{-\frac{5}{4}} \log^{\frac{1}{2}}(\frac{2n^2}{\delta})\delta_n\log^{-\frac{1}{2}}(n) \\ 
    \end{split}
\end{equation}
\end{small}
with probability at least \(1 - \exp(-nc^2/8) - \delta - \Delta_n - n^{-1}\). The bound for the second term in \((1)\) is obtained by applying Bernstein's inequality, conditional on \(Z\), to \(\sum_{i=1}^n b_{i,n} \epsilon_i^2\), where \(b_{i,n} = \sum_{j=1}^n Z_i Z_j (\tau_i - \tau_j)\) for \(i \in [n]\). 
Then we have 
\begin{small}
\begin{equation}
\label{proof_cons_exp_int_1}
    \begin{split}
    & \left\lvert \frac{1}{\mbox{$\sum_{i=1}^n$} Z_i} \widetilde{\Sigma}^*_n - S\right\rvert \leq (a)+(b)+(c) \leq_{(1)}C n^{-5/4} \log^{1/2}(2n^2/\delta)\delta_n\log^{-1/2}(n) \\
    \end{split}
\end{equation}
\end{small}
\noindent with probability at least $1-3\delta-\Delta_n-\exp(-nc^2/8)-4n^{-1}$. 
$(1)$ follows from equations \eqref{cons_exp_part3}, \eqref{cons_exp_part1} and \eqref{cons_exp_part2}.
Finally, with probability at least $1-5\delta-2\Delta_n-\exp(-nc^2/8)-6n^{-1}$, we have
\begin{small}
\begin{equation*}
\begin{split}
   & \left\lvert \frac{1}{\mbox{$\sum_{i=1}^n$} Z_i } \widetilde{\Sigma}_n - S \right\rvert \leq \left\lvert  \frac{1}{\mbox{$\sum_{i=1}^n$} Z_i } \left( \widetilde{\Sigma}_n - \widetilde{\Sigma}^*_n \right)\right\rvert + \left\lvert \frac{1}{\mbox{$\sum_{i=1}^n$} Z_i } \widetilde{\Sigma}^*_n- S \right\rvert \\
   & \leq_{(1)} C \cdot  { \log \left(2n^2/{\delta} \right)} \cdot n^{-2} + C n^{-\frac{5}{4}} \log^{\frac{1}{2}}(2n^2/\delta)\delta_n\log^{-\frac{1}{2}}(n) \leq C n^{-\frac{5}{4}} \log^{\frac{1}{2}}(2n^2/\delta)\delta_n\log^{-\frac{1}{2}}(n).
   \end{split}
\end{equation*}
\end{small}
The constant \(C\) varies for each inequality. (1) follows from Lemma \ref{dif_tilde_Sigma_tilde_Sigma_star} and by equation \eqref{proof_cons_exp_int_1}. 
\end{proof}

\begin{proof}[Proof of Lemma \ref{int_upper_bound}]
   Let $\mathbf{\tau}_{(i)}=(\tau_i-\tau_1, \cdots, \tau_i-\tau_n)^T$ and $\mathbf{e}=(e_1, \cdots, e_n)^T$. Then for each $i \in [n]$, 
   $ \mbox{$\sum_{j=1}^n$} \mbox{$\sum_{h=1}^n$} e_j e_h (\tau_i-\tau_j) (\tau_i-\tau_h) = \mathbf{\tau}^T_{(i)} \cdot \mathbf{e} \cdot \mathbf{e}^T \mathbf{\tau}_{(i)}= \left( \mathbf{\tau}^T_{(i)} \mathbf{e}  \right)^2 \geq 0.$
\end{proof}

\begin{proof}[Proof of Lemma \ref{dif_e_i_hat_e_i}]
Let $d({e}_i,\hat{e}_i):=\frac{{e}_i}{1-{e}_i}-\frac{\hat{e}_i}{1-\hat{e}_i}$ for $i\in [n]$. For each $i\in [n]$, we have
\begin{small}
\begin{equation}
\label{proof_dif_e_i_hat_e_i_part1}
\begin{split}
 &|e_i-\hat{e}_i|= \left\lvert\frac{e_i}{1-e_i} \left(1+\frac{e_i}{1-e_i}\right)^{-1}-\frac{\hat{e}_i}{1-\hat{e}_i} \left(1+\frac{\hat{e}_i}{1-\hat{e}_i}\right)^{-1} \right\rvert \\
 &= \left\lvert \left(\frac{e_i}{1-e_i} -\frac{\hat{e}_i}{1-\hat{e}_i}\right) \left(1+\frac{{e}_i}{1-{e}_i}\right)^{-1} -  \frac{\hat{e}_i}{1-\hat{e}_i} \left[ \left(1+\frac{\hat{e}_i}{1-\hat{e}_i}\right)^{-1}-\left(1+\frac{{e}_i}{1-{e}_i}\right)^{-1} \right] \right\rvert \\
 & = \left\lvert d(e_i,\hat{e}_i) (1-e_i)- \frac{\hat{e}_i}{1-\hat{e}_i} \left(1+\frac{\hat{e}_i}{1-\hat{e}_i}\right)^{-1} \left(1+\frac{{e}_i}{1-{e}_i}\right)^{-1} \left[1+\frac{{e}_i}{1-{e}_i}- \left(1+\frac{\hat{e}_i}{1-\hat{e}_i}\right) \right] \right\rvert\\
 & \leq |(1-e_i)+\hat{e}_i(1-e_i)| \cdot |d(e_i,\hat{e}_i)|.
 \end{split}
\end{equation}
\end{small}
Then we obtain
\begin{small}
\begin{equation*}
    \begin{split}
    \frac{1}{n}\sum_{i=1}^n (e_i-\hat{e}_i)^2 \leq_{(1)} \frac{1}{n}\sum_{i=1}^n [(1-e_i)(1+\hat{e}_i)]^2 d^2(e_i,\hat{e}_i) \leq_{(2)} C \frac{1}{n}\sum_{i=1}^n d^2(e_i,\hat{e}_i) \leq_{(3)} C \delta^2_n
    \end{split}
\end{equation*}
\end{small}
\((1)\) follows from equation \eqref{proof_dif_e_i_hat_e_i_part1}. \((2)\) follows from the lower bound of \( e_i \) and the upper bound of \( \hat{e}_i \) for \( i \in [n] \), as assumed in Assumptions \ref{ass:cond_ind} and \ref{assum:DR} (ii). \((3)\) follows from Assumption \ref{assum:DR} (ii), with probability at least \( 1 - \Delta_n \).

\end{proof}

\begin{proof}[Proof of Lemma \ref{dif_tilde_Sigma_tilde_Sigma_star}]
Let 
$A_i:=Z_i (Y_i -\hat{\tau}^{AIPW}-\hat{f}_i)-(Y_i-\hat{f}_i)(1-Z_i)\frac{\hat{e}_i}{1-\hat{e}_i}$ and $B_i:=Z_i (Y_i -{\tau}-\hat{f}_i)-(Y_i-\hat{f}_i)(1-Z_i)\frac{\hat{e}_i}{1-\hat{e}_i}$ for $i\in [n]$. Then we express the difference $|\widetilde{\Sigma}_n- \widetilde{\Sigma}^*_n |$ as
\begin{small}
\begin{equation}
\label{dif_tilde_Sigma_tilde_Sigma_star_part2}
    \begin{split}
    & |  \widetilde{\Sigma}_n- \widetilde{\Sigma}^*_n |=\frac{1}{\mbox{$\sum_{i=1}^n$} Z_i} \sum_{i=1}^n (A_i-B_i)(A_i+B_i)= \left\lvert \frac{1}{\mbox{$\sum_{i=1}^n$} Z_i} \sum_{i=1}^n Z_i (\tau-\hat{\tau}^{AIPW}) (A_i+B_i) \right\rvert \\
    & \leq \left\lvert \frac{\mbox{$ \sum_{i=1}^n$ } Z_i \cdot  (A_i+B_i) }{\mbox{$\sum_{i=1}^n$} Z_i}  \right\rvert \cdot |\hat{\tau}^{AIPW}-\hat{\tau}^{AIPW*}+\hat{\tau}^{AIPW*}-\tau|
    \end{split}
\end{equation}
\end{small}
where \begin{small}$(A_i+B_i)=Z_i(-\hat{\tau}^{AIPW}-\tau)+2\cdot \left( Z_i(Y_i-\hat{f}_i)-(Y_i-\hat{f}_i)(1-Z_i) \frac{\hat{e}_i}{1-\hat{e}_i}  \right)$\end{small} for $i\in [n]$.
We then derive an upper bound for $|\hat{\tau}^{AIPW*}-\tau|$.
\begin{equation}
\label{dif_tilde_Sigma_tilde_Sigma_star_part3}
    \begin{split}
    & \left\lvert \hat{\tau}^{AIPW*}-\tau  \right\rvert=\left\lvert ({\mbox{$\sum_{i=1}^n$}Z_i })^{-1} \ \mbox{$\sum_{i=1}^n$} \left[Z_i-{e_i(1-e_i)^{-1}(1-Z_i)}\right]\epsilon_i \right\rvert   \\
    & \leq_{(1)} {(\mbox{$\sum_{i=1}^n$} Z_i)^{-1}} { \left\lbrace 2 \mbox{$\sum_{i=1}^n$} \left[Z_i-{e_i(1-e_i)^{-1}(1-Z_i)}\right]^2 \bar{\sigma}^2 \right\rbrace^{1/2} \log^{1/2}n } \\
    & \leq_{(2)} (\mbox{$\sum_{i=1}^n$} Z_i)^{-1}\left[ C \mbox{$\sum_{i=1}^n$} (Z^2_i+(1-Z_i)^2) \right]^{1/2}\log^{1/2}(n)  \leq_{(3)}  C n^{-\frac{1}{2}} \log^{\frac{1}{2}} (n) 
    \end{split}
\end{equation}
$(1)$ follows from the tail bound for the subgaussian random variable \( \sum_{i=1}^n [Z_i - e_i(1 - e_i)^{-1} (1 - Z_i)] \epsilon_i \), conditional on \( Z \), which holds with probability at least \( 1 - 2/n \). $(2)$ follows from the inequality $[Z_i-e_i(1-e_i)^{-1}(1-Z_i)]^2\leq C[Z^2_i+(1-Z_i)^2]$ for $i\in [n]$ under Assumption \ref{ass:cond_ind}. $(3)$ follows from inequality \eqref{OR:ATT:part2}, which holds with probability at least \( 1 - \exp(-nc^2/8) \). Next, we proceed to derive an upper bound for $|(\sum_{i=1}^n Z_i)^{-1} \sum_{i=1}^n Z_i (A_i+B_i)|$. 
\begin{small}
\begin{equation}
\label{dif_tilde_Sigma_tilde_Sigma_star_part1}
    \begin{split}
    & \left\lvert \frac{\mbox{$ \sum_{i=1}^n$ } Z_i (A_i+B_i) }{\mbox{$\sum_{i=1}^n$} Z_i}  \right\rvert= \left\lvert -(\hat{\tau}^{AIPW} -\tau )+2 \frac{ \mbox{$\sum_{i=1}^n$} Z_i \epsilon_i}{ \mbox{$\sum_{i=1}^n$} Z_i}+ 2  \frac{ \mbox{$\sum_{i=1}^n$} Z_i (f_i-\hat{f}_i)}{ \mbox{$\sum_{i=1}^n$} Z_i}\right\rvert \\
    & \leq_{(1.1)}  C  \left[ \delta^{\frac{1}{2}}_n  n^{-\frac{1}{2}} \Sigma^{-\frac{1}{2}}_n  +   C n^{-\frac{1}{2}} \log^{\frac{1}{2}}(n) \right]+_{(1.2)} C \log^{\frac{1}{2}} ({2}/{\delta}) n^{-\frac{1}{2}}\\
    & +_{(1.3)} C n^{-\frac{1}{2}} \delta_n \left(1+ \log( {2n^2}/{\delta}) n^{-\frac{1}{2}}\log^{-1}n \right)^{\frac{1}{2}} \leq C \log^{\frac{1}{2}} (2n^2/\delta) n^{-\frac{1}{2}}.
    \end{split}
\end{equation}
\end{small}
\noindent 
with probability at least \(1 - 2\delta - \exp(-nc^2/8) - \Delta_n-2/n\). $(1.1)$ follows from equation \eqref{dif_tilde_Sigma_tilde_Sigma_star_part3} with probability $1-\exp(-nc^2/8) - 2/n$ and equation \eqref{eq:dr_main} in the proof of Theorem \ref{thm:dra_tau_inf} in the main paper,  
with probability $1-\delta-\Delta_n$. $(1.2)$ follows from equations \eqref{OR:ATT:part1} and \eqref{OR:ATT:part2} with probability $1-\delta-\exp(-nc^2/8)$. 
$(1.3)$ is derived by applying the Cauchy-Schwarz inequality, followed by Lemma 
\ref{lemma:sparsity_gamma_star}
and Assumption \ref{assum:DR} (ii) with probability $1-\delta-\Delta_n$. Combining these results, we obtain the following bound for \( | \widetilde{\Sigma}_n - \widetilde{\Sigma}^*_n | \). With probability at least \( 1 - 2\delta - \exp(-nc^2/8) - \Delta_n - 2/n \), we have 
\begin{equation}
    \begin{split}
    & |  \widetilde{\Sigma}_n- \widetilde{\Sigma}^*_n |  
    \leq \left\lvert \frac{\mbox{$ \sum_{i=1}^n$ } Z_i( A_i+B_i) }{\mbox{$\sum_{i=1}^n$} Z_i}  \right\rvert \left(|\hat{\tau}^{AIPW}-\hat{\tau}^{AIPW*}|+|\hat{\tau}^{AIPW*}-\tau| \right)\\
     & \leq_{(1)} C \log^{1/2} (2n^2/\delta) n^{-1/2} \left(\delta^{\frac{1}{2}}_n  n^{-\frac{1}{2}} \Sigma^{-\frac{1}{2}}_n  +   C n^{-\frac{1}{2}} \log^{\frac{1}{2}}(n) \right)\leq C {\log \left( 2n^2/\delta \right) }  n^{-{1}}
    \end{split}
\end{equation}
$(1)$ follows from equations \eqref{dif_tilde_Sigma_tilde_Sigma_star_part2}, \eqref{dif_tilde_Sigma_tilde_Sigma_star_part3}, and \eqref{dif_tilde_Sigma_tilde_Sigma_star_part1}, together with the bound on equation \eqref{eq:dr_main} in the main paper.
\end{proof}

\begin{proof}[Proof of Lemma \ref{res_for_cons_var_int_1}]
We first decompose the difference into two parts
\begin{small}
\begin{equation}
\label{proof_res_for_cons_var_int_1}
\begin{split}
 & \frac{\mbox{$\sum_{i=1}^n$} \mbox{$\sum_{j=1}^n$} \mbox{$\sum_{h=1}^n$} Z_i Z_j Z_h (\tau_i-\tau_j)(\tau_i-\tau_h)}{\mbox{$(\mbox{$\sum_{i=1}^n$} Z_i)^4$} }  -  \frac{\mbox{$\sum_{i=1}^n$} \mbox{$\sum_{j=1}^n$} \mbox{$\sum_{h=1}^n$} \mathbb{E} \left( Z_i Z_j Z_h (\tau_i-\tau_j)(\tau_i-\tau_h) \right)}{\mbox{$(\mbox{$\sum_{i=1}^n$} e_i)^4$} } \\
 & =_{(a)} \frac{n^4}{ (\mbox{$\sum_{i=1}^n$} Z_i)^4 }  \mbox{$\sum_{i=1}^n$}  \mbox{$\sum_{j=1}^n$} \mbox{$\sum_{h=1}^n$} \frac{1}{n^4} \left[ Z_i Z_j Z_h (\tau_i-\tau_j)(\tau_i-\tau_h) -  \mathbb{E}  \left( Z_i Z_j Z_h (\tau_i-\tau_j)(\tau_i-\tau_h) \right) \right] \\
 & +_{(b)} \frac{(\mbox{$\sum_{i=1}^n$} e_i)^4-(\mbox{$\sum_{i=1}^n$} Z_i)^4}{(\mbox{$\sum_{i=1}^n$} Z_i)^4 \cdot (\mbox{$\sum_{i=1}^n$} e_i)^4 } \cdot  \left[ \mbox{$\sum_{i=1}^n$} \mbox{$\sum_{j=1}^n$} \mbox{$\sum_{h=1}^n$} \mathbb{E} \left( Z_i Z_j Z_h (\tau_i-\tau_j)(\tau_i-\tau_h) \right) \right]. \\
 \end{split}
\end{equation}
\end{small}
\noindent For part $(a)$, note that \(\frac{n^4}{(\sum_{i=1}^n Z_i)^4} \leq C\) with probability at least \(1 - \exp(-nc^2/8)\), as established in \eqref{OR:ATT:part2}. Furthermore, let  
\begin{equation*}
\begin{split}
   h(Z):= \mbox{$\sum_{i=1}^n$}  \mbox{$\sum_{j=1}^n$} \mbox{$\sum_{h=1}^n$} \frac{1}{n^4} \left[ Z_i Z_j Z_h (\tau_i-\tau_j)(\tau_i-\tau_h) -  \mathbb{E}  \left( Z_i Z_j Z_h (\tau_i-\tau_j)(\tau_i-\tau_h) \right) \right].  
\end{split}  
\end{equation*}
Without loss of generality, consider changing the component \(Z_1\) from \(1\) to \(0\). Then
\begin{equation*}
    \begin{split}
      &  |h((1,Z_{-1}))-h((0,Z_{-1}))| \\
      &\leq   \mbox{$\sum_{j=1}^n$} \mbox{$\sum_{h=1}^n$} \frac{1}{n^4} \left\lvert  Z_j Z_h (\tau_1-\tau_j)(\tau_1-\tau_h) -  \mathbb{E}  \left(  Z_j Z_h (\tau_1-\tau_j)(\tau_1-\tau_h) \right) -0 \right\rvert \leq_{(1)} \frac{C}{n^2} \\ 
    \end{split}
\end{equation*}
$(1)$ follows from Assumptions \ref{ass:cond_ind} and \ref{assump:outc_mod}. This bound also holds for all other $i\in[n]$. Applying McDiarmid's inequality (see, e.g., \cite{mcdiarmid1989method}) to \(h(Z)\), we obtain that, with probability at least \(1 - \delta\),
\begin{small}
\begin{equation*}
    \begin{split}
     & \left\lvert {\sum_{i=1}^n} {\sum_{j=1}^n} {\sum_{h=1}^n} \frac{1}{n^4} \left[ Z_i Z_j Z_h (\tau_i-\tau_j)(\tau_i-\tau_h) -  \mathbb{E}  \left( Z_i Z_j Z_h (\tau_i-\tau_j)(\tau_i-\tau_h) \right) \right] \right\rvert  \leq C {\log^{\frac{1}{2} } ( \frac{1}{\delta} )} n^{-\frac{3}{2}}. \\ 
    \end{split}
\end{equation*}
\end{small}
For part $(b)$, we have  
\begin{small}
\begin{equation*}
    \begin{split}
      & \left\lvert \left(\mbox{$\sum_{i=1}^n$} e_i\right)^4- \left(\mbox{$\sum_{i=1}^n$} Z_i\right)^4  \right\rvert = \left\lvert \sum_{i=1}^ne_i-\sum_{i=1}^n Z_i \right\rvert \left( \sum_{i=1}^n e_i + \sum_{i=1}^n Z_i \right) \left[ \left(\mbox{$\sum_{i=1}^n$} e_i\right)^2+ \left(\mbox{$\sum_{i=1}^n$} Z_i\right)^2 \right]  \\
      & \leq _{(1)} C \cdot \sqrt{\log \left(2/{\delta}\right) }  \cdot n^{\frac{7}{2}}
    \end{split}
\end{equation*}
\end{small}
\noindent $(1)$ follows from Assumption \ref{ass:cond_ind} and inequality \eqref{proof_res_for_cons_var_int_1_int_1} with probability at least $1-\delta$.
 Therefore, Part $(b)$ is on the order of $\sqrt{\log \left( 2/{\delta} \right) } \cdot {n}^{-\frac{3}{2}}$. Combining the results of part $(a)$ and $(b)$, we have, with probability at least $1-2\delta$,  
\begin{small}
\begin{equation*}
    \begin{split}
      & \left\lvert  \frac{\mbox{$\sum_{i=1}^n$} \mbox{$\sum_{j=1}^n$} \mbox{$\sum_{h=1}^n$} Z_i Z_j Z_h (\tau_i-\tau_j)(\tau_i-\tau_h)}{(\mbox{$\sum_{i=1}^n$} Z_i)^4 }  -  \frac{\mbox{$\sum_{i=1}^n$} \mbox{$\sum_{j=1}^n$} \mbox{$\sum_{h=1}^n$} \mathbb{E} \left( Z_i Z_j Z_h (\tau_i-\tau_j)(\tau_i-\tau_h) \right)}{(\mbox{$\sum_{i=1}^n$} e_i)^4 }  \right\rvert    \\
      & \leq  C \cdot \sqrt{\log \left( 2/{\delta} \right)} \cdot n^{-\frac{3}{2}} \\
    \end{split}
\end{equation*}
\end{small}

\end{proof}

\section{Additional Simulations and Analysis}
We present additional simulation results that complement and expand on the results of the main text. 
Section~\ref{sec:simulation-with-x} presents extensions of our results to settings with Assumption~\ref{Assump:NoDirectImpact} is violated and the interference function depends on underlying covariates. Section~\ref{sim_equivalence_ADTT_exposure} demonstrates the adaptivity of the our proposed estimator to very shallow exposure mappings. Lastly, Section~\ref{sec:additional_figs_analysis} provides figures of the context tree under different penalty parameters $\lambda$ for the data analysis of Section \ref{sec_sin_out_pattern_0_1_2} in the main text.

Importantly, throughout this section we study substantially more complex data generating processes and hence consider a less conservative $\lambda = 0.4$ which is consistent with sharper bounds based on bootstrap approximations to bound the effective noise as in \cite{belloni2018high}.

\subsection{Estimator performance under violations of Assumption \ref{Assump:NoDirectImpact} }\label{sec:simulation-with-x}
In this section, we explore a setting where the interference function depends on covariates, thereby violating Assumption \ref{Assump:NoDirectImpact}. This violation suggests a natural extension of Algorithm $1$ to account for covariate by constructing $\widehat{\mathcal{K}}$ under distinct values of the discrete covariate \(X\)\footnote{Applying this approach to continuous covariates is also possible by considering adaptive coarsenings of the continuous covariates. This necessarily will increase the computational complexity of the algorithm, likely require additional assumptions to demonstrate validity and so we leave this for future work.}. Then when estimating \(\hat{f}_i\), the revised algorithm first matches on \(X\) and then matches on patterns based on $\widehat{\mathcal{K}}$. Alternatively, the elements of \(\widehat{\mathcal{K}}\) in Algorithm $1$ could be adjusted as joint representations of covariates and patterns, i.e., \((X_i, \gamma(G_i^Z))\). 

In this simulation, we consider an Erd\H{o}s R\'{e}nyi graph with $n=4000$, $p=4/n$ and a graph confounder ($X_i = s_i(1)$ is the degrees of individual \(i\)). The penalty parameter is set to \(\lambda = 0.4\), and there are \(M = 10,000\) Monte Carlo repetitions. The interference function is given by:
$$f(X_i,\gamma(G_i^Z)) = \mbox{$\sum_{j=1}^{n}$}\frac{1}{k^j}(s_i^Z(j)-s_i^Z(j-1))h(X_i)/2,$$ where $k=3$, and \(s_i^Z(j)\) denotes the number of treated units within the \(j\)-th distance for individual \(i\). The dependence on covariates comes through $h(X_i)$ which depends on the \textit{fixed} empirical degree distribution in the graph: $h(X_i) = 1, 2$ or $3$ if $X_i$ is in the first, second or third tertile of the empirical degree distribution in the observed graph. Table~\ref{tab_exp_4} summarizes the results from this experiment for our estimators $\hat{\tau}^{AIPW}$ and $\hat{\tau}^{OR}$ as well as the estimators \(\hat{\tau}^{IPW}\) and \(\hat{\tau}^{NV}\) that are defined by 
\begin{equation}
\label{tau_ipw}
  \hat{\tau}^{IPW} := \frac{\mbox{$\sum_{i=1}^n$ } \hat{W}_i Z_i Y_i}{\mbox{$\sum_{i=1}^n$ } \hat{W}_i Z_i} - \frac{\mbox{$\sum_{i=1}^n$ } \hat{W}_i (1-Z_i) Y_i}{\mbox{$\sum_{i=1}^n$ }\hat{W}_i (1-Z_i)}  
\end{equation}
where \(\hat{W}_i = Z_i + (1-Z_i) \frac{\hat{e}_i(X_i)}{1-\hat{e}_i(X_i)}\) for \(i \in [n]\). Here, the propensity scores \(\hat{e}_i(X_i)\) are estimated via logistic regression of the treatment assignment \(Z\) on the covariates \(X\). The naive difference-in-means estimator, \(\hat{\tau}^{NV}\), is written as
\begin{equation}
\label{tau_NV}
    \begin{split}
       \hat{\tau}^{NV} := \frac{\mbox{$\sum_{i=1}^n$} Z_i Y_i}{\mbox{$\sum_{i=1}^n$} Z_i} - \frac{\mbox{$\sum_{i=1}^n$} (1-Z_i) Y_i}{\mbox{$\sum_{i=1}^n$} (1-Z_i)}. 
    \end{split}
\end{equation}
    \begin{table}[H]
		\begin{center}
			\caption{
      Comparison of estimator performance by stratifying \(X\) and constructing corresponding \(\widehat{\mathcal{K}}\): bias, standard error, and rmse (ADTT = $2.505$)
            }
			\vspace{0.2 cm}
			\label{tab_exp_4}
			\begin{tabular}[\linewidth]{c|c|c|c|c}
		     \hline 
			  & & & & \\[-1.8ex]
                & bias & se & rmse$(\tau)$ &rmse$(\tau_{ADTE})$  \\
				\hline
				& & & & \\[-1.8ex]
                $\hat{\tau}^{OR}$ & 
                $0.021$ & $0.082$ & $0.079$  
                & 0.085\\ 
                $\hat{\tau}^{AIPW}$& 
                $0.000$ & $0.081$ & $0.074$ 
                & 0.081\\
                $\hat{\tau}^{IPW}$ &  
                $-0.091$ & $0.090$ & $0.125$ 
                & 0.128\\
				$\hat{\tau}^{NV}$ & 
                $1.381$ & $0.163$& $1.388$ 
                & 1.391\\ 
				\hline
			\end{tabular}
		\end{center}
        \vspace{0.1 cm}
    \parbox{\textwidth}{\footnotesize
    }
	\end{table}
Here \(\widehat{\mathcal{K}}\) is constructed under correctly specified strata of \(X\). Under this specification, we see no reduction in the performance of either one of our estimators while we see a major degradation in the performance of $\hat{\tau}^{NV}$. 

\subsection{Demonstrating adaptivity}
\label{sim_equivalence_ADTT_exposure}
In this section, we consider a setting where potential outcomes are induced by an exposure mapping where the exposure mapping is the same as in Section $9$ of \cite{aronow2017estimating} but the interference for each unit is scaled by the unit’s covariate, leading to a violation of Assumption \ref{Assump:NoDirectImpact}.
As in Section~\ref{sec:simulation-with-x}, the graph \(G \sim ER(n, p)\), where \(n = 4000\) and \(p = 4/n\). The degree of each unit, \(X_i = s_i(1)\), serves as a confounder. The binary treatment \(Z_i\) is independently assigned to each unit with probability \(P(Z_i \mid X_i = s_i(1)) = \frac{s_i(1)}{C_0}\), where \(C_0 = \frac{3}{2} \max_{i \in [n]} s_i(1)\). The potential outcomes are defined as:  
\begin{equation}
\label{pot_out_exp_map}
\begin{split}
\widetilde {Y}_i(z_i,{z}_{-i}) & = 0.8 X_i \cdot z_i+ 0.4 X_i \cdot 1\{s^{{z}}_i(1)>0\}+ \epsilon_i : = \tau_i \cdot z_i+ f(X_i ,\gamma(G^{z}_i))+ \epsilon_i \\
\end{split}    
\end{equation}
where $\epsilon_i\overset{iid}{\sim}N(0,1.5^2)$. The interference effect depends on both the covariate \(X_i\) and the network patterns. 
The potential outcomes can thus be represented by the following exposure mapping:
\begin{equation}
\label{instance_exp_mapping}
D_i=
\begin{cases}
d_{11} \ \ \ \mathrm{if} \  z_i \cdot 1\{ s_i^{z}(1) >0 \} >0    \\
d_{10} \ \ \ \mathrm{if} \  z_i \cdot 1\{ s_i^{z}(1) =0 \} >0 \\
d_{01} \ \ \ \mathrm{if} \  (1-z_i) \cdot 1\{ s_i^{z}(1) >0 \} >0\\
d_{00} \ \ \ \mathrm{if} \  (1-z_i) \cdot 1\{s_i^{z}(1) =0 \} >0,\\
\end{cases}
\end{equation}
and the ADTT can be expressed as:  
\[
ADTT = \frac{1}{\mbox{$\sum_{i=1}^n$} Z_i} \mbox{$\sum_{i=1}^n$} Z_i \big(Y_i(d_{11}) - Y_i(d_{01})\big) = \frac{1}{\mbox{$\sum_{i=1}^n$} Z_i} \mbox{$\sum_{i=1}^n$} Z_i \big(Y_i(d_{10}) - Y_i(d_{00})\big).
\]
Building on the idea of balancing covariates introduced in \cite{li2018balancing} and the exposure mapping above, we can write down the H\'{a}jek estimator for ADTT as:  
\begin{equation}
\begin{split}
\label{HJ_est}
\hat{\tau}^{HJ} & =  \beta \cdot \left[ \frac{\mbox{$\sum_{i=1}^n$} w_i(d_{11}) \cdot 1\{ D_i=d_{11} \} \cdot Y_i }{ \mbox{$\sum_{i=1}^n$} w_i(d_{11}) \cdot 1 \{D_i = d_{11} \} }   
- \frac{\mbox{$\sum_{i=1}^n$} w_i(d_{01}) \cdot 1\{ D_i=d_{01} \} \cdot Y_i }{ \mbox{$\sum_{i=1}^n$} w_i(d_{01}) \cdot 1\{ D_i=d_{01} \} } \right]   \\
& \quad + (1-\beta) \cdot \left[ \frac{\mbox{$\sum_{i=1}^n$} w_i(d_{10}) \cdot 1\{ D_i=d_{10} \} \cdot Y_i }{ \mbox{$\sum_{i=1}^n$} w_i(d_{10}) \cdot 1\{ D_i=d_{10} \} }   
- \frac{\mbox{$\sum_{i=1}^n$} w_i(d_{00}) \cdot 1\{ D_i=d_{00} \} \cdot Y_i }{ \mbox{$\sum_{i=1}^n$} w_i(d_{00}) \cdot 1\{ D_i=d_{00} \} } \right] \\
& := \beta \cdot \hat{\tau}^{HJ}(d_{11}, d_{01}) + (1-\beta) \cdot \hat{\tau}^{HJ}(d_{10}, d_{00}),
\end{split}
\end{equation}
where \(\beta = \frac{\mbox{$\sum_{i=1}^n$} \big(1\{D_i = d_{11}\} + 1\{D_i = d_{01}\}\big)}{n}\). The weights are defined as \(w_i(d_{11}) = \frac{\mathbb{P}(Z_i=1)}{\mathbb{P}(D_i = d_{11})}\), \(w_i(d_{01}) = \frac{\mathbb{P}(Z_i=1)}{\mathbb{P}(D_i = d_{01})}\), \(w_i(d_{10}) = \frac{\mathbb{P}(Z_i=1)}{\mathbb{P}(D_i = d_{10})}\), and \(w_i(d_{00}) = \frac{\mathbb{P}(Z_i=1)}{\mathbb{P}(D_i = d_{00})}\). Since our proposed estimators and $\hat{\tau}^{HJ}$ are estimating the same quantity, this allows for a direct comparison of the performance for these estimators. 
For all estimators we assume that the assignment mechanism is known.
The comparative performance of these estimators is summarized below. 
\begin{small}
	\begin{table}[H]
		\begin{center}
			\caption{Comparison of estimator performance under interference induced by the exposure mapping in equation \eqref{instance_exp_mapping} (ADTT = $4.009$)}
			\vspace{0.2cm}
			\label{tab_exp_5}
			\begin{tabular}{p{1.5cm}|
            p{1.2cm}|
            p{1.2cm}| p{1.2cm}|p{1.2cm}}
			\hline 
				& 
                & 
                & &   \\[-1.8ex]
                & bias & se & rmse$(\tau)$ & rmse$(\tau_{ADTE})$ \\[1ex]
				\hline
                $\hat{\tau}^{OR}$ 
                & $-0.001$ & $0.082$ & $0.066$  
                & 0.082\\ 
                $\hat{\tau}^{AIPW}$
                & $0.000$ & $0.081$ & $0.065$ 
                & 0.081\\
                 $\hat{\tau}^{HJ}$  
                 & $-0.001$ & $0.120$ & $0.101$ 
                 & 0.120\\ 
                $\hat{\tau}^{IPW}$ 
                & $-0.036$ & $0.091$ & $0.086$ 
                & 0.098\\ 
				$\hat{\tau}^{NV}$ 
                &  $0.470$ & $0.109$ & $0.477$ 
                & 0.483\\ 
				\hline
			\end{tabular}
		\end{center}
	\end{table}
\end{small}
Given that the exposure mapping is known, $\hat{\tau}^{HJ}$ is anticipated to perform well. Based on Table \ref{tab_exp_5}, our proposed estimators achieve comparable bias while exhibiting lower standard errors compared to \(\hat{\tau}^{HJ}\).

\subsection{Additional figures for the data analysis}\label{sec:additional_figs_analysis}
This section provides two figures of context trees for $\lambda = 0.4$ (Figure~\ref{Fig:real_data_pattern_0_1_2_lambda_0_4_est}) and $\lambda = 0.7$ (Figure~\ref{Fig:real_data_pattern_0_1_2_lambda_0_7_est}) based on the real world data described and analyzed in Section~\ref{sec_sin_out_pattern_0_1_2} of the main text.

\begin{minipage}{0.45\textwidth}
\begin{small}
\begin{figure}[H]
\resizebox{.7\textwidth}{!}{
       \begin{tikzpicture}
    \node[align=left] at (5,7) {$0$};
    \draw[thick] (5,7) circle (0.4cm);
    \node[align=left] at (5,6.4) {$0.044$};
    \node[align=left] at (5,6.0) {$(1670)$};
    
    \draw[thick] (2,5.5) -- (5,5.8); 
    \draw[thick] (5,5.5) -- (5,5.8);
    \draw[thick] (8,5.5) -- (5,5.8); 

    \node[align=left] at (2,5.1) {$0$};
    \node[align=left] at (5,5.1) {$1$};
    \node[align=left] at (8,5.1) {$\geq 2$};
    
    \draw[thick] (2,5.1) circle (0.4cm);
    \draw[thick] (5,5.1) circle (0.4cm);
    \draw[thick] (8,5.1) circle (0.4cm);

    \node[align=left] at (2,4.5) {$0.009$};
    \node[align=left] at (5,4.5) {$0.153$};
    \node[align=left] at (8,4.5) {$0.186$};
    
    \node[align=left] at (2,4.1) {$(1296)$};
    \node[align=left] at (5,4.1) {$(261)$};
    \node[align=left] at (8,4.1) {$(113)$};


    \node[align=left] at (2,3.1) {$1$};
    \node[align=left] at (2,2.5) {$0.0741$};
    \node[align=left] at (2,2.1) {$(54)$};
    \draw[thick] (2,3.1) circle (0.4cm);
    \draw[thick] (2,3.5) -- (2,3.9); 
    
    \node[align=left] at (3.5,3.1) {$0$};
    \node[align=left] at (5,3.1) {$1$};
    \node[align=left] at (6.5,3.1) {$\geq 2$};
    
    \node[align=left] at (3.5,2.5) {$0.148$};
    \node[align=left] at (3.5,2.1) {$(128)$};
    \node[align=left] at (5,2.5) {$0.134$};
    \node[align=left] at (5,2.1) {$(82)$};
    \node[align=left] at (6.5,2.5) {$0.196$};
    \node[align=left] at (6.5,2.1) {$(51)$};

    \draw[thick] (3.5,3.1) circle (0.4cm);
    \draw[thick] (5,3.1) circle (0.4cm);
    \draw[thick] (6.5,3.1) circle (0.4cm);

    \draw[thick] (3.5,3.5) -- (5,3.9); 
    \draw[thick] (5,3.5) -- (5,3.9); 
    \draw[thick] (6.5,3.5) -- (5,3.9); 

    \node[align=left] at (8,3.1) {$0$};
    \node[align=left] at (9.5,3.1) {$1$};
    \node[align=left] at (11,3.1) {$\geq 2$};
    
    \node[align=left] at (8,2.5) {$0.147$};
    \node[align=left] at (8,2.1) {$(34)$};
    \node[align=left] at (9.5,2.5) {$0.111$};
    \node[align=left] at (9.5,2.1) {$(27)$};
    \node[align=left] at (11,2.5) {$0.250$};
    \node[align=left] at (11,2.1) {$(52)$};

    \draw[thick] (8,3.1) circle (0.4cm);
    \draw[thick] (9.5,3.1) circle (0.4cm);
    \draw[thick] (11,3.1) circle (0.4cm);

    \draw[thick] (8,3.5) -- (8,3.9); 
    \draw[thick] (9.5,3.5) -- (8,3.9);  
    \draw[thick] (11,3.5) -- (8,3.9); 


    \node[align=left,scale=0.8] at (2.5,1.2) {$0$};
    \node[align=left,scale=0.8] at (3.5,1.2) {$1$};
    \node[align=left,scale=0.8] at (5,1.2) {$1$};
    \node[align=left,scale=0.8] at (6.5,1.2) {$0$};
    \node[align=left,scale=0.8] at (7.5,1.2) {$1$};
    \node[align=left,scale=0.8] at (8.5,1.2) {$\geq 2$};
    
    \node[align=left,scale=0.8] at (2.5,0.7) {$0.140$};
    \node[align=left,scale=0.8] at (2.5,0.4) {$(100)$};
    \node[align=left,scale=0.8] at (3.5,0.7) {$0.100$};
    \node[align=left,scale=0.8] at (3.5,0.4) {$(20)$};
    \node[align=left,scale=0.8] at (5,0.7) {$0.091$};
    \node[align=left,scale=0.8] at (5,0.4) {$(22)$};
    \node[align=left,scale=0.8] at (6.5,0.7) {$0$};
    \node[align=left,scale=0.8] at (6.5,0.4) {$(9)$};
    \node[align=left,scale=0.8] at (7.5,0.7) {$0.333$};
    \node[align=left,scale=0.8] at (7.5,0.4) {$(15)$};
    \node[align=left,scale=0.8] at (8.5,0.7) {$0.185$};
    \node[align=left,scale=0.8] at (8.5,0.4) {$(27)$};

    \draw[thick] (2.5,1.2) circle (0.3cm);
    \draw[thick] (3.5,1.2) circle (0.3cm);
    \draw[thick] (5,1.2) circle (0.3cm);
    \draw[thick] (6.5,1.2) circle (0.3cm);
    \draw[thick] (7.5,1.2) circle (0.3cm);
    \draw[thick] (8.5,1.2) circle (0.3cm);

    \draw[thick] (2.5,1.5) -- (3.5, 1.9); 
    \draw[thick] (3.5,1.5) -- (3.5,1.9); 
    \draw[thick] (5,1.5) -- (5,1.9);
    \draw[thick] (6.5,1.5) -- (6.5,1.9);
    \draw[thick] (7.5,1.5) -- (6.5,1.9);
    \draw[thick] (8.5,1.5) -- (6.5,1.9);
    
    \node[align=left,scale=0.8] at (8.5,-0.4) {$1$};
    \node[align=left,scale=0.8] at (9.5,-0.4) {$\geq 2$};

    \node[align=left,scale=0.8] at (8.5,-1.2) {$(6)$};
    \node[align=left,scale=0.8] at (8.5,-0.9) {$0$};
    \node[align=left,scale=0.8] at (9.5,-0.9) {$0.2778$};
    \node[align=left,scale=0.8] at (9.5,-1.2) {$(18)$};
    
    \draw[thick] (8.5,-0.4) circle (0.3cm);
    \draw[thick] (9.5,-0.4) circle (0.3cm);
    
    \draw[thick] (8.5,-0.1) -- (8.5,0.2); 
    \draw[thick] (9.5,-0.1) -- (8.5,0.2); 
\end{tikzpicture}}
        \caption{Structure of \(\widehat{\mathcal{K}}\), interference estimators, and control unit counts at tree nodes generated by algorithm $2$ (\(\lambda = 0.4\), node size \(\geq 5\)) 
        }
\label{Fig:real_data_pattern_0_1_2_lambda_0_4_est}
\end{figure}
\end{small}
\end{minipage}
\begin{minipage}{0.45\textwidth}
\begin{small}
\begin{figure}[H]
\resizebox{.7\textwidth}{!}{       
\begin{tikzpicture}
    \node[align=left] at (5,7) {$0$};
    \draw[thick] (5,7) circle (0.4cm);
    \node[align=left] at (5,6.4) {$0.044$};
    \node[align=left] at (5,6.0) {$(1670)$};
    
    \draw[thick] (2,5.4) -- (5,5.8); 
    \draw[thick] (5,5.4) -- (5,5.8);
    \draw[thick] (8,5.4) -- (5,5.8); 

    \node[align=left] at (2,5) {$0$};
    \node[align=left] at (5,5) {$1$};
    \node[align=left] at (8,5) {$\geq 2$};
    
    \draw[thick] (2,5) circle (0.4cm);
    \draw[thick] (5,5) circle (0.4cm);
    \draw[thick] (8,5) circle (0.4cm);

    \node[align=left] at (2,4.4) {$0.009$};
    \node[align=left] at (2,4.0) {$(1296)$};
    \node[align=left] at (5,4.4) {$0.153$};
    \node[align=left] at (5,4.0) {$(261)$};
    \node[align=left] at (8,4.4) {$0.186$};
    \node[align=left] at (8,4.0) {$(113)$};

    \node[align=left] at (3.5,3) {$0$};
    \node[align=left] at (5,3) {$1$};
    \node[align=left] at (6.5,3) {$\geq 2$};
    
    \node[align=left] at (3.5,2.4) {$0.148$};
    \node[align=left] at (3.5,2) {$(128)$};
    \node[align=left] at (5,2.4) {$0.134$};
    \node[align=left] at (5,2) {$(82)$};
    \node[align=left] at (6.5,2.4) {$0.196$};
    \node[align=left] at (6.5,2) {$(51)$};

    \draw[thick] (3.5,3) circle (0.4cm);
    \draw[thick] (5,3) circle (0.4cm);
    \draw[thick] (6.5,3) circle (0.4cm);

    \draw[thick] (3.5,3.4) -- (5,3.8); 
    \draw[thick] (5,3.4) -- (5,3.8); 
    \draw[thick] (6.5,3.4) -- (5,3.8); 




    \node[align=left] at (8,3) {$0$};
    \node[align=left] at (9.5,3) {$1$};
    
    \node[align=left] at (8,2.4) {$0.147$};
    \node[align=left] at (8,2) {$(34)$};
    \node[align=left] at (9.5,2.4) {$0.111$};
    \node[align=left] at (9.5,2) {$(27)$};

    \draw[thick] (8,3) circle (0.4cm);
    \draw[thick] (9.5,3) circle (0.4cm);

    \draw[thick] (8,3.4) -- (8,3.8); 
    \draw[thick] (9.5,3.4) -- (8,3.8);  


    \node[align=left] at (2.5,1) {$1$};
    
    \node[align=left] at (2.5,0.3) {$0.100$};
    \node[align=left] at (2.5,-0.1) {$(20)$};

    \draw[thick] (2.5,1) circle (0.4cm);

    \draw[thick] (2.5,1.4) -- (3.5,1.8);

\end{tikzpicture}}
        \caption{Structure of \(\widehat{\mathcal{K}}\), interference estimators, and control unit counts at tree nodes generated by algorithm $2$ (\(\lambda = 0.7\), node size \(\geq 5\)) 
        }
\label{Fig:real_data_pattern_0_1_2_lambda_0_7_est}
\end{figure}
\end{small}
\end{minipage}

\end{appendix}

\begin{acks}[Acknowledgments]
We thank the participants from the AEA meeting 2024, 2024 Conference on Network Science and Economics, 2024 GraphEx Workshop, Design and Analysis of Networked Experiments 2024, Causal Inference and Prediction for Network Data Workshop at BIRS, Optimization-Conscious Econometrics Conference II, Triangle Econometrics Conference, and seminar participants from MIT,  Northwestern University, Columbia, Purdue, University of Pittsburg, and University of Washington. We are very thankful for detailed comments and suggestions from Dmitry Arkhangelsky and Michael Leung. 
\end{acks}

\begin{funding}
Professor Volfovsky was partially supported by NSF DMS-2230074 and DMS-2046880.
\end{funding}



\bibliographystyle{agsm}
\bibliography{biblio}       

\end{document}